\newtheorem{theorem}{Theorem}
\newtheorem{proposition}[theorem]{Proposition}
\newtheorem{definition}{Definition}
\definecolor{orange}{RGB}{255,127,0}
\definecolor{brown}{RGB}{150,70,0}
\definecolor{green}{RGB}{127,255,127}
\definecolor{darkgreen}{RGB}{0,127,0}
\definecolor{blue}{RGB}{127,127,255}
\definecolor{lightblue}{RGB}{150,150,255}
\definecolor{darkblue}{RGB}{0,0,127}
\definecolor{red}{RGB}{255,90,90}
\definecolor{grey}{RGB}{127,127,127}
\definecolor{pink}{RGB}{255,180,180}
\newcommand{\xaxis}{$x$-axis\xspace}
\newcommand{\yaxis}{$y$-axis\xspace}
\newcommand{\Haction}{{h}}                
\newcommand{\Hpolicy}{{\hbar}}            
\newcommand{\Rmax}{{R_{\text{max}}}}
\newcommand{\Rmin}{{R_{\text{min}}}}
\newcommand{\Rmean}{{R_{\text{mean}}}}
\newcommand{\argmin}{{\text{argmin}}}
\newcommand{\qpos}{{q_{\text{pos}}}}
\newcommand{\qneg}{{q_{\text{neg}}}}
\newcommand{\Npos}{{N_{\text{pos}}}}
\newcommand{\Nneg}{{N_{\text{neg}}}}
\newcommand{\pipos}{{{\pi}^*_{\text{pos}}}}
\newcommand{\pineg}{{{\pi}^*_{\text{neg}}}}
\newcommand{\Dpos}{{D_{\text{pos}}}}
\newcommand{\Dneg}{{D_{\text{neg}}}}
\newcommand{\Repos}{{\Re_{\text{pos}}}}
\newcommand{\Reneg}{{\Re_{\text{neg}}}}
\newcommand{\wall}{{w_{\text{all}}}}
\newcommand{\kmax}{{k_{\text{max}}}}
\newcommand{\kmin}{{k_{\text{min}}}}
\long\def\comment#1{}
\begin{document}

\begin{titlepage}
\begin{center}
\vspace*{1cm}
\newcommand{\HRule}{\rule{\linewidth}{0.5mm}}
\HRule \\[0.7cm]
{ \huge \bfseries Complexity distribution of agent policies}\\[0.4cm]
\HRule \\[1.2cm]
\begin{minipage}{0.75\textwidth}
\begin{flushleft} 
Jos\'e Hern\'andez-Orallo\hfill(jorallo@dsic.upv.es)\\
Departament de Sistemes Inform\`atics i Computaci\'o\\
Universitat Polit\`ecnica de Val\`encia, Spain\\[12pt]
\end{flushleft}
\vspace{2cm}
\begin{center}
January 11, 2013
\end{center}
\end{minipage}

\vfill
\end{center}

\begin{abstract}
We analyse the complexity of environments according to the policies that need to be used to achieve high performance. The performance results for a population of policies leads to a distribution that is examined in terms of policy complexity and analysed through several diagrams and indicators. The notion of environment response curve is also introduced, by inverting the performance results into an ability scale. We apply all these concepts, diagrams and indicators to a minimalistic environment class, agent-populated elementary cellular automata, showing how the difficulty, discriminating power and ranges (previous to normalisation) may vary for several environments.

\vspace{0.3cm}
{\bf Keywords}: Algorithmic information theory, reinforcement learning, environment difficulty, discriminating power, agent policy, task difficulty, psychometrics, elementary cellular automata. 


\end{abstract}

\end{titlepage}
\newpage
\tableofcontents
\newpage

\section{Introduction}

Many scientific disciplines must evaluate natural and artificial systems as a daily basis, such as psychometrics, animal cognition and artificial intelligence. A fundamental issue in these disciplines is then to determine the difficulty and the discriminating power of a task or problem instance. The capabilities of a system are given by the kinds and difficulty of the tasks it can solve. In this sense, tasks become discriminating if they can be solved by some agents but not by others. As a result, it is of utmost importance to precisely determine not only the types of tasks that are used but also their difficulty and their discriminating power.
Informally, difficulty is usually associated with the expected result for a set of individuals while discriminative power is usually associated with the variability of the results for a set of individuals.

Psychometrics has dealt with the analysis of task difficulty and discriminating power. Item Response Theory (IRT) \cite{embretson2000item}, for instance, is a well-founded approach in psychometrics where the difficulty (in terms of response curves) for each item is used to construct more effective tests, in order to derive scores and to obtain reliability measures. 
The ``difficulty'' of an item or task is generally (but not always) calculated with the results of previous tests on the same population, and not as a result of an underlying theory of the intrinsic problem difficulty of the task at hand. In artificial intelligence, the emphasis has not usually been put on evaluation, but on the development of intelligent systems. Nonetheless, some domain-specific tests and benchmarks have usually been designed by including problem instances of various difficulty levels.
In the end, the use of tasks of previously known difficulty and discriminating power is crucial to make testing effective, and get an accurate result with a minimum amount of tasks and time.

In more general terms, the concepts of complexity and difficulty appear in almost every field of science: physics, biology, psychology, (algorithmic) information theory, complex system theory, evolutionary computation, cryptography and many other fields. The analyses and measures are so diverse that any comprehensive account is far beyond of any survey, although we will give some pointers in the next section. The reason of such a diversity is due to the very different elements whose complexity we may be interested in (physical phenomena, life forms, bit strings, algorithms, reasoning processes, etc.) and also the different purposes (understanding phenomena, developing new methods or just measuring individuals). In fact, it is when we talk about agents (or any other kind of cognitive system) when we use the word `difficulty' instead of `complexity', as suggesting that `difficulty' is a relative (cognitive) issue. In other words, complexity is usually associated with the problem (statement and solution) and difficulty is usually associated with the way or method to go from statement to solution. 
It is thus not strange that a very complex task (in terms of how statement and solution are expressed) may have an easy solution. Conversely, some sets of very simple rules (e.g., games, automata or mathematical conjectures) develop into (or emerge to) extremely intricate phenomena and solutions.

In this paper, we focus on the analysis of difficulty (and discriminating power) for a particular (but ultimately general\footnote{Many other problems can be formulated using restricted environment classes (see, e.g., the hierarchy of environments in \cite[ch. 3]{Legg08}).}) setting. We concentrate on an interactive scenario, where an agent interacts with an environment through actions and observations. Instead of considering tasks with a particular episodic goal, we consider (possibly infinite) tasks where performance is evaluated in terms of rewards. This is in line with many approaches for cognitive system evaluation in psychology, animal cognition and artificial intelligence, most especially in the reinforcement learning setting. In the context of agent-environment interaction, the task is given by the environment and the good `solutions' are given by sequences of actions that maximise rewards. These actions are the result of a policy, which determines a behaviour in general. The central idea of this paper is to put the emphasis on behaviours, rather than actions. We highlight the issue that the aggregated rewards for the possibly infinite set of policies may be distributed in many different ways. As we will see, the study of these distributions plays a crucial role in the estimation of task difficulty. Another distinctive feature of our approach is that we will assess environment difficulty using algorithmic information theory \cite{Li-Vitanyi08} (by estimating the Kolmogorov complexity of each policy) in order to (1) evaluate the information content of the policy, (2) make the complexity measure more independent of the representation language, due to the invariance theorem\footnote{The Kolmogorov complexity of the same object using two different description mechanism is the same, up to a constant that is independent of the object \cite{Li-Vitanyi08}.}, and (3) consider all (or many of) the alternative (but equivalent) expressions of the same policy, so making the estimation more robust, as with Solomonoff's algorithmic probability (this is exploited by the coding theorem method \cite{delahaye2011numerical,zenilPhD}). 

So, the main idea of this paper is that we see  this difficulty from the standpoint of the policies, i.e., the agents, by calculating their complexity. We do not estimate the complexity of the environment. We do not estimate the complexity of the solution either (the sequence of actions). Instead, we estimate the complexity of the description of the policies, using a policy description language (an agent language) and compressing each program. As a result, we evaluate the difficulty of an environment by observing the distribution of policies in terms of their Kolmogorov complexity and their aggregated reward. The most straightforward analysis of the distribution is just looking at the frequency of simple policies achieving good results. In other words, a starting point can be the following `maxim': ``{\em If the environment has many simple policies achieving good results then it is easy. Otherwise, it is difficult}''. From this starting point, we will also derive other more robust indicators, functions and graphical tools from this distribution, in order to say when an environment is more or less difficult. Also, 
we will also investigate whether the environment is discriminating, not only in the beginning but after a random walk or after other policies.

The rest of this paper is distributed as follows. Section \ref{sec:background} shortly reviews some of the many different ways of measuring complexity appeared in the literature, distinguishing those that look at the problem (statement or solution), at the search space or at the population of solvers. Also, several notions of difficulty and discriminating power are introduced.
     Section \ref{sec:complexity} sets the focus on environments and gives definitions of their Kolmogorov complexity, the complexity of optimal action sequences and the complexity of their policies, and some connections between them.
 		Section \ref{sec:distribution} analyses the whole distribution of policy complexity and aggregated reward, using graphical tools, summary indicators and adapting the notion of response function, which finally leads to some general indicators of complexity and discriminating power.
    Section \ref{sec:experiments} introduces minimalistic environments based on {\em agent-populated elementary cellular automata} and a very simple agent policy language. Using them, we show the distribution of rewards according to policy complexity, calculate their indicators and plot their response functions.
		Section \ref{sec:discussion} discusses how this approach relates to other kinds of complexity, analyses the applicability and limitations of the approach and explores avenues for future work.


\section{Where to look at?}\label{sec:background}


There is a plethora of terms around the notion of `problem difficulty', such as task difficulty, problem hardness or simply complexity. We will shortly review some of them. Instead of arranging them according to their origin: psychometrics, computer science, complexity theory, biology, etc., we will group the approaches in the literature according to what they look at in order to establish parameters such as difficulty or discriminating power.

\subsection{Looking at the problem}

The first distinction that we need to do is between problem {\em class} difficulty and problem {\em instance} difficulty. 
In computational complexity theory \cite{du2011theory}, the focus is usually put on class complexity, e.g., the behaviour of an algorithm for all possible inputs, in terms of the time (or space) that an algorithm takes to solve a problem or to decide whether an object belongs to a set. Nonetheless, the notion of instance complexity has also been considered, related to the concept of average-case computational complexity,
developed by Leonid Levin in the 1980s \cite{levin1986average}, which is of course related to the more general notion
of average-case performance of algorithms (see, e.g., \cite{knuth1973sorting}). However, the point of view is the time (or space) that a {\em given algorithm} requires to solve a problem instance, such as quicksort being faster for an almost sorted instance than a randomly sorted one, not about the {\em intrinsic} complexity of the problem instance.

Here we are interested in instance {\em difficulty}, from the point of a view of cognitive agent evaluation (running whatever algorithm or policy). This stance has been taken by psychology and other cognitive sciences, where there is an old history of approaches using the concept of {\em working memory} or the number of elements that must be considered at the same time in order to solve a problem \cite{miller1956magical,Ashcraft1992,hardman1995problem,barch1997dissociating}. In this line, and closely related to the emerging field of artificial intelligence, Simon and Kotovsky explored, for decades, ``the problem space of difficulty'' \cite{SimonKotovsky1963,KotovskySimon1990}.
However, most of these views of difficulty have been anthropocentric. The use of these measures for non-human subjects (especially for machines) may be misleading since it has been shown that many tasks that are very difficult for humans are very easy for machines and vice versa \cite{IQnotformachines}. 

A more mathematical (and computational) approach for associating complexity with the number (or the size) of items that are necessary to explain a concept (or solve a problem) is now known as algorithmic information theory (also known as Kolmogorov complexity \cite{Li-Vitanyi08}). As a result, in the past forty decades, the difficulty of solving a particular task has been occasionally related to its Kolmogorov complexity. This relation has been especially advocated for in inductive inference, 
because it makes sense to look at the length of the shortest pattern explaining the evidence, and this length seems to determine the difficulty of the problem. However, the relation is, at most, unidirectional\footnote{Some repetitive, but long, patterns (hence having high Kolmogorov complexity) may just require memory to see and store the repetition.}, and not always intuitive\footnote{True random strings are incompressible, so leading to no `solution' at all, either easy or difficult.}. 
Several alternatives, such as Levin's $Kt$ \cite{Levin73}, logical depth \cite{Chaitin1977}, effective complexity \cite{Ay-etal08}, computational depth \cite{Antunes2006}, sophistication \cite{chedid2010sophistication}, and others (see \cite[chap. 7]{Li-Vitanyi08}), have been proposed instead, where not only the solution of the problem (the pattern) is analysed but also how difficult is to extract that solution from the evidence. These proposals are generally conceived for evaluating objects but some of them can also be used for quantifying difficulty in sequential inductive problems. For instance, a variant of $Kt$, known as intensional complexity, accurately captured the difficulty humans found on IQ test series problems \cite{HernandezOrallo-MinayaCollado98,HernandezOrallo00a}. 
Outside induction, the idea of instance complexity \cite{orponen1994instance}\cite[sec. 7.4]{Li-Vitanyi08} has also been developed with the use of algorithmic information theory, in front of the classical 
view of problem class complexity mentioned above. For other kinds of problems, such as general inductive and deductive problems, \cite{thesis99,HernandezOrallo00d,HernandezOrallo00b} also explored the use of algorithmic information theory, by considering the information gain from the problem to the solution.

The above approaches have mostly focussed on static or (at most) sequential problems. 
When we move to the evaluation of natural or artificial agents, we need to address the estimation of the difficulty of {\em interactive} tasks, or {\em environments}, where subsequent input is affected by the agent's actions. Here, we usually consider other elements as well, such as a score or a reward. In reinforcement learning, for instance, the difficulty of a problem depends on the goal and how rewards are assigned (according or not to this goal).
Also, rewards are rarely {\em normalised}, and the aggregation of results for several environments could be biassed in favour of those environments with higher reward magnitudes.
Another problem of an interactive setting is that each action leads to a different (sub)environment, so the results may be very sensitive to an early wrong action. This is why some works have advocated for ergodic environments \cite{Legg08}, where the agent can always recover from a local `hell' or `heaven'. Yet another issue is that estimating the reward of a policy may take many steps in the environment, so the mere checking of a solution takes a lot of time. Also, the choice of the aggregate reward function is crucial, especially if time is taken into account \cite{HernandezOrallo10c}. Finally, the evaluated agent can evolve during the process and become a different agent \cite{hernandez2013potential}.

Despite all these problems, many works on reinforcement learning, agents and robotics have considered specific ways of approximating difficulty.
Typically, the mere `size' of the problem (e.g., maze size, \cite{Zatuchna-Bagnall09}) or the state space \cite{madden2004transfer}, have been used as approximations of the `difficulty' of an environment. However, these approaches are usually domain-specific and are prone to a series of problem because difficulty is just handled in an informal or ad-hoc way.

Alternatively, the application of Kolmogorov complexity to estimate the difficulty of an environment has also been explored, 
but becomes much more cumbersome. In \cite{HernandezOralloDowe2010}, another variant of Kolmogorov complexity ($Kt^{\max}$) was suggested as a measure of complexity for the environments, but it was already stated that this approximation was unidirectional, since some very complex environments might be easy, i.e., high rewards could be obtained by very simple policies (\cite[sec. 4.1]{HernandezOralloDowe2010}). Also, the problem of whether some environments are more {\em discriminating} than others is also discussed there, and some notions (such as reward-sensitiveness) were introduced \cite[sec. 4.2]{HernandezOralloDowe2010}. Following these ideas, an environment class was introduced in \cite{HernandezOrallo10b}, where the difficulty of an environment can be approximated by the Kolmogorov complexity of the rules (as a Markov algorithm) that describe the environment. The class is used to create intelligence tests in \cite{AGI2011Evaluating}, but the approximation of difficulty is not satisfactory. In fact, some other related works also mention the importance of being able to assess the difficulty of the environment, such \cite{AGI2011Compression} and to achieve discriminating power \cite{HernandezOrallo09aTR}.

Even more challenging, though, is when we consider other agents in the environment. In games (and multi-agent systems in general), the difficulty of a problem depends on the rules, but most especially on the opponents. For general multi-agent systems and multi-agent reinforcement learning \cite{busoniu2008comprehensive} in particular, the difficulty of the system depends on the opponents and cooperators \cite{DBLP:conf/agi/Insa-CabreraBH12}, and their intelligence \cite{AGI2011DarwinWallace,Turing100}.

A more minimalistic approach, starting with very basic components is the study of the emergence of complexity (and other properties) in cellular automata and other artificial life universes (see, e.g., \cite{wolfram2002new}). Algorithmic information theory has also been applied here; an interesting approach is the calculation of the complexity of the patterns that an elementary (1-dimensional) cellular automaton generates \cite{zenil2012two}. However, the purpose of Zenil et al. is not to measure the difficulty for an agent (not even a `glider'), but whether the 2-dimensional patterns have low or high complexity. In fact, it is very unlikely (or only likely after billions of generations) that in any of these settings or any other artificial life universe, we find `agents' whose cognitive abilities may be interesting to evaluate. 


\subsection{Looking at the search space}

Instead of looking at the complexity of the environment, a natural way to look at the difficulty of a problem relies on analysing the search space. For instance, in a maze, the search space is the set of trajectories that can be performed in the maze. Each trajectory is a sequence of actions. Analysing how likely the solutions are in the space of trajectories may work because, in this particular case, the environment is static (the maze does not change after the agent's actions). However, things are different for other kinds of environment, where the environment really reacts to the agent's actions. In general, the same ideas and principles used in the areas of optimisation and heuristics apply here. For instance, it is said that a problem is difficult if there is a ``high density of well-separated almost solutions (local minima)'' \cite{cheeseman1991really} or we have a ``rugged solution space'' \cite{hoos1999sat}.

Evolutionary computation is an area where the search space and its relation to difficulty has been studied more exhaustively.
While fitness and reward functions could be considered parallel, most approaches consider {\em a} solution as achieving the problem goal, while we have advocated above for a case where have an aggregated reward, not a final goal. 
Nonetheless, the first big difference with a reinforcement learning case is that in evolutionary computation we consider the notion of operator, which converts (mutates) one solution into a different solution. The solutions are the nodes of the search space and the operators are edges. This makes up the `landscape' in evolutionary approaches, but it is not clear how to adapt this to other settings.
Another difference between environments and evolutionary problems is that for many approaches the size of the individual codes (genotype) is constant. Only when the genotype size is variable (e.g., evolutionary programming), the size of the solution is a factor of study.
Taking these (and other) differences) into account, it is still useful to learn from the experience in this field. \cite{he2007note} includes an account of approaches for problem difficulty measures in evolutionary computation, appeared in the past twenty years.
There are several interesting concepts such as ``rugged fitness landscape'' (problems where the solution space is full of local minima and maxima), 
the `needle in a haystack'' metaphor, 
the existence of one (or a few) way to the solution, 
{\em deceptive} ways (promising ways finally leading nowhere),
the appearance of ``epistatic interactions'' between parts of the solution (gene co-influence),
the notions of building blocks, and many others. 

\subsection{Looking at the population of solvers}

In psychometrics, items are usually classified into several difficulty categories, and a variety of items of different difficulty are used in order to cover a wide range of the ability that is to be measured. Item response theory (IRT) \cite{embretson2000item} is a paradigm for the study of items (tasks) and a well-grounded way of designing tests and other instruments that measure abilities, especially in the area of (computerised) adaptive testing. 
IRT is based on mathematical functions and associated probability and informativeness estimations for each item, according to several models. One very common model for discrete-score problems is the three-parameter logistic model, where the item response function (or curve) corresponds to the probability that an agent with ability $\theta$ gives a correct response to an item. This model is characterised as follows:
\[p(\theta) \triangleq c + \frac{1-c}{1+e^{-a(\theta-b)}} \]
\noindent where $a$ is the {\em discrimination} (the maximum slope of the curve), $b$ is the {\em difficulty} or item location (the value of $\theta$ leading to a probability half-way between $c$ and 1), and $c$ is the chance or asymptotic minimum (the value that is obtained by {\em random} guess, i.e., $(1+c)/2$. as in multiple choice items). 
The zero-ability expected result is given when $\theta=0$, which is exactly $z = c + \frac{1-c}{1+e^{ab}}$. 
Figure \ref{fig:irc} (left) shows an example of a logistic item response curve. 
For continuous score items, a very frequent approach is the linear model \cite{mellenbergh1994,ferrando2009difficulty}:
\[X(\theta) \triangleq z + \lambda \theta + \epsilon \]
\noindent where $z$ is the intercept (zero-ability expected result), $\lambda$ is the loading or slope, and $\epsilon$ is the measurement error. Again, the slope $\lambda$ is positively related to most measures of discriminating power \cite{ferrando2012discriminating}.
Figure \ref{fig:irc} (right) shows and example of a linear item response curve.
Note that for continuous score items, if they are bounded, the logistic model may be more appropriate.

\begin{figure}
	\centering
		\vspace{-1.2cm}
		\hspace{-0.5cm} 
		\includegraphics[width=0.50\textwidth]{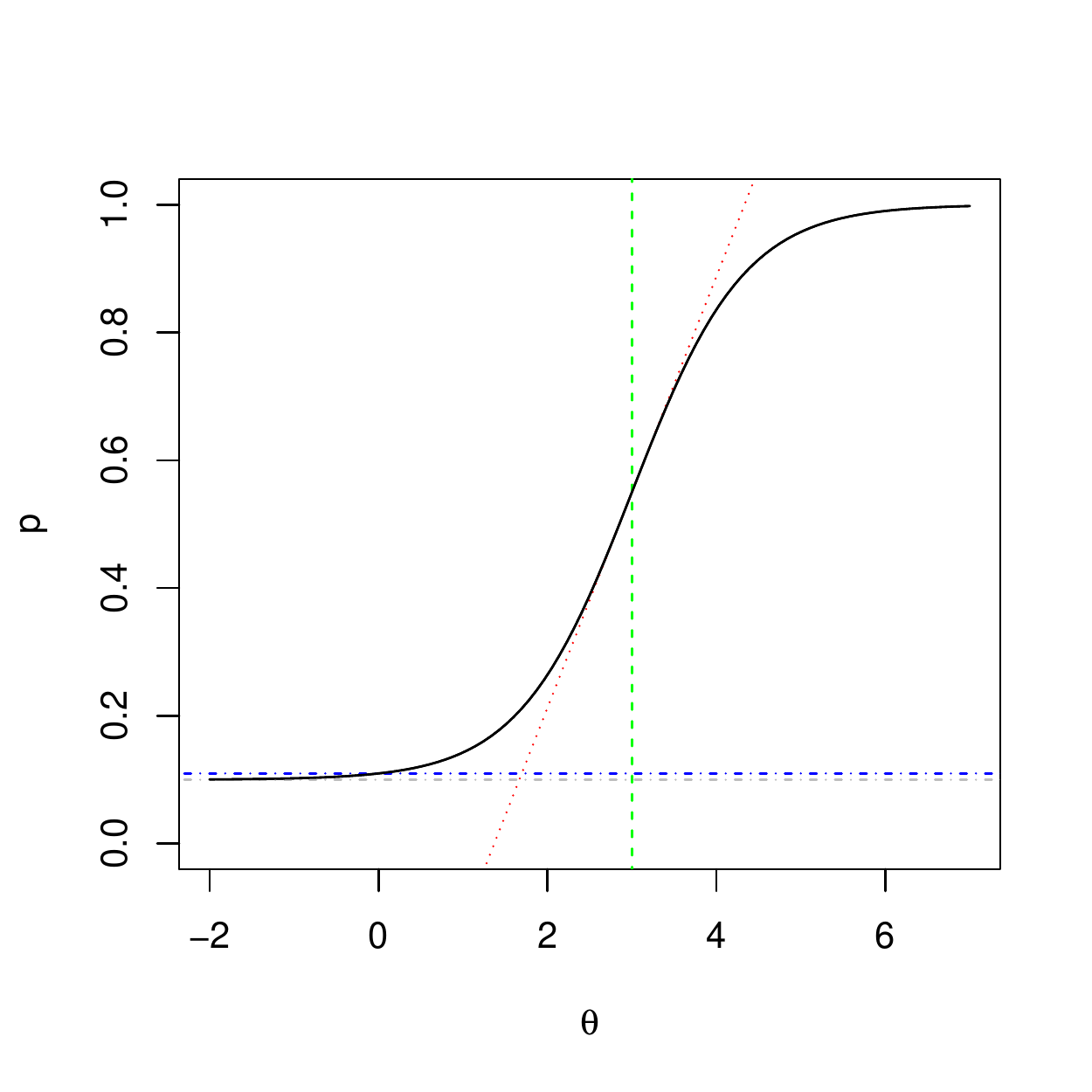} \hspace{-0.3cm}
		\includegraphics[width=0.50\textwidth]{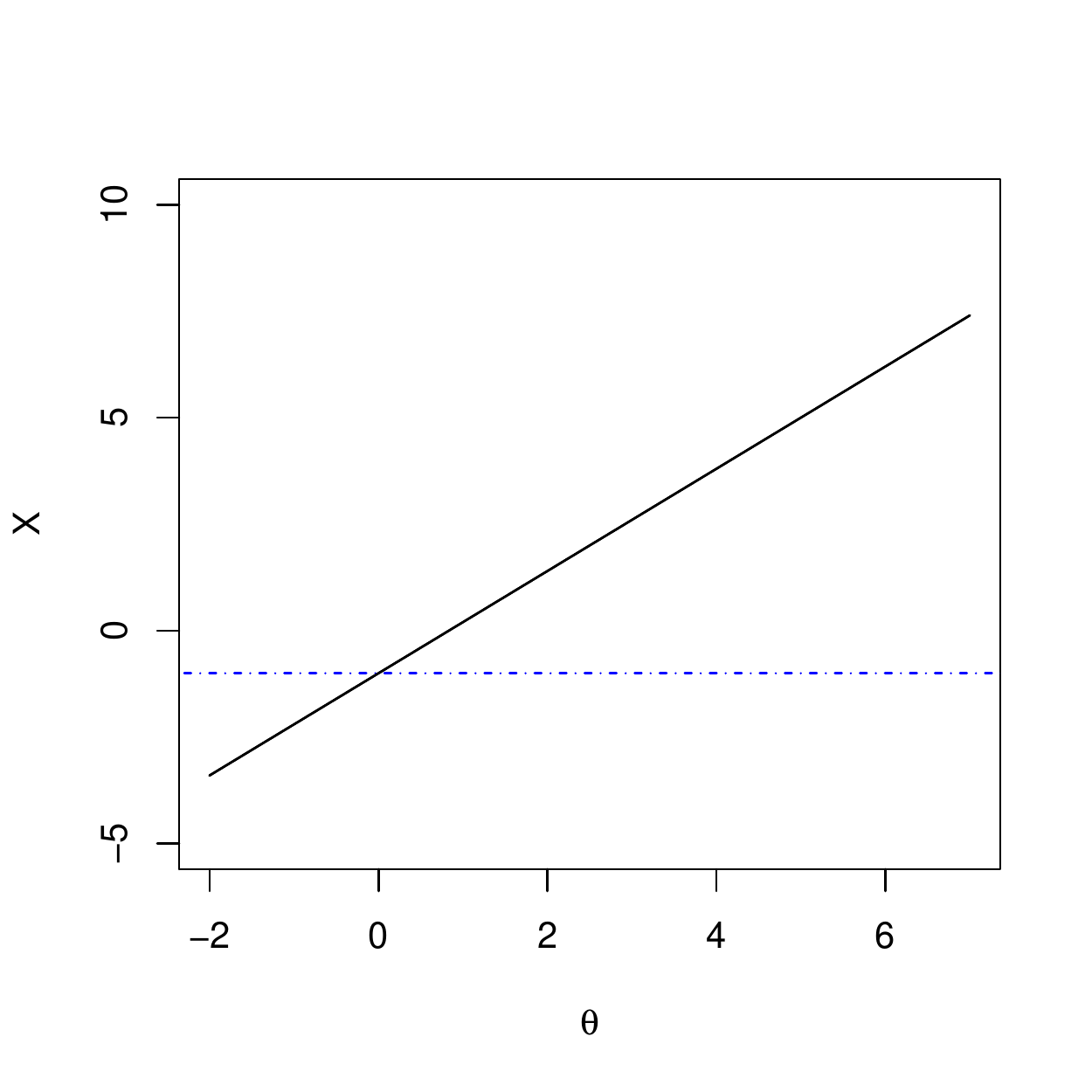} \hspace{-0.7cm}
		\vspace{-0.5cm}
	\caption{Left: item response function (or curve) for a binary score item with the following parameters for the logistic model: discrimination $a=1.5$, item location $b=3$, and chance $c=0.1$. The discrimination is shown by the slope of the curve at the midpoint: $a(1-c)/4$ (in dotted red), the location is given by $b$ (in dashed green) and the chance is given by the horizontal line at $c$ (in dashed-dotted grey), which is very close to the zero-ability expected result $p(\theta)=z$ (here at 0.11). Right: A linear model for a continuous score item with parameter $z=-1$ and $\lambda=1.2$. The dashed-dotted line shows the zero-ability expected result.}
	\label{fig:irc}
\end{figure}

Working with item response models is very useful for the design of tests, because if we have a collection of items, we can choose the most suited one for the subject (or population) we want to evaluate. According to the results the subject has obtained on previous items, we may choose more difficult items if the subject has succeeded on the easy ones,  we may look for those items that are more discriminative in the area we have doubts at, etc. Note that discrimination is not a global issue: a curve may have a very high slope at a given point, so it is highly discriminating in this area, but the curve will almost be flat when we are far from this point. Conversely, if we have a low slope, then the item covers a wide range of difficulties but the result of the item will not be so informative as for a higher slope. 

One important question about these curves is how the parameters are estimated. This is usually done by applying the item to a population of subjects for which we already know their ability. Since the ability of subjects is usually determined by tests, leads to certain circularity, which is usually sorted out in an incremental way in psychometrics: the first tests were devised by comparing on informal assessments of both the ability of subjects and the difficulty of tasks, the next attempt refined them using the results from the previous ones, etc. It is important to note that there is no theoretical assessment of the intrinsic difficulty or discriminating power of items: this is done empirically by looking at the population.

Overall, there have been approaches to derive problem difficulty (and other parameters) from the intrinsic complexity of the problem statement (or the solution), from the complexity of the way from the problem statement to the solution, i.e., the search space, and, finally, extrinsically, from a population. Some of them are informal and other are more rigorously defined in terms of the representation of problem statement, solution or search space. The main drawback of any of these, but most especially, the ones based on the search space, is that the measure of difficulty highly depends on the representation. For instance, if two evolutionary settings use different solution representation and different operators, the estimated difficulties for the {\em same} problem could highly differ. 
The approaches based on algorithmic information theory are less prone to this drawback because the use of Kolmogorov complexity may reduce this dependency due to the invariance theorem. Let us see next how to apply algorithmic information theory to the general case when an agent interacts with an environment, and how it leads to a population-oriented approach.




\section{The complexity of environments, actions and policies}\label{sec:complexity}


The main goal of this paper is to study the difficulty that {\em one} agent faces on a given environment. In order to consider a very general setting (including games, for instance), we will consider that there might be {\em other} agents in the environment. Multi-agent environments are usually based on some common space, transition rules and reward system, which are shared by all of them. A more formal definition follows:

\begin{definition} \label{def:environment}
A multi-agent environment $\mu$ is a tuple $\left\langle \sigma, \tau, \omega, \rho, \Pi \right\rangle$ where $\sigma$ is a (state) space, $\tau$ is a state transition function, $\Pi$ is a set of agents $\pi_1, \pi_2, \dots, \pi_n$ as policy functions and $\omega$ and $\rho$ are observation and reward functions (respectively) for all the agents.
\end{definition}

The previous definition does not completely specify how the transition rules or the reward system work and the topology of the space, the possible contents and the movements of the agents. We will see a specific case in section \ref{sec:experiments}. For the moment, it is enough to analyse the issue of difficulty in a general, but still formal way, with the following definitions and notation. Given any of the agents $\pi_i$ running for $t$ steps on a deterministic environment, we denote by $\chi_i^t$ its interaction history, which is a sequence of tuples of the form $\left\langle a, o, r \right\rangle$, representing action, observation and reward respectively, and where $a$ and $o$ are elements in finite discrete sets ${\cal{A}}$ and ${\cal{O}}$ respectively, and $r$ is a bounded rational number. The policy $\pi_i$ is then a function $\pi_i(\chi_i^t) \rightarrow a_i^{t+1}$. Similarly, the transition function takes the previous state and the actions of all the agents leading to another state: $\tau(\sigma^t, a_1^t, a_2^t, \dots, a_n^t) \rightarrow \sigma^{t+1}$, from which observations and rewards are produced for all of them: $\omega(s^{t+1}, i) = o_i^{t+1}$ and $\rho(s^{t+1}, i) = r_i^{t+1}$.
Given an agent, we can just project over its history $\chi_i^t$ to get just the sequence of actions, denoted by $\alpha_i^t$. We assume all transition function to be deterministic.  
We denote by $R_i^t(\pi, \mu)$ an aggregation function of rewards which is applied to the sequence of rewards until time $t$ (e.g., the average of all rewards so far for agent $i$). 

As discussed in the previous section, we can try to estimate the difficulty of the environment in different ways: by looking at the problem, the solution or the search space. Here, the problem is given by the environment $\mu$, the solution is a sequence of actions $\alpha$ and the search space is given by the exploration of all the sequences of actions. 

First, if we consider the problem statement (i.e., the environment), we can define its complexity $K$ as\footnote{Unless precisely specified, $K$ can be the Kolmogorov complexity, an approximation or other related function.}:
\[ K(\mu) \triangleq K(\left\langle\sigma, \tau, \omega, \rho, \Pi\right\rangle) \]
\noindent If we take the point of view (or role) of agent $i$, then the complexity of the environment is defined as:
\begin{equation}
\dot{K}_i(\mu) \triangleq K(\left\langle\sigma, \tau, \omega, \rho, \Pi - \{\pi_i\}\right\rangle) \label{eq:dotK}
\end{equation}
\noindent The use of $\dot{K}_i(\mu)$ as an approximation of the difficulty of the environment (from the point of view of role $i$) is then the first possibility and would boil down to estimating the difficulty of an environment as the complexity of its space, transition function, observation function, reward function and other agents.

As mentioned above, we can consider a second approach: calculating the complexity of the solution.
Focussing on one role $i$ and a given episode length $t$, let us define $A_i^t(\mu)$ as the set of sequences of actions made by any policy $\pi$ such that $R_i^t(\mu,\pi)$ is maximised for role $i$.


It follows that $|A_i^t(\mu)| \geq 1$ (there might be one or more sequences maximising the aggregated reward). From here, we can define the best-action difficulty (or hardness) $\Haction$ as follows:
\begin{equation}
\Haction_i^t(\mu) \triangleq \min_{\alpha \in A_i^t(\mu)} K(\alpha) \label{eq:Haction}
\end{equation}
\noindent  Here, $\Haction(\mu)$ represents the complexity of the simplest action sequence leading to optimal aggregated reward.

What is the relation of the two previous approaches? We can show that if $K$ is Kolmogorov complexity then we have:

\begin{proposition}\label{prop:upperbound1}
Assuming an environment $\mu$ with computable space, transition, observation and reward functions, and also a computable reward aggregation function, we have:
\[ \forall i \in \{1 \dots n\} \:\:\:  \Haction_i^t(\mu) \leq^+ \dot{K_i}(\mu) + K(t) \label{eq:upperbound1} \]
where $\leq^+$ means that the inequality holds up to a constant which is independent of both terms.
\end{proposition}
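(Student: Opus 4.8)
\emph{Proof plan.}\ The plan is to exhibit one fixed algorithm that, fed a shortest description of the environment as seen by role $i$ together with a shortest description of the number $t$, prints a concrete optimal action sequence of length $t$; the size of this algorithm then bounds $\Haction_i^t(\mu)$ from above, and reading off the dependence on the two inputs gives the claimed $\leq^+$. The first step I would make is a reduction: since the environment is deterministic and the policies of all agents other than $\pi_i$ are fixed and are part of $\left\langle\sigma,\tau,\omega,\rho,\Pi-\{\pi_i\}\right\rangle$, the entire interaction — hence the reward stream of role $i$ and its aggregate $R_i^t$ — is completely determined once the action sequence $\alpha\in{\cal A}^t$ that role $i$ plays is fixed. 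Moreover, every $\alpha\in{\cal A}^t$ is realised by the memoryless policy that plays the successive symbols of $\alpha$ regardless of the history, so the maximum of $R_i^t(\mu,\pi)$ over policies equals the maximum of the induced aggregate over sequences, and $A_i^t(\mu)$ is exactly the (non-empty, by the remark preceding the statement) set of $\alpha\in{\cal A}^t$ attaining that maximum.

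Next I would describe the algorithm $P$: read the self-delimiting description of $\left\langle\sigma,\tau,\omega,\rho,\Pi-\{\pi_i\}\right\rangle$ and the self-delimiting description of $t$; enumerate the finitely many $\alpha\in{\cal A}^t$; for each, simulate $\tau$, $\omega$, $\rho$ and the fixed policies for $t$ steps with role $i$ forced to play $\alpha$, obtaining the reward sequence, and apply the (computable) aggregation function to obtain $R_i^t$; finally output the lexicographically least $\alpha^\star$ achieving the maximal value. Computability of $\sigma$, $\tau$, $\omega$, $\rho$, of the aggregation function, and of the other agents' policies (the latter being implicit in $\dot{K_i}(\mu)<\infty$ — otherwise the inequality is vacuous) guarantees that $P$ halts, and its "wrapper" code has constant size. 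Since $\alpha^\star\in A_i^t(\mu)$ by the reduction above,
\[ \Haction_i^t(\mu) \;=\; \min_{\alpha\in A_i^t(\mu)} K(\alpha) \;\leq\; K(\alpha^\star) . \]

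To close, I would invoke subadditivity of Kolmogorov complexity: concatenating a shortest program for $\left\langle\sigma,\tau,\omega,\rho,\Pi-\{\pi_i\}\right\rangle$, a shortest program for $t$, and the constant-size description of the wrapper $P$ yields a program for $\alpha^\star$, whence $K(\alpha^\star)\leq^+\dot{K_i}(\mu)+K(t)$ with a constant depending only on the reference machine (equivalently, on $P$), not on $\mu$ or $t$. Since $i$ was arbitrary, this proves the statement.

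The one point I would be careful about — and the only real obstacle — is exactly this last bookkeeping step: the assertion "$\leq^+$ with a constant independent of both terms" is precisely the inequality $K(x,y)\leq^+K(x)+K(y)$, which is clean for the prefix (self-delimiting) variant of $K$ but costs an extra $O(\log\min(\dot{K_i}(\mu),K(t)))$ term for plain complexity; so the argument should be phrased with self-delimiting inputs, or the statement read under the prefix-complexity convention. A secondary, smaller subtlety worth spelling out is why the simulation inside $P$ is legitimate at all, namely that $\dot{K_i}(\mu)<\infty$ already forces the remaining agents to be computable.
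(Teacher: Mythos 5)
Your proof follows essentially the same route as the paper's: use a shortest description of $\left\langle\sigma,\tau,\omega,\rho,\Pi-\{\pi_i\}\right\rangle$ plus a description of $t$ to drive a brute-force simulation of all length-$t$ action sequences and output an optimal one, then bound via subadditivity of $K$. Your version is, if anything, slightly tidier (outputting a canonical lexicographically least optimal sequence rather than describing the whole set $A_i^t(\mu)$, and flagging the prefix-complexity convention needed for the clean $\leq^+$), but the underlying argument is the paper's.
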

\begin{proof}
In this case, consider one of the shortest descriptions (coded as binary strings) $d$ for $\left\langle\sigma, \tau, \omega, \rho, \Pi - \{\pi_i\}\right\rangle$.
Clearly $l(d) = \dot{K}_i(\mu)$. If we know this shortest description we can {\em simulate} the environment $\mu$ and the interaction with all the agents (except $\pi_i$). 
 We now simulate (e.g., in parallel) all possible actions for agent $\pi_i$ until time $t$ and record its histories and rewards. We just select those which maximise $R_i^t$ and get exactly $A_i^t(\mu)$.
The description of how to make this simulation will have an overhead of $c + K(t)$, which is independent of the environment (it only depends on the aggregation function $R_i^t$, the descriptional language being used, and $t$). Note that any computable function can be described with a finite program. Consequently, describing $A_i^t(\mu)$ takes at most $l(d) + c + K(t)$ bits.
Naturally, $\Haction_i^t(\mu) = \min_{\alpha \in A_i^t(\mu)} K(\alpha) \leq l(d) + c + K(t) + c'$ bits, where $c'$ is the number of bits needed to describe what a minimum is and how to select it from the set $A_i^t$. Then:
$\forall i \in \{1 \dots n\} \:\:\: \Haction_i^t(\mu) \leq l(d) + c + K(t) + c'= \dot{K}_i(\mu) + c + K(t) + c'$, which completes the proof.
\end{proof}

Note that the previous rationale also leads to $K(A_i^t(\mu)|\mu) \leq^+ K(t)$.
The previous proposition implies that simple environments (with low $K$) cannot have complex optimal action sequences. However, the opposite implication is not true and we cannot find a lower bound. 
Also, the previous proposition relies on the use of $K$, which is incomputable. In fact, this upper bound would not be true in case we used some approximation of $K$ taking time into account (such as Levin's $Kt$). 
Nonetheless, in many cases, it may still be true that the cost of describing a sequence of actions leading to (and calculating) the optimal result is smaller (in general) than the `cost' of describing the environment and the rest of `ingredients' of the problem. In other cases, however, this will not be true, as, e.g., in environments with very complex emergence behaviour. For example, in chess, it is still not trivial (in computational terms) what to do when knowing the rules and the moves of the opponent in order to, e.g., finish the match in least moves. Given infinite time, however, the optimal solution can be found.

So we have analysed the complexity of the problem formulation (the environment) and the solution (the sequence of actions).
Now, 
we are going to consider agent policies. There are several reasons for this. First, the sequence of actions depends on the environment. For instance, consider a very complex environment which outputs a sequence of non-compressible observations and consider that the best reward is just attained by performing an action which is a simple function of the observation (e.g., if observations and actions were binary, just replicating the input as an output). Then the complexity of the sequence of actions would be high, but the solution to the problem is intuitively easy\footnote{In fact, this is true in general, at least if $K$ is Kolmogorov complexity, since $K(A_i^t(\mu)|\mu) \leq^+ K(t)$.}. Second, action sequences depend on $t$ and may become very long for long episodes. In contrast, policies are finite, which makes the distribution analysis much easier even with rough approximations of Kolmogorov complexity. Third, policies represent agents, i.e., behaviours, not sequences of actions. In the end, we are interested in knowing whether there are simple agents (i.e., simple policies) solving the problem. In fact, the same sequence of actions can be performed by two different agents, with very different complexities of their policies.

Before pushing forward the approach based on policies, we need a few definitions. 
Given a class of policies or agents $\Omega$, we define:
\begin{eqnarray}
 {\Rmax}^t_i(\mu) & \triangleq & \max_{\pi \in \Omega} R_i^t(\pi,\mu) \label{eq:Rmax}   
\end{eqnarray}
\begin{eqnarray}
 {\Rmin}^t_i(\mu) & \triangleq & \min_{\pi \in \Omega} R_i^t(\pi,\mu) \label{eq:Rmin}   
\end{eqnarray}
\begin{eqnarray}
 {\Rmean}^t_i(\mu) & \triangleq & \frac{1}{|\Omega|} \sum_{\pi \in \Omega} R_i^t(\pi,\mu) \label{eq:Rmean}
\end{eqnarray}
\noindent as the maximum, minimum, and average (respectively) aggregated reward attainable in $t$ steps. In other words, this is the best (respectively worst, or average) score for any possible agent performing as agent $\pi_i$ in the environment.

\noindent We can calculate the complexity of the best policy or, more precisely, the lowest complexity of any best policy, known as
the best-policy difficulty (or hardness) $\Hpolicy$ as follows:
\begin{equation}
\Hpolicy_i^t(\mu) \triangleq \min_{\pi : R_i^t(\pi) = \Rmax^t_i(\mu)} K(\pi) \label{eq:Kmax}
\end{equation}
\noindent And we get a similar result to the one above:

\begin{proposition}\label{prop:upperbound2}
Assuming an environment $\mu$ with computable space, transition and reward functions, and also a computable reward aggregation function:
\[ \forall i \in \{1 \dots n\} \:\:\: \Hpolicy_i^t(\mu) \leq^+ \dot{K}^t_i(\mu) + K(t) \label{eq:upperbound2} \]
where $\leq^+$ means that the inequality holds up to a constant which is independent of both terms.
\end{proposition}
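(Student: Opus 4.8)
The plan is to follow the proof of Proposition~\ref{prop:upperbound1} almost line for line, the only difference being that the object we ultimately describe is an optimal \emph{policy} drawn from $\Omega$ rather than an optimal action sequence; this is in fact cleaner, since a policy is a finite program regardless of $t$. So I would fix a shortest binary description $d$ of the reduced tuple $\left\langle\sigma,\tau,\omega,\rho,\Pi-\{\pi_i\}\right\rangle$, so that $l(d)=\dot{K}_i(\mu)$. From $d$ one can simulate $\mu$ together with the behaviour of every agent other than $\pi_i$; the one further piece of data needed is the horizon $t$, which we prepend with a self-delimiting code of length $K(t)+O(1)$ (we genuinely need $t$, because the policy that is optimal for one horizon need not be optimal for another).

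Given $d$ and $t$, a single program whose size depends on neither $\mu$ nor $t$ then proceeds as follows. Because ${\cal A}$ and ${\cal O}$ are finite, there are only finitely many interaction histories of length $t$, so the program first computes $\Rmax_i^t(\mu)$ by running the simulation on each action sequence $\alpha\in{\cal A}^t$, applying the computable aggregator $R_i^t$ to the resulting reward sequence, and taking the maximum. It then enumerates $\Omega$ and, for each candidate $\pi$, simulates $\pi$ against $\mu$ for $t$ steps and tests whether $R_i^t(\pi,\mu)=\Rmax_i^t(\mu)$, outputting the first $\pi$ that passes. Since $\Rmax_i^t(\mu)$ is, by definition~\eqref{eq:Rmax}, attained by some member of $\Omega$, this search terminates. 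We have thereby exhibited a description of some optimal policy of total length $l(d)+K(t)+c$, where the constant $c$ absorbs the fixed simulator, the brute-force computation of $\Rmax$, the enumerator of $\Omega$, and the ``pick the first one'' selector (the analogue of the $c'$ in Proposition~\ref{prop:upperbound1}). Hence $\Hpolicy_i^t(\mu)=\min_{\pi:R_i^t(\pi)=\Rmax_i^t(\mu)}K(\pi)\le \dot{K}_i(\mu)+K(t)+c$, which is the claim.

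The only genuinely delicate point, and the one I would state as a standing hypothesis, is the computability used in the middle step: it presupposes that $\Omega$ is an effectively enumerable class of computable policies and that the maximum in~\eqref{eq:Rmax} is actually attained inside $\Omega$ (not merely a supremum); finiteness of ${\cal A}$ and ${\cal O}$ is what keeps the set of length-$t$ histories finite and so makes $\Rmax_i^t(\mu)$ computable by exhaustive search. As in Proposition~\ref{prop:upperbound1}, the whole argument rests on $K$ being plain Kolmogorov complexity: the witnessing program is correct but astronomically slow, so the inequality should not be expected to survive replacement of $K$ by a resource-aware variant such as $Kt$. It is also worth recording, exactly as in the remark following Proposition~\ref{prop:upperbound1}, that the same construction yields $K(\,\text{set of optimal policies}\mid \left\langle\sigma,\tau,\omega,\rho,\Pi-\{\pi_i\}\right\rangle\,)\le^+ K(t)$.
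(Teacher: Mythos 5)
Your proposal is correct and follows essentially the same route as the paper, whose proof of this proposition is simply a one-line remark that one adapts the argument of Proposition \ref{prop:upperbound1} by describing an optimal policy instead of an optimal action sequence; your write-up just spells out that adaptation (simulate $\mu$ from a shortest description $d$ plus a self-delimiting code for $t$, compute $\Rmax_i^t(\mu)$ by exhaustive simulation over the finitely many length-$t$ action sequences, and output a first optimal policy, absorbing the fixed machinery into the constant). Your explicit caveats about the effective enumerability of $\Omega$, the maximum being attained, and the failure of the bound for resource-bounded variants such as $Kt$ are consistent with the paper's own remarks and do not change the argument.
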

\begin{proof}
The proof is very similar to the one for proposition \ref{prop:upperbound1}, but just choosing a policy that gets the best results given the environment.
\end{proof}

So we have a very loose upper bound to $\Hpolicy_i^t(\mu)$. Since the actual value depends on a minimum, this suggests that we need to explore many agent policies in terms of their aggregated reward in order to estimate this. We take a look at the whole agent policy distribution below.
From now on, we will drop $i$ when it is clear which role we are using (or there is only one agent in the environment, i.e., if it is not a multi-agent environment) and drop $t$  when clear from the context (or irrelevant for the matter at hand). Similarly, we will also drop $\mu$ when working with just one environment.


\section{A distribution of agent policies}\label{sec:distribution}


The value $\Hpolicy(\mu)$ seen in Eq. \ref{eq:Kmax} of the previous section looks like a good candidate to problem difficulty, as the lowest complexity of any optimal policy. If the best policy (or one of the best policies) has low complexity, the problem could be said to be easy. While this is a straightforward interpretation, it is too narrow because of several reasons. First, it might be the case that the best policy $\pi$ leads to $R(\pi)= r$ while the second best policy $\pi'$ leads to  $R(\pi')= r - \epsilon$. If $\epsilon$ is (comparatively) very small but $K(\pi') << K(\pi)$, we may even say that an almost equally good (but much simpler) policy can be found. Intuitively, the environment would look even easier. Second, focussing on just one value is too risky if we plan to do a robust approximation of the value of difficulty, especially taking into account that in practice we will be forced to analyse samples of the agent population $\Omega$, and not the whole population itself.

Hence, the idea is to analyse $R(\pi)$ against the complexity of $\pi$. This is what we see next.

\subsection{Graphical analysis and indicators}

In order to study $R(\pi)$ in terms of the complexity of $\pi$, we define {\em slices} of the whole distribution (conditional distributions actually) according to its complexity. Given a class of agents $\Omega$, we define:
\[\Omega[k] \triangleq \{ \pi \in \Omega : K(\pi) = k \} \]
\noindent From here, we can parametrise $R$ as follows:
\begin{definition}
The aggregated reward per difficulty $k$ is given by the following distribution:
\[ R[k] \triangleq \{R(\pi) : \pi \in \Omega[k]  \} \] 
\end{definition}
\noindent By varying $k$ we have a series of distributions. We will use the notation $[\leq k]$ to represent all values of complexity from $1$ to $k$. For instance $R[\leq k] = \sum_{j=1..k} R[j]$, known as the `accumulated'\footnote{An `accumulated' distribution is not a {\em cumulative} distribution.} version of $R[k]$.

What $R[k]$ looks like? This will highly depend on the environment so its shape can give us information about its difficulty. An example is shown in  Figure \ref{fig:r-easy}. 
If we look at the distributions in this figure, we see that the distribution typically widens with increasing values of $k$. This does not mean that the expected reward increases (it remains constant in this case), but that the higher variability given by more complex programs make it possible to consider more diverse policies and find some that may be better. 

\begin{figure}
	\centering
		\vspace{-1.2cm}
		\includegraphics[width=0.6\textwidth]{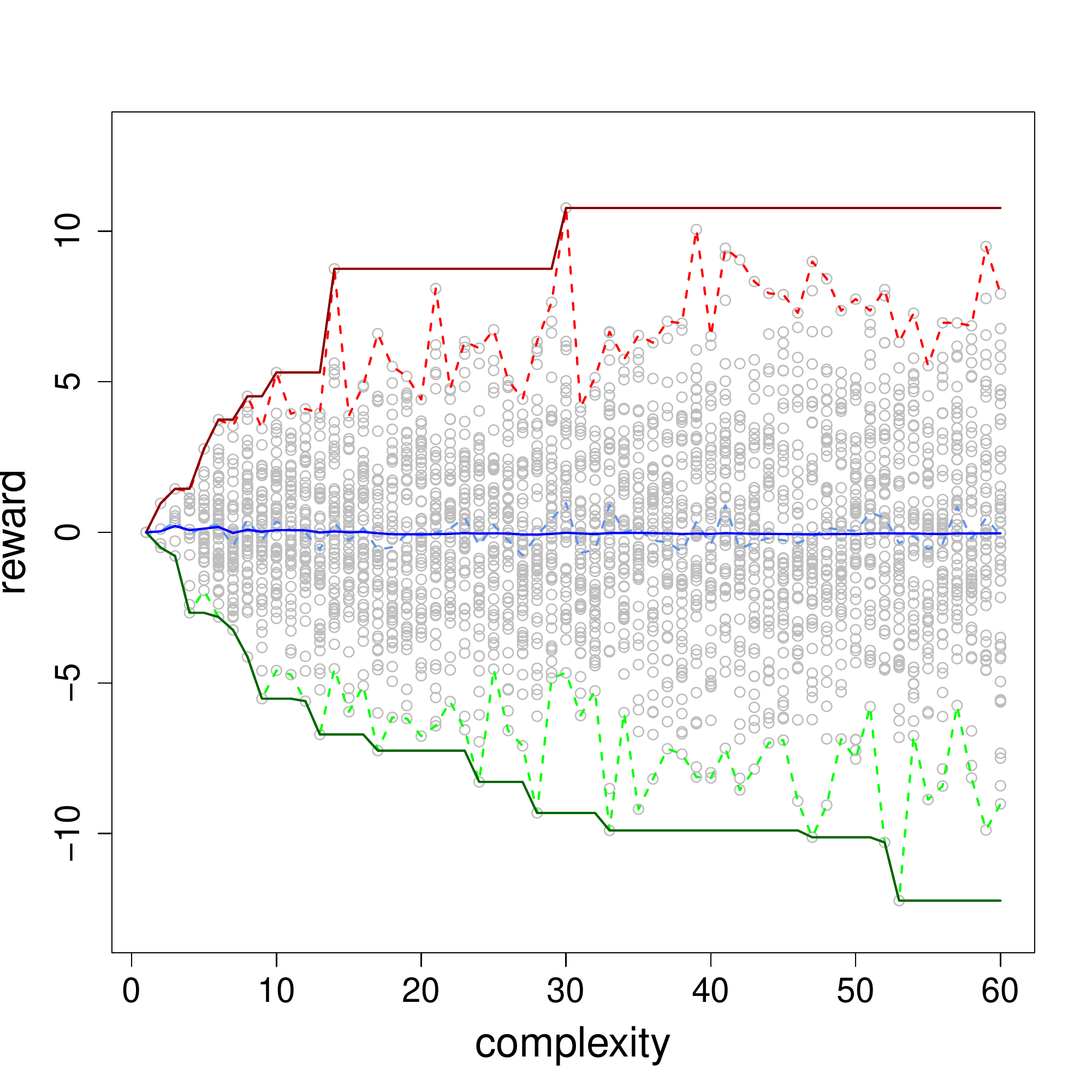}
		\vspace{-0.4cm}
	\caption{Figurative plot illustrating the difficulty of a balanced environment (an environment where random actions lead to an expected reward of 0, as in \cite[sec. 4.2]{HernandezOralloDowe2010,HernandezOrallo10b}). It shows a distribution of the expected aggregated reward $R$ (\yaxis) for an arbitrarily large number of steps, according to the agent's complexity $k$ (\xaxis). It does not show all the possible agents $\pi$ with $K(\pi) \leq 60$ but a sample containing about 50 agents for each value of $k$. The dashed red and green envelope curves show the maximum ($\Rmax[k]$) and minimum ($\Rmin[k]$), respectively, while the dashed blue curve shows the average ($\Rmean[k]$). The accumulated envelopes for the maximum ($\Rmax[\leq k]$) and minimum ($\Rmin[\leq k]$) are shown in solid dark red and dark green curves respectively, and the dark blue curve shows the accumulated average ($\Rmean[\leq k]$). }
	\label{fig:r-easy}
\end{figure}





We can derive some indicators of difficulty as a summarisation of the distribution.
The first idea for difficulty is to consider the maximum. The maximum envelope is just defined as:
\begin{equation}
 \Rmax[k] \triangleq \max_{\pi \in \Omega[k]}  R[k]   \label{eq:Rmaxslice}
\end{equation}
\noindent Clearly, $\Rmax$, as was already defined (see Eq. \ref{eq:Rmax}), can also be calculated as $\max_{k} \Rmax[k]$. In the example of Figure \ref{fig:r-easy}, this is 10.8. The previous Eq. \ref{eq:Rmaxslice} can be similarly defined for $\Rmin$ and $\Rmean$. 
By looking at $\Rmax$, $\Rmin$ and $\Rmean$ we can have an idea of the normalisation of the environment. If $\Rmean$ is not centred, it may indicate some kind of bias (i.e., the environment can be benevolent or malevolent, or simply that the rewards are not balanced).
This suggests that results need to be normalised, by getting a new plot where the mean is centred. Normalisation is important because it makes possible to aggregate the results of one (or more agents) on several environments, by making them commensurable. 

If we look for one simple indicator of difficulty, yet again our first approach of a measure of difficulty could be to choose where $\Rmax$ is found:
%
%
\[ \argmin_{k=1.. \infty} (\Rmax[k]) \] 
\noindent which is exactly equivalent to $\Hpolicy$, as seen in Eq. \ref{eq:Kmax}. 
%
 In the environment of Figure \ref{fig:r-easy}, we see that the maximum value is reached at complexity 30, so we have that $\Hpolicy = 30$. 
However, there are other policies with almost-as-good results at complexity 14. 
This shows that this indicator is not robust. We could also calculate the smallest value of $k$ for a given value of $R$ or other indicators.
Nonetheless, all these indicators based on a single number are prone to problems for real situations, where we can only get a sample of the points, and not all of them. 
An alternative option then is to calculate similar values taking some quantile of the data into account, e.g., 99\%, and get the lowest value of $k$ such that we get 1\% of the data above a given value (or, as a percentage of the maximum, e.g., 95\% of $\Rmax$). 
%
%
Many other statistical indicators can be used, including a comparison of the distributions. For instance, we could determine for which value of $k$ the distributions stabilise (consecutive distributions become very similar) in order to determine when a more robust maximum has been reached. 
Another interesting indicator is to calculate the correlation between reward and complexity. If we consider the correlation of all the values, i.e., $Cor_{\pi \in \Omega}(K(\pi),R(\pi))$,  
we should expect to have no correlation (as in Figure \ref{fig:r-easy}). 
Quite differently, we can calculate the correlation by slice using the maximum value, i.e., 
$Cor_{i=1..\kmax}(i, \Rmax[i])$ (with $\kmax$ being the maximum complexity which is considered),
which in this case is $0.73$ (Spearman correlation). Here, a high positive correlation is expected. Otherwise, the environment may even be penalising complex policies.
We will see the use of some of these indicators in section \ref{sec:experiments}.


Nonetheless, considering the envelope (either as given by the maximum points or by a quantile of the data) may not capture the difficulty of finding good rewards in general. This issue is related to how the points are distributed for each slice, which is, in turn, related to the notion of discriminating power. In order to clarify this, we need to be more precise about the meaning of being discriminating. This was initially stated, in an informal way, as a property of problems or tasks that can be solved by some agents but not by others, i.e., showing high variability in results. 
An interesting plot for understanding this would be to show the distribution by slice, or the distribution of $R$ for the whole data. The degree of dispersion (measured by kurtosis or other indicators) could be useful here. For instance, if the values between $\Rmax$ and $\Rmin$ are distributed almost uniformly we have that there are many policies closer to the optimal result. If it has a more peaked shape (for instance like a beta distribution with $\alpha=\beta=3$), then the extreme values may be more difficult to reach. While this may give clear findings occasionally (we will look at several cases in section \ref{sec:experiments}), this analysis may be unable, in general, to disentangle difficulty and discriminating power. Let us look for a different approach next.







\subsection{Environment response functions}\label{sec:erc}

As we mentioned in section \ref{sec:background}, item response theory (IRT) \cite{embretson2000item} is a paradigm for the study of items (tasks) and a more principled way of designing tests and other instruments that measure abilities, especially in the area of (computerised) adaptive testing. The most distinctive feature about IRT is that each item (task) is categorised and analysed according to its difficulty in terms of the response that subjects of different ability degrees may show at the task. 

We saw an example of the three-parameter logistic model and a linear model in Figure \ref{fig:irc}. Typically, for tasks which are led by bounded rewards, the curves should look something in between of the two models.
Nonetheless, we do not even need to define a parametric model: what we really need is to determine a function that returns the expected reward given an ability level. In order to do this, we need to look at our policy (agent) distribution again.


Consider a population of agents $\Omega$. We can define a probability distribution over them.


\begin{definition}\label{def:polprob}
Given a population of agents, $\Omega$, the {\em a priori policy probability}, denoted $w(\pi)$, is defined as a distribution over $\Omega$.
\end{definition}

This gives more or less weight to policies and represents the probability of finding, using or exploring a policy. We can get a sample $S$ of size $N$ (without replacement) from the set $\Omega$ by using the probability $w(\pi)$. This is denoted by $\Omega||_{w,N}$.

Now let us consider a tolerance value $\gamma$, with $0 \leq \gamma \leq 1$. Given a random sample according to $w$ and a tolerance value $\gamma$, the probability that the sample of size $N$ contains a value sufficiently closer ($\gamma$) to the maximum aggregated reward of the population $\Omega$ is given by:
\begin{equation}
\qpos(\gamma,\Omega,w,N) \triangleq \text{Pr}\left[\max_{\pi \in \Omega||_{w,N}}\{R(\pi) \} \geq (1-\gamma)(\Rmax - \Rmean) + \Rmean\right] \label{eq:qpos}
\end{equation}
\noindent There is a baseline probability, even for extreme tolerance $\gamma = 1$:
\begin{proposition}\label{prop:half}
If $\Rmean$ is equal to the median of the aggregated reward in $\Omega$ and $w(\pi)$ is independent of $R(\pi)$ then $q(1,\Omega,w,1) = 1/2$.
\end{proposition}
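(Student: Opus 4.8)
The plan is to unwind the definition of $\qpos$ at the extreme tolerance $\gamma=1$ and a single-element sample $N=1$, and show that it reduces to the probability that one randomly drawn policy beats the median. Setting $\gamma=1$ in Eq.~\ref{eq:qpos}, the threshold $(1-\gamma)(\Rmax-\Rmean)+\Rmean$ collapses to exactly $\Rmean$. With $N=1$, the sample $\Omega||_{w,1}$ consists of a single policy $\pi$ drawn according to $w$, and the maximum over a singleton is just $R(\pi)$. Hence $q(1,\Omega,w,1) = \text{Pr}[R(\pi) \geq \Rmean]$ where $\pi \sim w$.

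Next I would use the two hypotheses. First, $\Rmean$ is assumed to equal the median of the aggregated reward over $\Omega$; by definition of the median, $\text{Pr}[R(\pi) \geq \text{median}] \geq 1/2$ and $\text{Pr}[R(\pi) \leq \text{median}] \geq 1/2$ when $\pi$ is drawn from the distribution underlying the median. The subtlety is that "median" here must be understood with respect to the weighting $w$ itself — i.e. the $w$-weighted median of $R$ — so that these two inequalities hold. Second, the independence hypothesis ($w(\pi)$ independent of $R(\pi)$) is what guarantees that drawing $\pi$ according to $w$ and then looking at $R(\pi)$ gives the same distribution on reward values as the (unweighted or $w$-weighted, consistently) reference distribution defining the median; without it, the $w$-induced distribution on rewards could be skewed away from having $\Rmean$ as its median. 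Combining $\text{Pr}[R(\pi)\geq \Rmean]\geq 1/2$ with the complementary bound, and assuming the reward distribution has no atom exactly at the median (or handling the tie by the usual median convention), yields $\text{Pr}[R(\pi)\geq\Rmean] = 1/2$.

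I would then assemble these: $q(1,\Omega,w,1) = \text{Pr}[R(\pi)\geq\Rmean] = 1/2$, which is the claim. It is worth remarking in the write-up that the identity is really a statement about medians, and that the "$\geq$" versus "$>$" in Eq.~\ref{eq:qpos} together with possible point masses at $\Rmean$ is the only place where care is needed; if $R$ takes the value $\Rmean$ with positive $w$-probability the statement should be read as holding with the standard tie-splitting convention, or one simply notes $\text{Pr}[R(\pi)\geq\Rmean]\geq 1/2 \geq \text{Pr}[R(\pi)>\Rmean]$ and the intended reading of "median" pins the value at $1/2$.

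\textbf{Main obstacle.} The only real difficulty is conceptual rather than computational: pinning down precisely which distribution "the median of the aggregated reward in $\Omega$" refers to, and checking that the independence assumption is exactly the hypothesis that makes the $w$-sampling of rewards agree with that distribution. Once that is fixed, the proof is a two-line substitution into the definition plus the defining property of the median. I expect the cleanest presentation treats $\Omega$ as finite (as elsewhere in the paper), takes all probabilities with respect to $w$, interprets "median" as the $w$-weighted median, and flags the atom-at-the-median caveat in a footnote.
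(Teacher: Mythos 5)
Your proposal is correct and follows essentially the same route as the paper: substitute $\gamma=1$, $N=1$ into Eq.~\ref{eq:qpos} so the threshold collapses to $\Rmean$ and the singleton maximum is just $R(\pi)$, then invoke the mean-equals-median hypothesis together with independence of $w(\pi)$ from $R(\pi)$ to conclude the probability is $1/2$. In fact you are somewhat more careful than the paper's own argument (which simply asserts that half of $\Omega$ lies above and half below $\Rmean$), since you explicitly flag that the median must be read consistently with the $w$-weighting and that atoms at $\Rmean$ require a tie-handling convention.
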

\begin{proof}
If $N=1$ we just draw one element, and with $\gamma=1$  we have:
\begin{eqnarray*}
\qpos(1,\Omega,w,1) & = & \text{Pr}\left[\max_{\pi \in \Omega||_{w,N}}\{R(\pi)\} \geq (1-1)(\Rmax - \Rmean) + \Rmean \right] \\
                    & = & \text{Pr}\left[\max_{\pi \in \Omega||_{w,N}}\{R(\pi)\} \geq \Rmean \right]
\end{eqnarray*}							
If the mean and the median match then there are exactly $|\Omega|/2$ elements above $\Rmean$ and the other half would be below, so the probability that taking just one is greater than or equal to $\Rmean$ is exactly 1/2. 
\end{proof}

Interestingly, we can define $\qneg$ in a very similar way to $\qpos$:
\[ \qneg(\gamma,\Omega,w,N) \triangleq \text{Pr}\left[\min_{\pi \in \Omega||_{w,N}}\{R(\pi)\} \leq \gamma(\Rmean - \Rmin) + \Rmin \right] \]
\noindent Similar results as proposition \ref{prop:half} can be obtained for $\qneg$.
Note that the maximum (or minimum) is unique but there might be more than one policy reaching that maximum (minimum). 

We now look for the minimum size of the sample such that the above probability (either $\qpos$ or $\qneg$) is at least $1/2$. In other words, we want to know how large the sample has to be to have probability $1/2$ (or larger) of getting the maximum (up to a tolerance level).
\begin{equation}
\Npos(\gamma, \Omega, w) \triangleq \min \{ N : \qpos(\gamma,\Omega,w,N) \geq 1/2 \} \label{eq:Npos}
\end{equation}
\begin{equation}
\Nneg(\gamma, \Omega, w) \triangleq \min \{ N : \qneg(\gamma,\Omega,w,N) \geq 1/2 \} \label{eq:Nneg}
\end{equation}
From here, we have:
\begin{proposition}\label{prop:one}
Under the same conditions of proposition \ref{prop:half}:
\[ \Npos(1, \Omega, w) = 1\]
\[ \Npos(0, \Omega, w) = |\Omega|/2 \]
For the second equality we also assume that the policy that produces $\Rmax$ is unique.
\end{proposition}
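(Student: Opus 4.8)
The plan is to establish the two equalities in Proposition~\ref{prop:one} by evaluating the definitions of $\Npos$ (Eq.~\ref{eq:Npos}) and $\qpos$ (Eq.~\ref{eq:qpos}) at the extreme tolerance values $\gamma = 1$ and $\gamma = 0$, using the hypotheses inherited from Proposition~\ref{prop:half} (namely that $\Rmean$ equals the median and that $w$ is independent of $R$). For the first equality, the key observation is that Proposition~\ref{prop:half} already gives $\qpos(1,\Omega,w,1) = 1/2$, so $N = 1$ already satisfies the threshold $\qpos \geq 1/2$ in the definition of $\Npos(1,\Omega,w)$. Since sample sizes are positive integers, $N=1$ is the smallest possible value, hence $\Npos(1,\Omega,w) = 1$. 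I would also note (briefly) that $\qpos$ is nondecreasing in $N$ — drawing more policies can only increase the sample maximum — so there is no worry that the minimum in Eq.~\ref{eq:Npos} is attained and then lost; but this monotonicity is not even needed here since $N=1$ is already the floor.

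For the second equality, I would first simplify $\qpos(0,\Omega,w,N)$. Setting $\gamma = 0$ in Eq.~\ref{eq:qpos} collapses the threshold $(1-\gamma)(\Rmax - \Rmean) + \Rmean$ to exactly $\Rmax$, so
\[ \qpos(0,\Omega,w,N) = \text{Pr}\left[\max_{\pi \in \Omega||_{w,N}}\{R(\pi)\} \geq \Rmax\right], \]
which, since $\Rmax$ is by definition the largest value of $R$ over $\Omega$, equals the probability that the sample of size $N$ contains (one of) the policies achieving $\Rmax$. Under the assumption that the $\Rmax$-achieving policy is unique, this is the probability that a uniform-without-replacement sample of size $N$ from a population of size $|\Omega|$ contains one designated element, which is $N / |\Omega|$. (Here I use that $w$ being independent of $R$ makes the draw effectively uniform with respect to whether a given policy is the maximiser; more carefully, each policy — in particular the unique maximiser — is equally likely to be any given draw, so the probability the maximiser is among $N$ draws is $N/|\Omega|$.) Then $\Npos(0,\Omega,w) = \min\{N : N/|\Omega| \geq 1/2\} = \lceil |\Omega|/2 \rceil$, which the statement writes as $|\Omega|/2$ (implicitly assuming $|\Omega|$ even, matching the even-split convention already used in the proof of Proposition~\ref{prop:half}).

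The main subtlety — and the step I would be most careful about — is the transition from "$w$ independent of $R$" to "the inclusion probability of the unique maximiser in an $N$-sample is $N/|\Omega|$." If $w$ is a genuinely nonuniform distribution over $\Omega$ that merely happens to be independent of the value $R(\pi)$, then the inclusion probability of a \emph{specific} policy is governed by $w$ and need not equal $N/|\Omega|$; the clean formula really wants $w$ uniform, or at least wants the claim to be about the \emph{expected} inclusion probability averaged over which policy is the maximiser. I would resolve this by reading the independence hypothesis as: the identity of the $\Rmax$-achieving policy is, a priori, uniformly distributed over $\Omega$ (equivalently, conditioning on $R$-values does not bias $w$), so that averaging the sampling probability over the symmetric positions gives $N/|\Omega|$ exactly. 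With that reading the computation is routine; I would state the interpretation explicitly rather than leave it implicit, since it is the only place where the result could be contested. Everything else is direct substitution into the definitions.
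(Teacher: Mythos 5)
Your proof is correct and takes essentially the same route as the paper: the first equality is obtained directly from Proposition~\ref{prop:half} (with $N=1$ already meeting the $1/2$ threshold), and the second reduces $\qpos(0,\Omega,w,N)$ to the probability that the unique $\Rmax$-achieving policy lies in the sample, which the independence of $w$ from $R$ turns into $N/|\Omega|$, giving $\Npos(0,\Omega,w)=|\Omega|/2$. The only difference is that you spell out the reading of the independence hypothesis (and the implicit evenness of $|\Omega|$) that the paper's one-line argument leaves tacit, which is a reasonable clarification rather than a departure.
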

\begin{proof}
The first one follows directly from proposition \ref{prop:half}.
For $\gamma=0$, we have that $\qpos$ simplifies to $\text{Pr}(\Rmax \in \Omega||_{w,N})$. Since $w$ is independent of $\Rmax$, and there is only one policy for $\Rmax$, we need to draw half of the elements of $\Omega$ in order to have exactly probability $1/2$. 

\end{proof}
Similarly for $\Nneg$. 
By varying $\gamma$ between $0$ and $1$ we have a function of the expected value of policies that we need to explore to approach a given value (in this case the actual maximum reward of the population, i.e., the best result). 
Depending on the choice of $w$, the functions $\Npos$ and $\Nneg$ will have different shapes. Since $\Omega$, the set of agent policies, can be very large (or even infinite), and this set is discrete, we need to consider a distribution that is appropriate for infinitely many discrete objects. A good choice is a universal distribution, using some measure of (Kolmogorov) complexity $K$, as follows:
\begin{equation}
w(\pi) \triangleq \frac{2^{-K(\pi)}}{\sum_{\pi' \in \Omega}{2^{-K(\pi')}}}  \label{eq:w}
\end{equation}
\noindent This probability represents that simple policies will be sampled (explored) with much higher probability, which is a very reasonable choice as an a priori probability. Note that if $w(\pi)$ is independent of $R(\mu)$, as the assumptions of propositions \ref{prop:half} and \ref{prop:one}, we necessarily have that $Cor_{\pi \in \Omega}(K(\pi), R(\pi)) = 0$. However, the reverse is not true.
If we use the previous $w$ in Eq. \ref{eq:Npos} we have the following property:

\begin{proposition}\label{prop:mega}
Consider $\pipos$ as any of the shortest policies with maximum reward, i.e., any element in the set $\argmin_\pi \{ K(\pi) : R(\pi) = \Rmax \}$. We have that:
\[  \Npos(0, \Omega, w) \leq 2^{K(\pipos)}{\sum_{\pi' \in \Omega}{2^{-K(\pi')}}} \]
\end{proposition}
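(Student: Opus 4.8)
The plan is to directly estimate the probability $\qpos(0,\Omega,w,N)$ and see how large $N$ must be for it to reach $1/2$. Recall that for $\gamma = 0$, proposition~\ref{prop:one} (and its proof) already observed that $\qpos(0,\Omega,w,N) = \text{Pr}[\pipos \in \Omega||_{w,N}]$, i.e.\ the probability that at least one of the (finitely many, but here we only need one) shortest optimal policies is drawn in a sample of size $N$ taken according to $w$. So the whole statement reduces to a bound on the sample size needed to hit a single fixed element $\pipos$ whose sampling probability is $w(\pipos) = 2^{-K(\pipos)} / \sum_{\pi' \in \Omega} 2^{-K(\pi')}$.

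First I would lower-bound the hitting probability. Even with sampling \emph{without} replacement, the probability that $\pipos$ is among the first $N$ draws is at least the probability of hitting it on the very first draw times $N$ — more carefully, if we draw $N$ distinct elements, a clean bound is $\text{Pr}[\pipos \in \Omega||_{w,N}] \geq N \cdot w(\pipos)$ is \emph{not} quite right in general, so instead I would use the union-type / linearity argument: let $p = w(\pipos)$; sampling without replacement only increases the per-draw conditional probability of hitting a not-yet-drawn target relative to the first draw, hence $\text{Pr}[\pipos \text{ not drawn in } N \text{ steps}] \leq (1-p)^N \leq 1 - Np + \binom{N}{2}p^2$, or even more simply use that $\text{Pr}[\pipos \in \Omega||_{w,N}] \geq 1 - (1-p)^N$, and $1-(1-p)^N \geq 1/2$ as soon as $(1-p)^N \leq 1/2$, i.e.\ $N \geq \ln 2 / (-\ln(1-p)) $. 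Since $-\ln(1-p) \geq p$, it suffices to take $N \geq (\ln 2)/p \leq 1/p = 2^{K(\pipos)} \sum_{\pi'\in\Omega} 2^{-K(\pi')}$. Because $\Npos(0,\Omega,w)$ is the \emph{minimum} such $N$, this gives $\Npos(0,\Omega,w) \leq 2^{K(\pipos)}\sum_{\pi'\in\Omega} 2^{-K(\pi')}$, which is exactly the claimed inequality (and the ceiling can be absorbed since $\sum 2^{-K(\pi')} \geq 2^{-K(\pipos)}$, so the right-hand side is $\geq 1$, and if it is not an integer one rounds up, still staying $\leq$ the stated real-valued bound only if we are willing to read it with a ceiling; I would just state it with $\lceil\cdot\rceil$ or note $\Npos$ is at most this rounded up, matching the paper's loose style).

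The main obstacle — really the only subtlety — is handling \emph{sampling without replacement} correctly, since $w$ is defined as a draw without replacement from $\Omega$ (which may be infinite, so "without replacement" needs the mild interpretation that the $N$ draws are distinct, re-normalising after each draw). I would dispatch this by the standard observation that removing already-drawn elements (none of which is $\pipos$, on the event that $\pipos$ has not yet appeared) can only \emph{raise} the conditional probability of drawing $\pipos$ next, compared to $p = w(\pipos)$; hence the "miss" probability after $N$ draws is at most $(1-p)^N$, exactly as in the with-replacement case. The rest is the elementary inequality $(1-p)^N \leq 1/2 \iff N \geq \ln 2/(-\ln(1-p))$ together with $-\ln(1-p)\geq p$ and $\ln 2 < 1$, yielding $N = \lceil 1/p \rceil$ suffices and therefore $\Npos(0,\Omega,w) \leq \lceil 1/w(\pipos)\rceil = \lceil 2^{K(\pipos)}\sum_{\pi'\in\Omega}2^{-K(\pi')}\rceil$, which is the stated bound.
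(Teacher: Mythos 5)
Your proposal is correct and follows essentially the same route as the paper's own proof: lower-bound $\qpos(0,\Omega,w,N)$ by the with-replacement hitting probability $1-(1-w(\pipos))^N$, solve for the sample size making this at least $1/2$, and reduce the resulting threshold to $1/w(\pipos)=2^{K(\pipos)}\sum_{\pi'\in\Omega}2^{-K(\pi')}$ via the same elementary inequality ($-\ln(1-p)\geq p$ together with $\ln 2<1$, which is exactly the paper's $-1/\log_2(t)\leq 1/(1-t)$ after the substitution $t=1-p$). Your explicit treatment of the without-replacement issue and of the integrality/ceiling wrinkle is if anything slightly more careful than the paper, which glosses over both.
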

\begin{proof}
Let us use the shorthand notation $k^*= K(\pipos)$ and $\wall = {\sum_{\pi' \in \Omega}{2^{-K(\pi')}}}$.
The probability of $\pipos$ appearing in a sample of size $N$ is greater than or equal to $1-(1-w)^N$ (this would be the probability with replacement).
Then, since this policy gives the maximum in $\Omega$, we have that:
\begin{eqnarray*}
\qpos(0,\Omega,w,N) & =   & \text{Pr}\left[\max_{\pi \in \Omega||_{w,N}}\{R(\pi)\} \geq (1-\gamma)(\Rmax - \Rmean) + \Rmean\right] \\
                         & \ge & 1-(1-w)^N 
\end{eqnarray*}												
Since $\qpos$ is non-decreasing with $N$, in order to calculate $N$ we make $\qpos$ equal to $1/2$ in order to have $N_{min}$ :
\begin{eqnarray}
\qpos  = & 1/2                            & \geq 1-(1-w)^N \nonumber \\
     & (1-w)^N                        & \geq 1/2      \nonumber  \\
     & (1-{\frac{2^{-k^*}}{\wall}})^N   & \geq 1/2       \nonumber \\
     & N                              & \leq \log_{1-{\frac{2^{-k^*}}{\wall}}}(1/2) \nonumber \\ 
     & N                              & \leq \frac{\log_2(1/2)}{\log_2(1-{\frac{2^{-{k^*}}}{\wall}})} \nonumber\\
     & N                              & \leq \frac{-1}{\log_2(1-{\frac{2^{-{k^*}}}{\wall}})}           \label{eq:sandwich} 
\end{eqnarray}
Now we use the following variable change $t= 1 - {\frac{2^{-k^*}}{\wall}}$. 
Since $k^*>0$ we have that $0 < t < 1$. So the above inequality becomes:
\begin{equation}
N  \leq \frac{-1}{\log_2(t)}  \label{eq:change}
\end{equation} 
We see now that 
\begin{eqnarray*}
\frac{-1}{\log_2(t)}  & \leq & \frac{1}{1-t} \\
\log_2(t)             & \leq & -1+t \\
\frac{\ln(t)}{\ln(2)} & \leq & -1+t \\ 
\ln(t)                & \leq & (-1+t)\ln(2)   
\end{eqnarray*}
is true because for $0 < t < 1$ we have that $\ln(t) \leq -1+t$ and $(-1+t) < (-1+t)\ln(2)$ since $(-1+t)$ is negative.
Using this in Eq. \ref{eq:change} and undoing the variable change we have:
\begin{eqnarray*}
N & \leq & \frac{-1}{\log_2(t)} \leq \frac{1}{1-t} \\
N & \leq & \frac{1}{{\frac{2^{-{k^*}}}{\wall}}} \\
N & \leq & {(2^{k^*}){\wall}} \\
N & \leq & {2^{k^*}{{\sum_{\pi' \in \Omega}{2^{-K(\pi')}}}}}  
\end{eqnarray*}
In fact, it can also be shown that this bound would be tight (if we used replacement in the sample), because if $1/2 \geq (1-{\frac{2^{-(k^*-1)}}{\wall}})^N$ then Eq. \ref{eq:sandwich} would be an equality. Figure \ref{fig:sandwich} illustrates how these two functions would make tight bounds.

\end{proof}

\begin{figure}
	\centering
	  \vspace{-1.5cm}
		\includegraphics[width=0.6\textwidth]{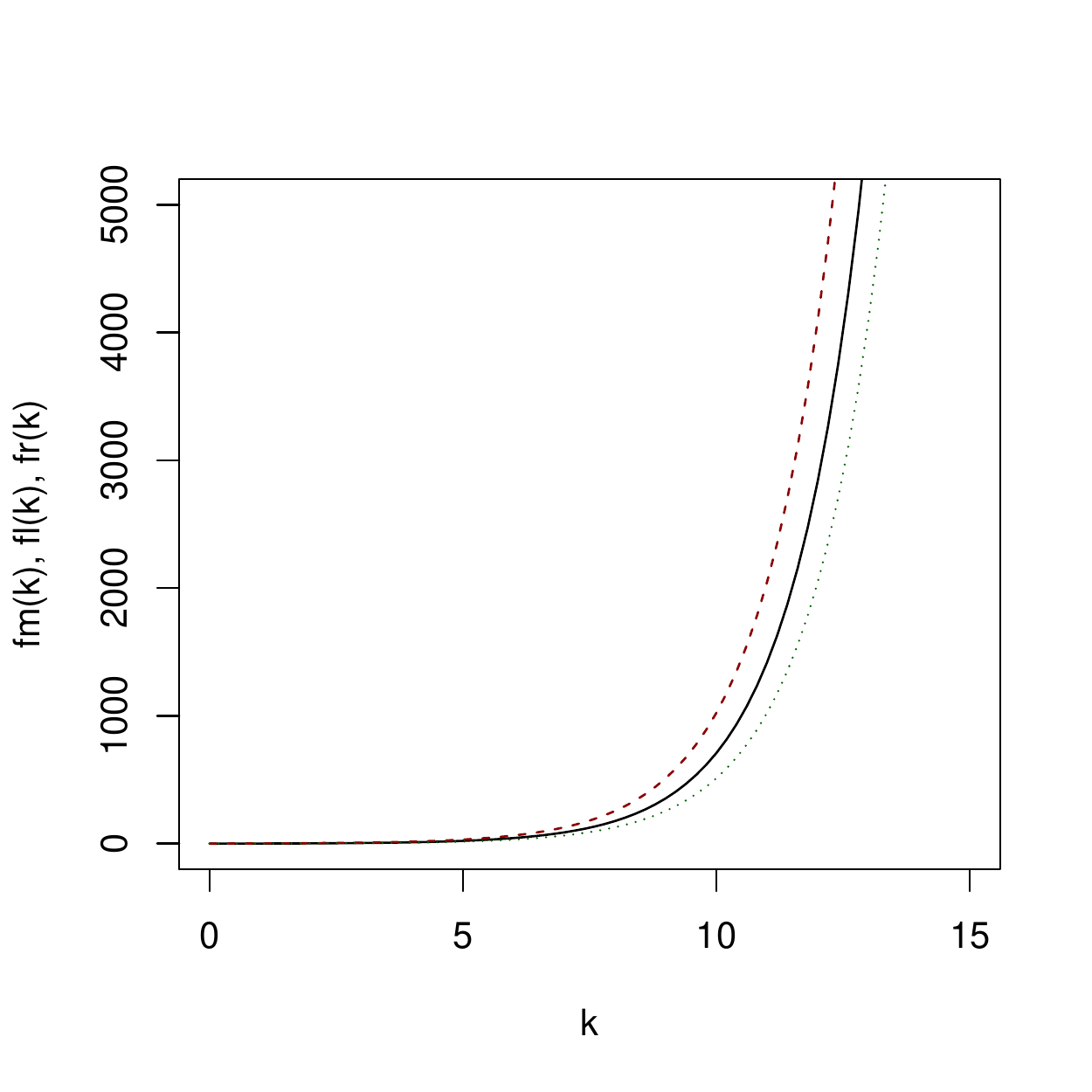}
	\vspace{-0.5cm}
		\caption{Example of the approximation made by Eq. \ref{eq:sandwich}. If instead of $fm = \frac{-1}{\log_2(1-{\frac{2^{-k}}{\wall}})} $, we use $fr= {(2^k){\wall}}$ and $fl = {(2^{k-1}){\wall}}$, $fm$ lies tightly in between. The plot shows these threes functions ($fm$ solid, $fr$ dashed, $fl$ dotted), all with $\wall=1$. }
	\label{fig:sandwich}
\end{figure}


Note that the normalisation factor is 1 when $K$ is a complexity function such that the universal measure derived from it is normalised. In any case, this is a constant value which only depends on the environment.

The previous result is also applicable to the shortest policies with minimum reward, $\pineg$.
In both cases, the expression suggests that in order to have a measure which is commensurable with the value of complexity $k$ from which the probability $w$ is used, we need to apply a logarithm to $N$. This leads us to propose the following metric:
\begin{equation}
\Dpos(\gamma, \Omega, w) \triangleq \log_2 \Npos(\gamma, \Omega, w) \label{eq:Dpos}
\end{equation}
\begin{equation}
\Dneg(\gamma, \Omega, w) \triangleq \log_2 \Nneg(\gamma, \Omega, w) \label{eq:Dneg}
\end{equation}
\noindent Assuming $w$ and $\Omega$ are fixed in what follows, this gives functions of difficulty in terms of $\gamma$, i.e., $\Dpos(\gamma)$ and $\Dneg(\gamma)$. Note that, from propositions \ref{prop:half} and \ref{prop:one}, we have that $\Dpos(1) = \log_2 1 = 0$, i.e., the difficulty with tolerance $\gamma=1$ is just 0. Similarly for $\Dneg$. 
The maximum value of difficulty under independence of $w(\pi)$ and $R(\pi)$ would be $\log_2 |\Omega|/2 = (\log_2 |\Omega|) - 1$. Note that we will usually calculate $\Dpos$ with a sample. If this sample is small, the estimation will be below the actual value of $\Dpos$. As the sample becomes larger, the value will converge to its actual value (note that rewards are bounded). 

Now, if we are able to estimate $\Dpos$, mapping $\gamma$ into $k$, we can derive $(\Dpos)^{-1}$, which returns a value of $\gamma$ that ensures 1/2 probability of finding that level of aggregated reward using a sample size related to $2^k$. Instead of interpreting the input as a difficulty, we can see as an ability of the agent, leading to curve very much like the item response curve. Also, we can just translate the output of $(\Dpos)^{-1}$ to an aggregated reward by applying the resulting tolerance expression, and defining $\Repos(\theta) \triangleq (1-(\Dpos)^{-1}(\alpha))(\Rmax - \Rmean) + \Rmean$, for $\theta \geq 0$ with output between $\Rmean$ and $\Rmax$. This non-decreasing function can be plotted in the form of {\em environment response curves}. We can do similarly for $\Dneg$ leading to $\Reneg(\theta) \triangleq (\Dneg)^{-1}(-\theta)(\Rmean - \Rmin) + \Rmin$, for $\theta \leq 0$ with output between $\Rmean$ and $\Rmin$.
Putting both things together gives the following function:
\begin{definition}
The {\em environment response function} (or curve) is defined as:
\begin{eqnarray*}
\Re(\theta) & \triangleq & \Reneg(\theta) \:\: \mbox{if} \:\: \theta < 0  \\
						&						 & \Repos(\theta)  \:\: \mbox{otherwise}  
\end{eqnarray*}
\end{definition}
Note that the above curves are not normalised, because we recover the original scale given by $\Rmax$, $\Rmin$ and $\Rmean$. By just setting these terms to $1$, $-1$ and $0$ respectively, we can attain a normalised environment response curve. An illustrative example is shown in Figure \ref{fig:erc}.

\begin{figure}
	\centering
	  \vspace{-1.5cm}
		\includegraphics[width=0.6\textwidth]{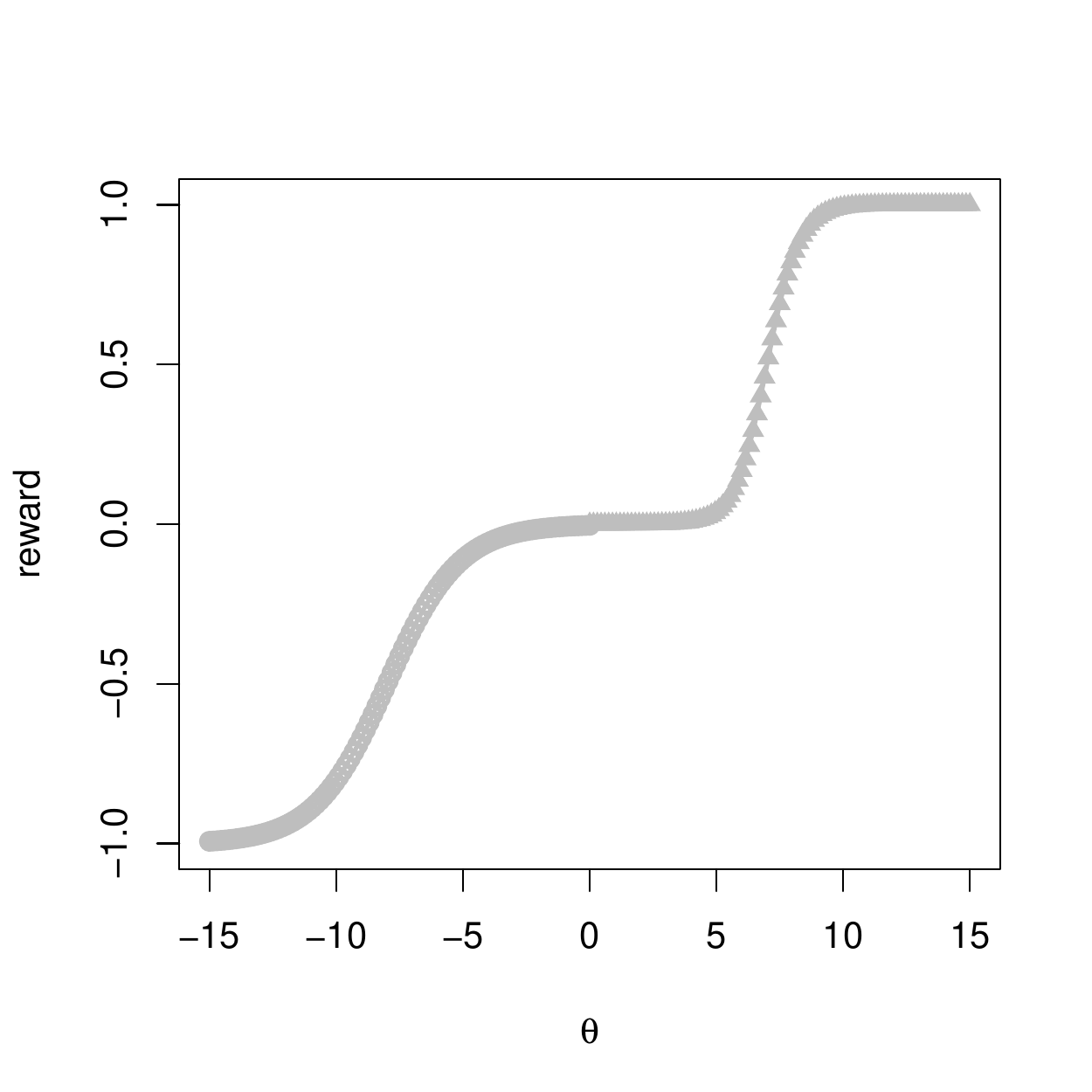}
	  \vspace{-0.5cm}
	\caption{Example of a normalised {\em environment response curve}. Given an expected ability $\theta$ on the \xaxis we get an expected aggregated reward on the \yaxis. If we only focus on the positive (top right) part of the plot, we can see that the plot saturates at about $\theta=8$. For different values of tolerance $\gamma$ we can also determine how much ability is needed. The bottom left part of the plot shows that any environment is dual, and by negating the rewards we could have an environment behaving as it is shown in the bottom left part of the plot.}
	\label{fig:erc}
\end{figure}

We have used $\theta$ for the ability, as in Item Response Theory. The use of a negative ability may look strange initially, but it has a good interpretation. If we see the value of $\theta=0$ as the ability that  is expected to score as the average of all policies, then performing worse than that can only happen by chance or because the agent is making an effort to get bad rewards. This leads to a duality in these plots.


\section{Case study: agent-populated elementary cellular automaton}\label{sec:experiments}


In theory, we can analyse difficulty and discriminating power for any possible kind of environment which is compatible with definition \ref{def:environment}, i.e., any discrete-time deterministic environment. In this section, we are going to choose a simplistic setting for practical reasons. First, we are interested in minimalistic environments where the number of observations and actions are extremely reduced, so having some relatively rich phenomena with very simple transition functions. Second, we are interested in simplistic policy languages in order to be able to evaluate a large amount of agents quickly. Third, we want to elaborate on settings that are well known and previously studied in terms of emergence and complexity. The configuration we present next follows these three conditions.

\subsection{Agent-populated elementary cellular automata: definition and examples}

The environments we will work with are will use elementary cellular automata (ECA) \cite{wolfram2002new} for the space $\sigma$ and the transition function $\tau$, 
and will let the agent see and modify part of the usual behaviour of the automaton. The following definition specifies the complete behaviour of this kind of the environment:

\begin{definition} \label{def:cellular}
A single-agent elementary cellular automaton ({\sffamily SAECA}) is a special kind of environment 
$\left\langle \sigma, \tau, \omega, \rho, \Pi \right\rangle$, as seen in definition \ref{def:environment}, with the following extra parameters $\left\langle \sigma^0, \nu, p^0\right\rangle$.
The state $\sigma$ is represented by a one-dimensional array of bits or cells $\sigma_1, \sigma_2, \dots, \sigma_n$, also known as {\em configuration}. We consider the array to be finite (length $|\sigma|=n$) but circular in terms of neighbourhood ($\sigma_{0} = \sigma_n$ and $\sigma_{n+1} = \sigma_1$).
There is an initial array $\sigma^0$, also know as {\em seed}. The transition function $\tau$ is given by $\nu$, as any of the $2^{2^3}=256$ rules that can be defined looking at each cell and its two neighbours according to the numbering scheme convention introduced in \cite{wolfram2002new}.
Given the behaviour of the space, we consider just one agent in $\Pi$. The agent is located at one cell (its position $p$) with $1 \leq p \leq n$, which is initially $p^0$. The set of observations ${\cal{O}}$ is given by three bits $\left\langle c, l, r \right\rangle$ representing the contents of the 
left and right neighbouring cells respectively, i.e., $\sigma_{p-1}$ and $\sigma_{p+1}$. 
The actions ${\cal{A}}$ are given by a pair of `move' and `upshot', denoted by $\left\langle V, U \right\rangle$. The ordered set of moves is given by $\{$ {\tt stay, right, left} $\}$, and the ordered set of upshots is $\{$ {\tt keep, swap, set0, set1} $\}$, which respectively mean that the content of the cell where the agent is does not change, the content of the cell is swapped ($0 \rightarrow 1$, $1 \rightarrow 0$), the content is set to 0 and the content is set to 1.
The rewards are calculated in the following way. If the agent is at position $p$ at time $t$, then we use this formula:
\[ r^t \leftarrow \sum_{i=1..\left\lfloor n/2 \right\rfloor} \frac{\sigma^t_{pos+i} + \sigma^t_{pos-i}}{2^{i+1}}  \] 
which counts the number of 1s which are in the neighbourhood of the agent, weighted by their proximity. It is easy to see that $0 \leq r^t \leq 1$.

The order of events for each step in the system is: observations are produced, actions are performed, the automaton is updated and finally, rewards are produced.
\end{definition}

Note that the environment is parametrised by the original contents of the array $\sigma^0$ (including its size), the ECA rule number $\nu$, and the original position of the agent $p^0$. Given an environment and a computable agent, the evolution of the system is computable and deterministic.

In order to explore the results for different policies, we need a language for expressing them. There are a few agent languages in the literature, but they are too oriented towards the architecture, are too focussed on Markov Decision Processes or are not sufficiently minimalistic for an exhaustive search over the policies (see, e.g., \cite{andre2003programmable}). As a result, we just a introduce a new language, {\sf APL}:

\begin{definition} \label{def:policy1}
The agent policy language {\sf APL} is given by a memory (or history) binary array $m$, initially empty (and not circular), and an ordered set of instructions ${\cal{I}}$ = $\{$ {\tt back}=0, {\tt fwd}=1, {\tt Xaddm}=2, {\tt Xadd1}=3, {\tt Yaddm}=4, {\tt Yadd1}=5 $\}$. 
A program or policy $\pi$ is a sequence of instructions $\iota_1, \iota_2, ..., \iota_m$ in ${\cal{I}}$.
The interpreter works on its memory by using two accumulators $V$ and $U$, and the action is given by the result of the accumulators at the end of the process. Namely:
\begin{enumerate}
\item Read the observation $\left\langle c, l, r \right\rangle$ and append it to the history array $m$. 
\item Place the memory pointer $b$ at the end of $m$.
\item $V \leftarrow$ {\tt stay}
\item $U \leftarrow$ {\tt keep}
\item forall $\iota_j$ 
\item  $\:\:$ case $\iota_j$:
\item  $\:\:\:\:\:\:$ {\tt back} $\:\:$     :$\:\:$ $b \leftarrow \max(b - 1, 1)$
\item  $\:\:\:\:\:\:$ {\tt fwd}  $\:\:\:\:$ :$\:\:$ $b \leftarrow \min(b + 1, |m|)$  
\item  $\:\:\:\:\:\:$ {\tt Vaddm}$\:$       :$\:\:$ $V \leftarrow (V + m_b) \: \text{mod} \: 3$ 
\item  $\:\:\:\:\:\:$ {\tt Vadd1}$\:$       :$\:\:$ $V \leftarrow (V + 1) \: \text{mod} \: 3$  
\item  $\:\:\:\:\:\:$ {\tt Uaddm}$\:$       :$\:\:$ $U \leftarrow (U + m_b) \: \text{mod} \: 4$ 
\item  $\:\:\:\:\:\:$ {\tt Uadd1}$\:$       :$\:\:$ $U \leftarrow (U + 1) \: \text{mod} \: 4$  
\item  $\:\:$ end case  
\item endfor 
\item return $\left\langle V, U \right\rangle$
\end{enumerate}
\end{definition}

The language is clearly not universal, and all programs end. The goal of this language is not to be easily programmable but to be able to express some simple policies that may be useful in the environment.

Let us see a few examples of how these environments and agents work.
First, Figure \ref{fig:manyrules} shows the evolution of several environments with seed ``010101010101010101010'', $p_0$= 11 and several values of $\nu$. We include an {\em inert} agent, i.e., an agent that does not affect the environment (i.e., an empty policy $\pi$). As a result, the resulting matrix after 200 iterations is the same as a classical elementary cellular automaton with each number $\nu$, with patterns that are well-known (see, e.g., \cite{wolfram2002new}).

\begin{figure}
	\centering
		\vspace{-1cm}
		\hspace{-1.5cm} 
		\includegraphics[width=0.24\textwidth]{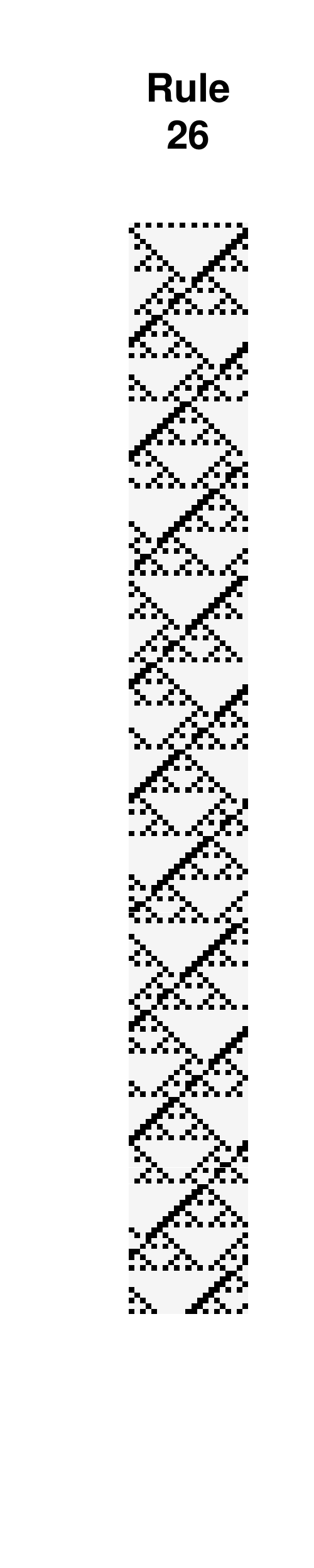}\hspace{-1.5cm}
		\includegraphics[width=0.24\textwidth]{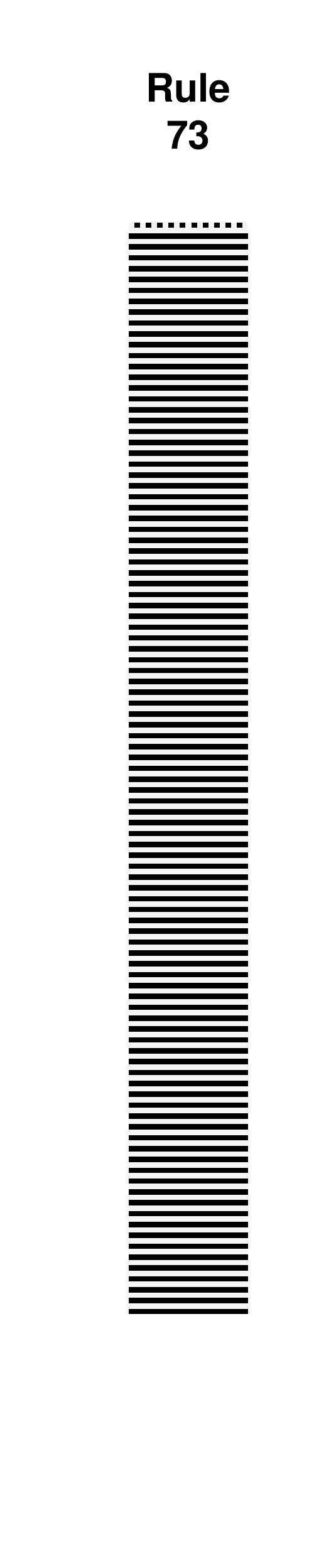}\hspace{-1.5cm}
		\includegraphics[width=0.24\textwidth]{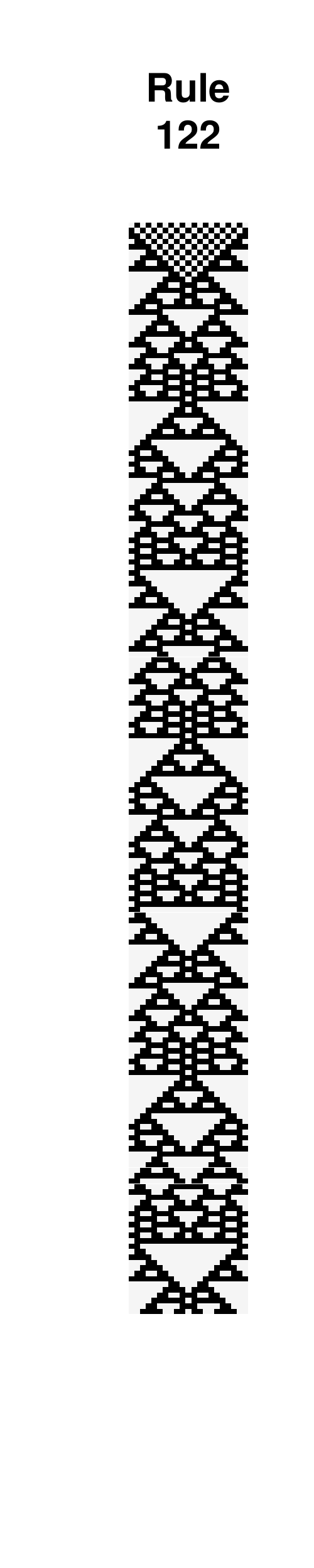}\hspace{-1.5cm}
		\includegraphics[width=0.24\textwidth]{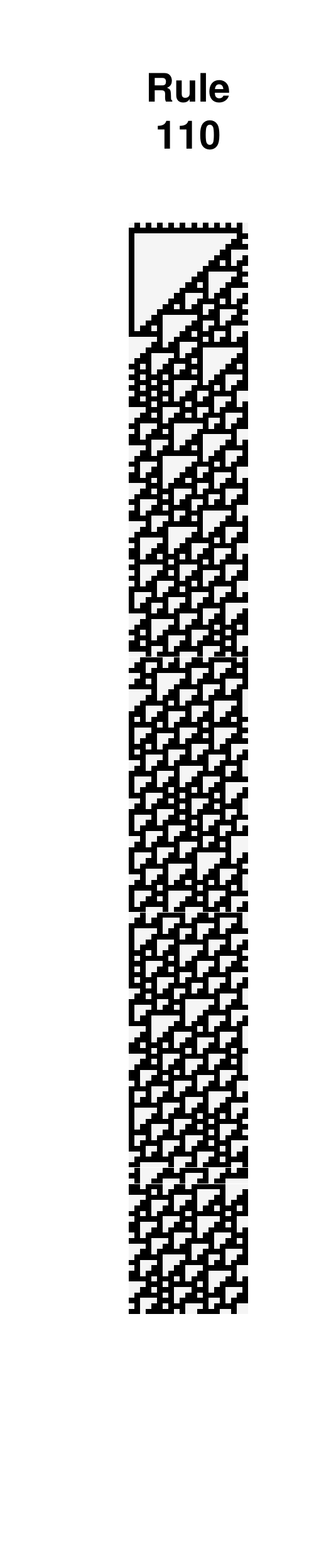}\hspace{-1.5cm}
		\includegraphics[width=0.24\textwidth]{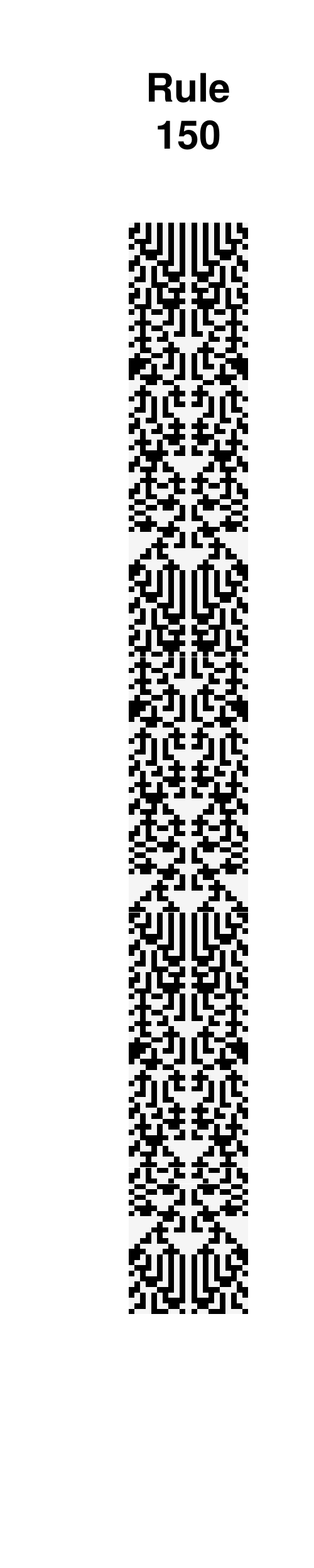}\hspace{-1.5cm}
		\includegraphics[width=0.24\textwidth]{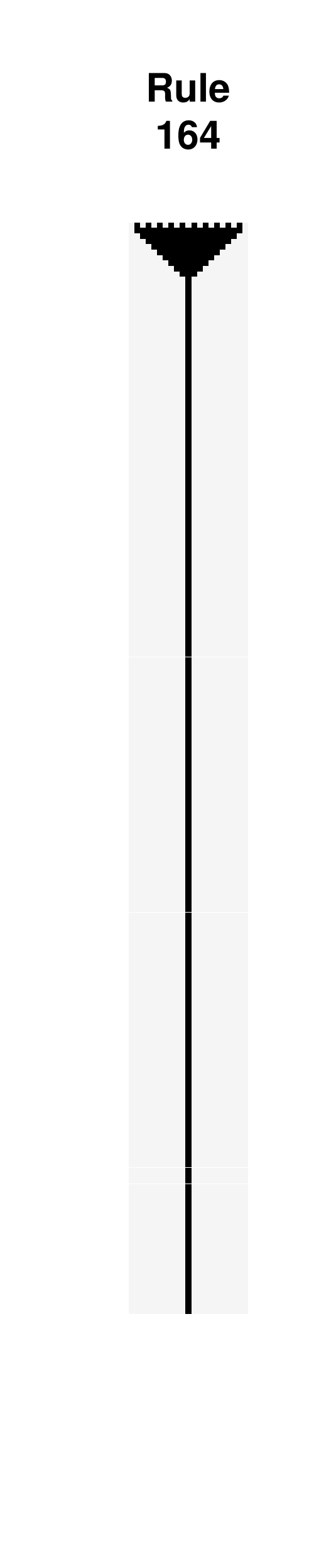}\hspace{-1.5cm}
		\includegraphics[width=0.24\textwidth]{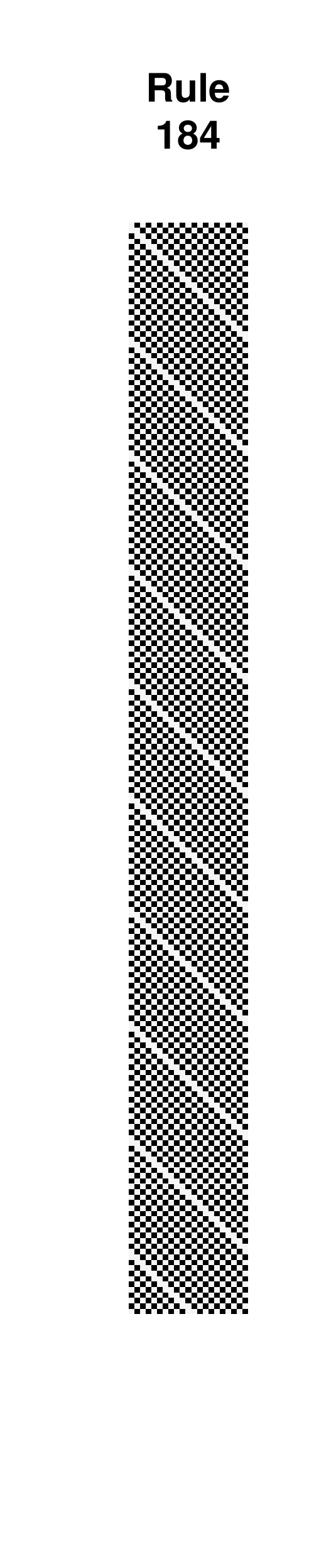}\hspace{-1.5cm}
		\vspace{-2.5cm}
	\caption{Space-time diagram (evolution for $t=200$ steps) of several elementary cellular automata without agent (or with an agent with a policy that always does actions of the form $\left\langle -, \text{\tt keep} \right\rangle$, as the empty policy). The initial array (seed) is always 010101010101010101010, whose length is 21 bits. }
	\label{fig:manyrules}
	\vspace{-1cm}
\end{figure}

Things change when we incorporate an active agent in the environment. Figure \ref{fig:manypolicies} shows how the environment with elementary cellular automaton number 110 varies for several agent policies. The resulting matrix patterns are different. Similar things (where differences are more visible) happen with rule number 164 (Figure \ref{fig:manypolicies2}).

\begin{figure}
	\centering
		\vspace{-1cm}
		\hspace{-1.5cm} 
		\includegraphics[width=0.22\textwidth]{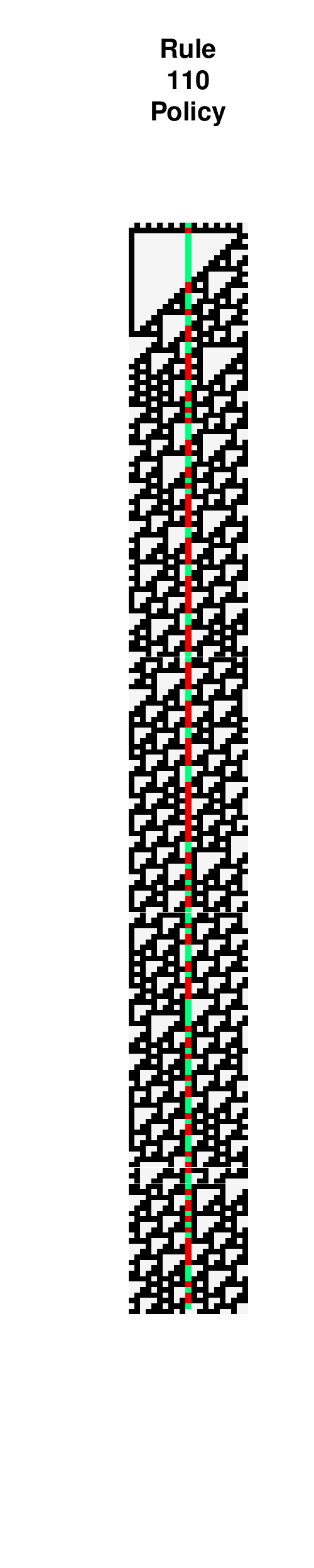}\hspace{-1.5cm}
		\includegraphics[width=0.22\textwidth]{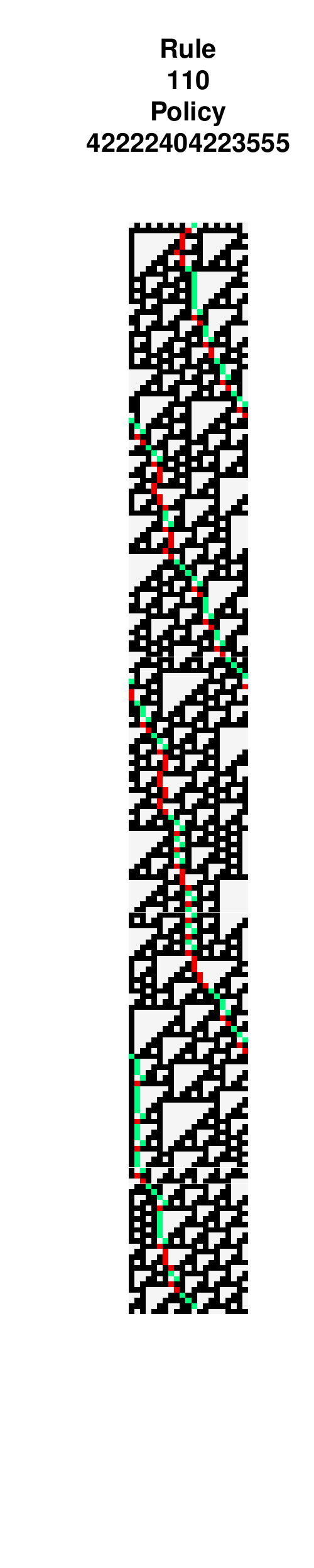}\hspace{-1.5cm}
		\includegraphics[width=0.22\textwidth]{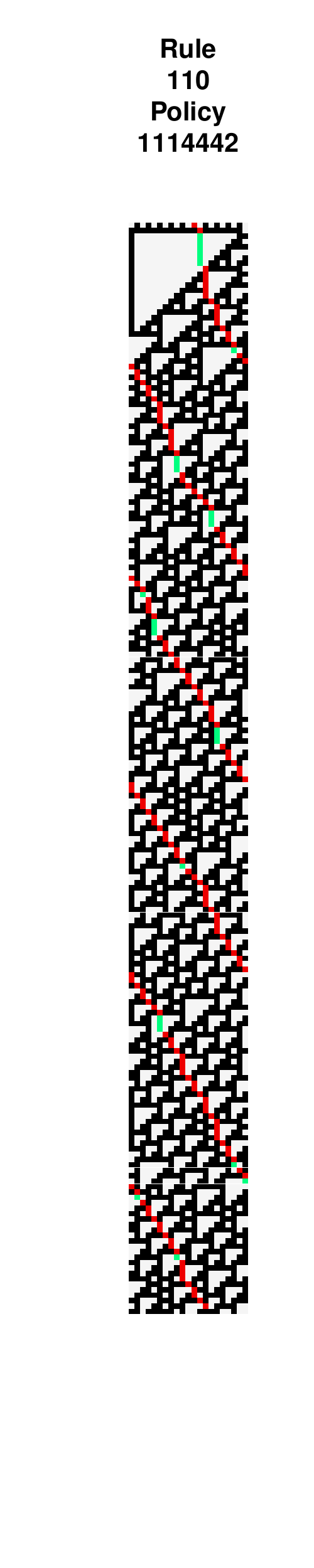}\hspace{-1.5cm}
		\includegraphics[width=0.22\textwidth]{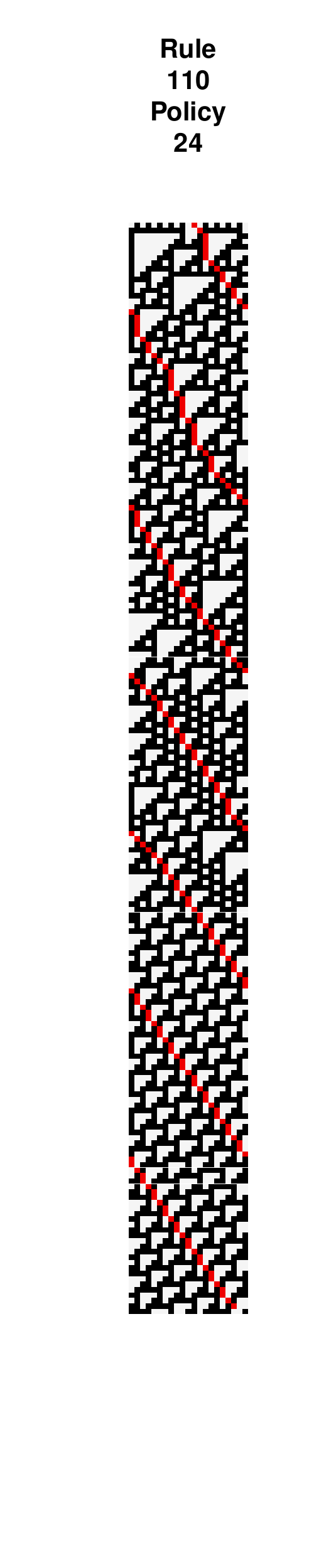}\hspace{-1.5cm}
		\includegraphics[width=0.22\textwidth]{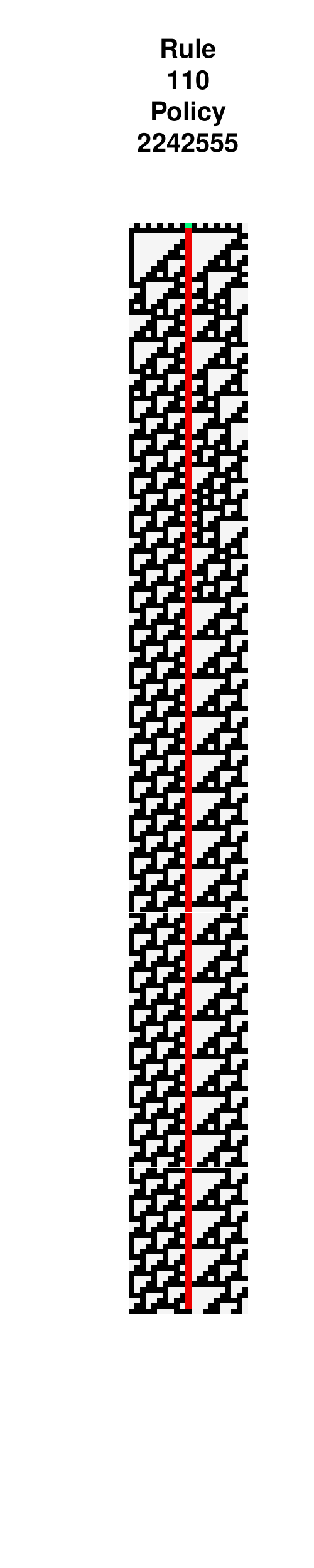}\hspace{-1.5cm}
		\includegraphics[width=0.22\textwidth]{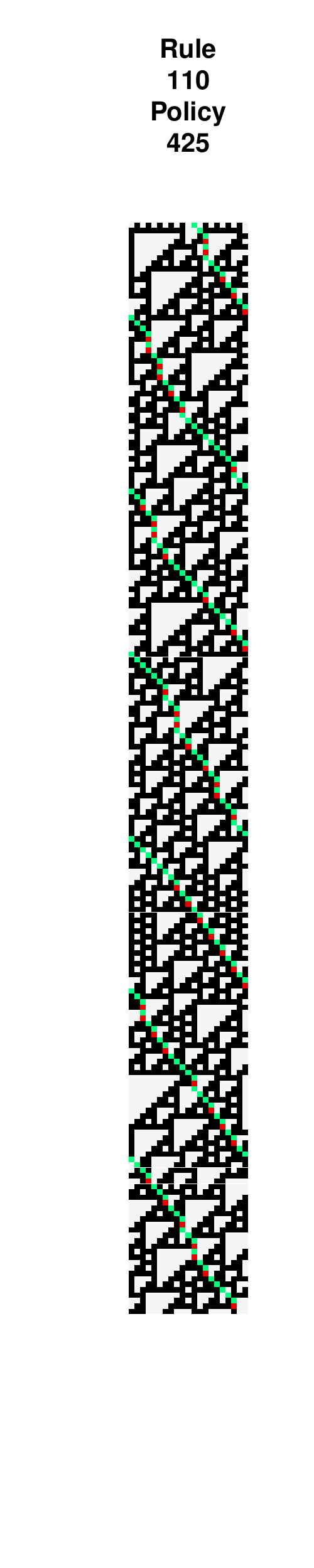}\hspace{-1.5cm}
		\includegraphics[width=0.22\textwidth]{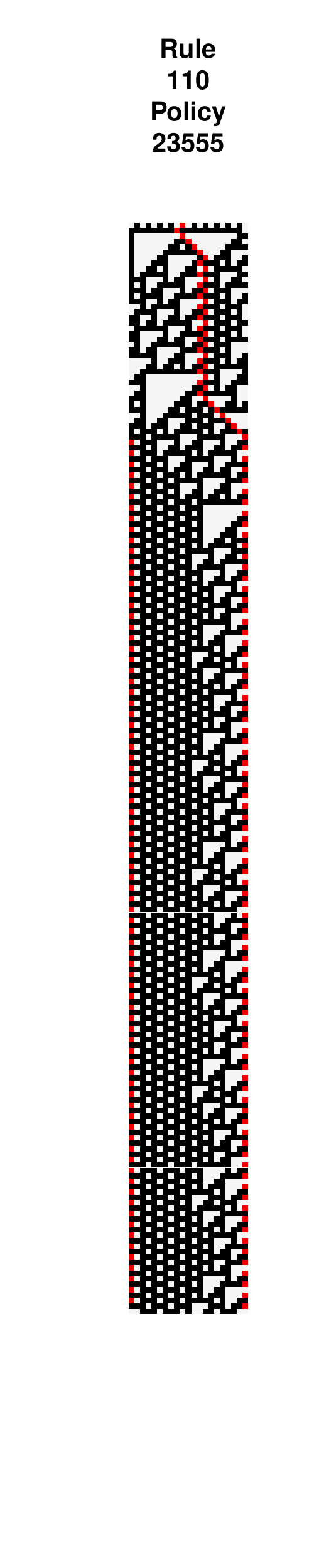}\hspace{-1.5cm}
		\includegraphics[width=0.22\textwidth]{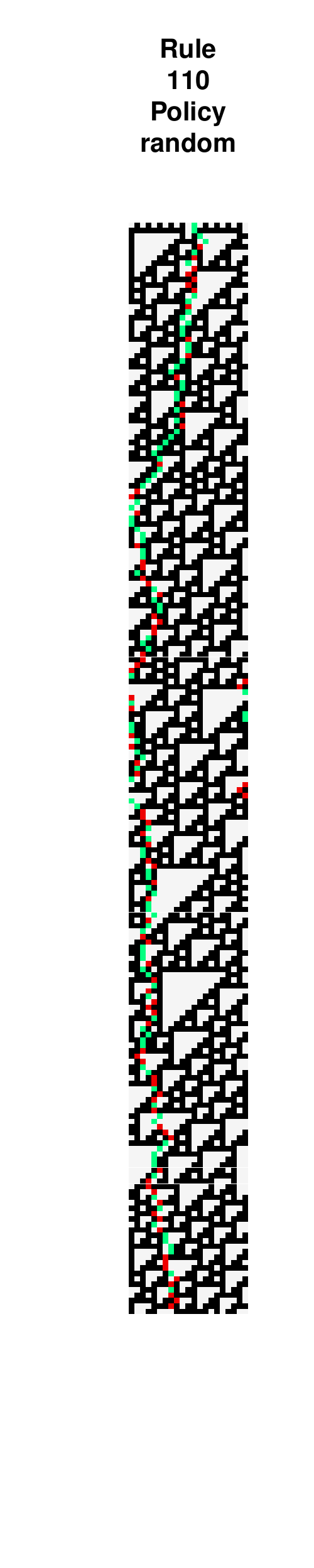}
		\hspace{-1cm}
		\vspace{-2.5cm}
	\caption{Space-time diagram (evolution for $t=200$ steps) with different agent policies for the elementary cellular automata with rule 110. The initial array (seed) is always 010101010101010101010, whose length is 21 bits. The agent is represented by a red dot when the cell has a 0 (like white ones) and by a green dot when the cell has a 1 (like the black ones). The leftmost diagram is the empty policy ($\left\langle \text{\tt stay}, {\tt keep} \right\rangle$) and the rightmost one is a random-walk policy (it cannot be described with {\sffamily APL}). }
	\label{fig:manypolicies}
	\vspace{-1cm}
\end{figure}

\begin{figure}
	\centering
		\vspace{-1cm}
		\hspace{-1.5cm} 
	  \includegraphics[width=0.22\textwidth]{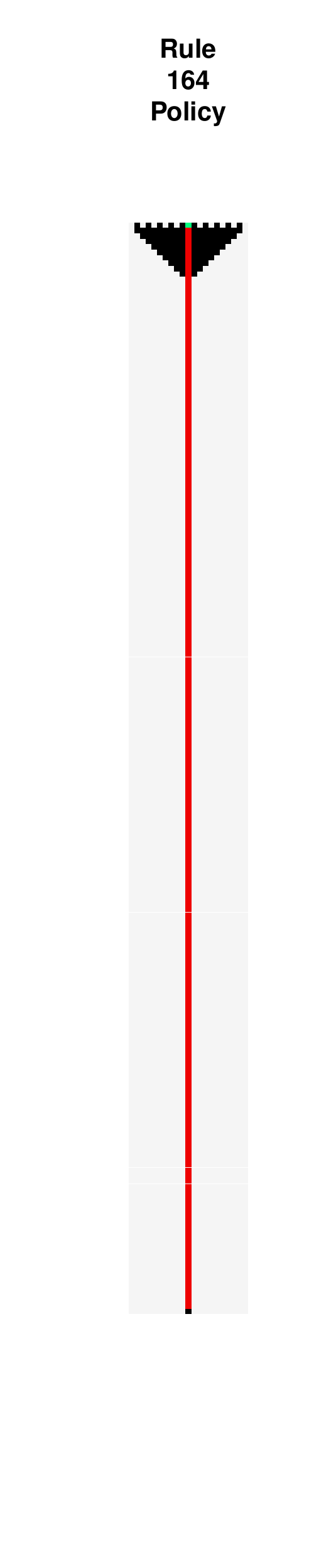}\hspace{-1.5cm}
		\includegraphics[width=0.22\textwidth]{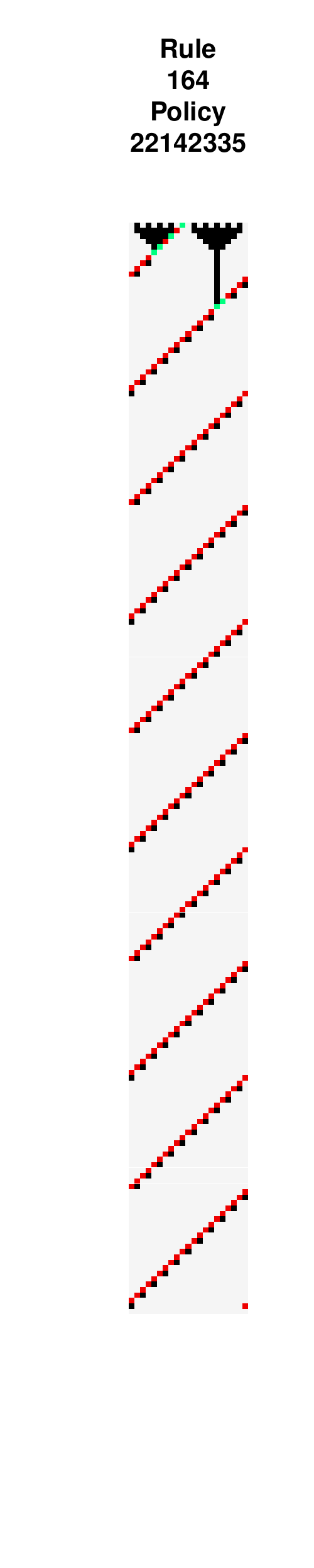}\hspace{-1.5cm}
    \includegraphics[width=0.22\textwidth]{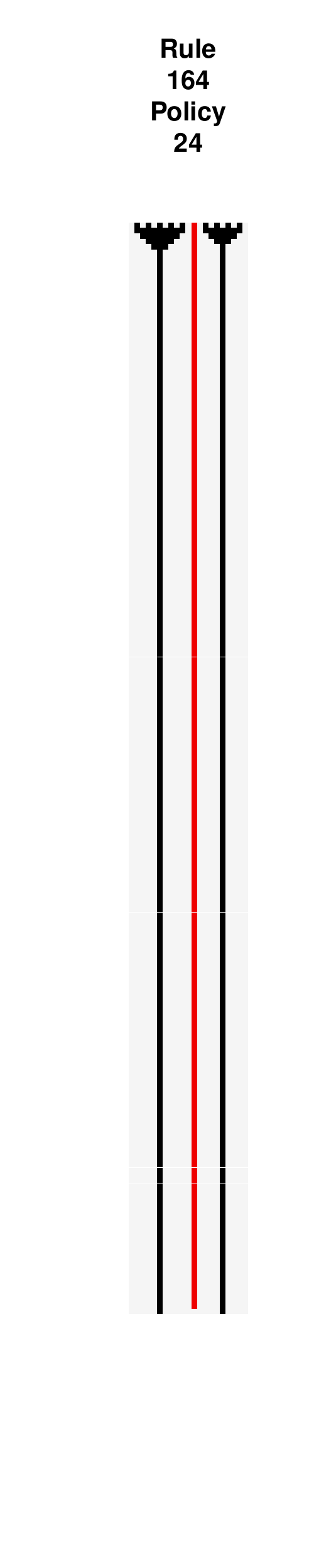}\hspace{-1.5cm}
		\includegraphics[width=0.22\textwidth]{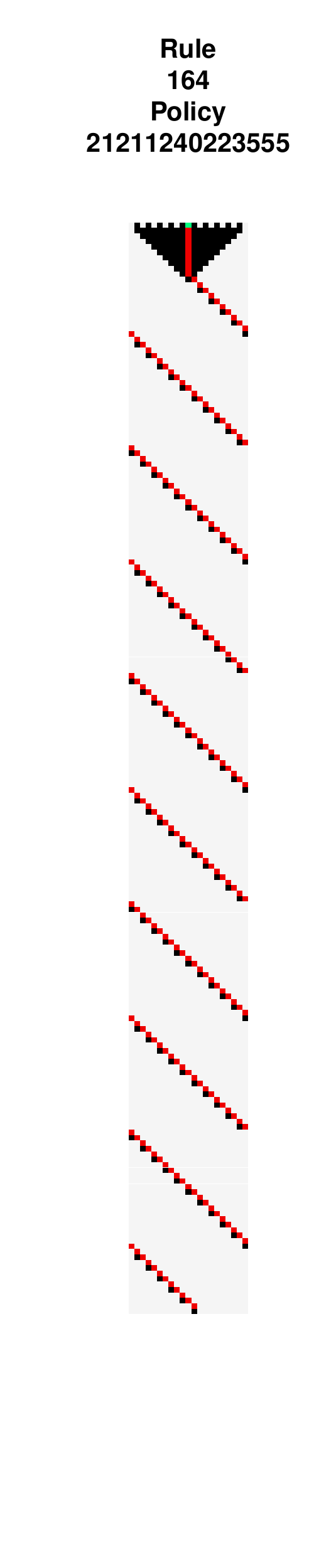}\hspace{-1.5cm}
    \includegraphics[width=0.22\textwidth]{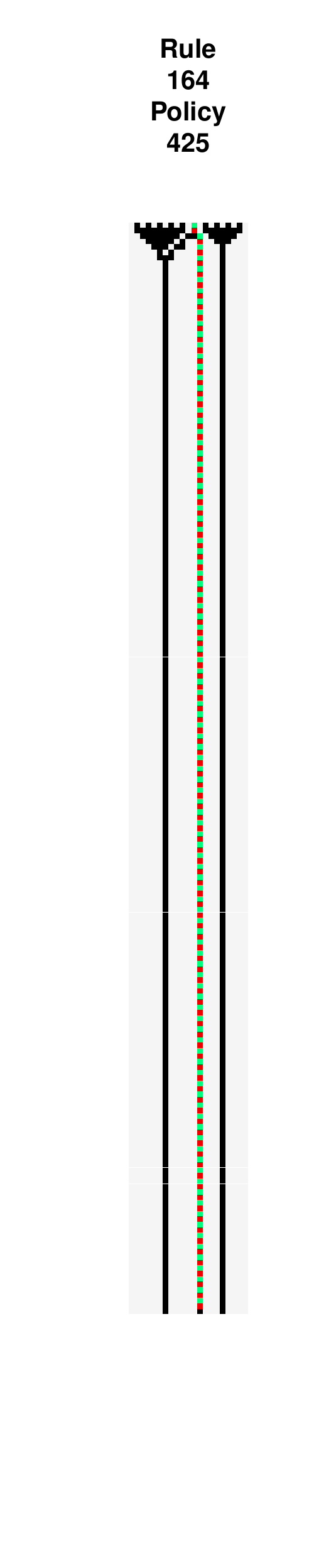}\hspace{-1.5cm}
		\includegraphics[width=0.22\textwidth]{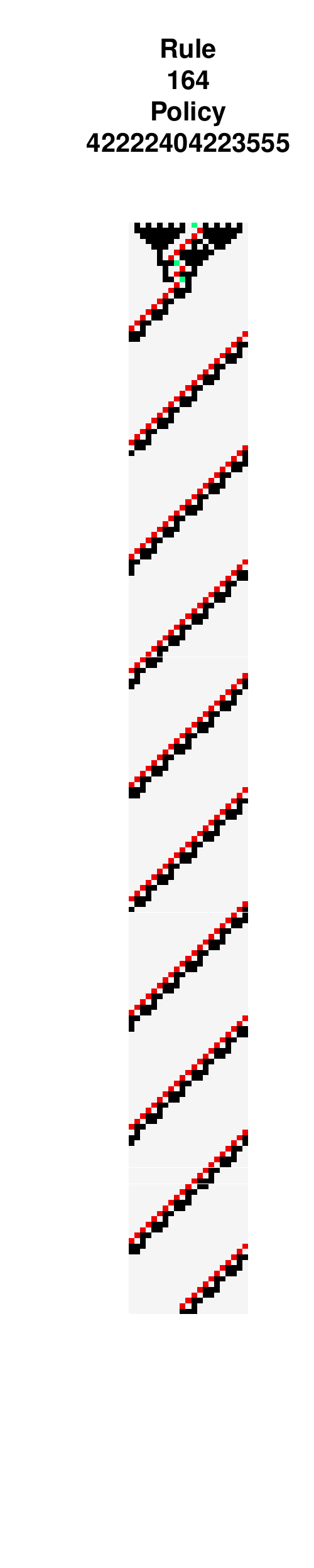}\hspace{-1.5cm}
		\includegraphics[width=0.22\textwidth]{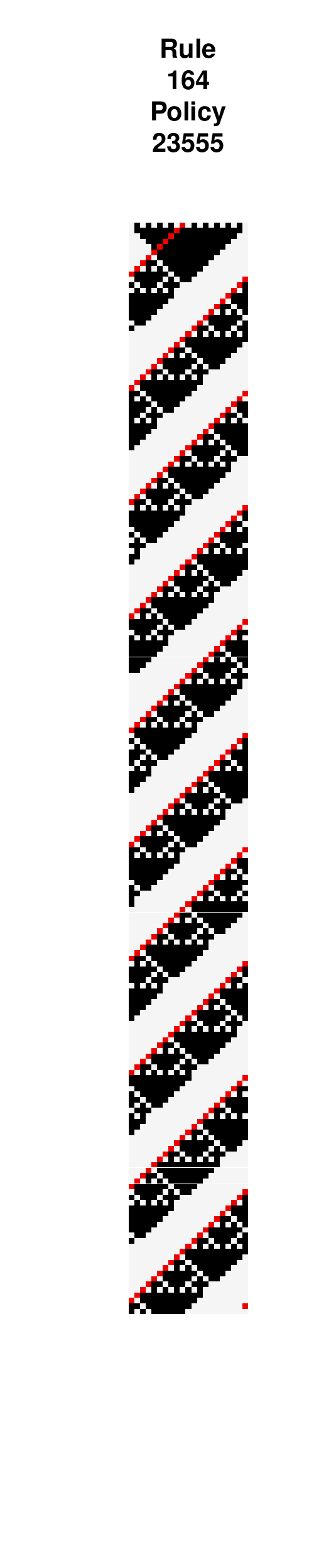}\hspace{-1.5cm}
		\includegraphics[width=0.22\textwidth]{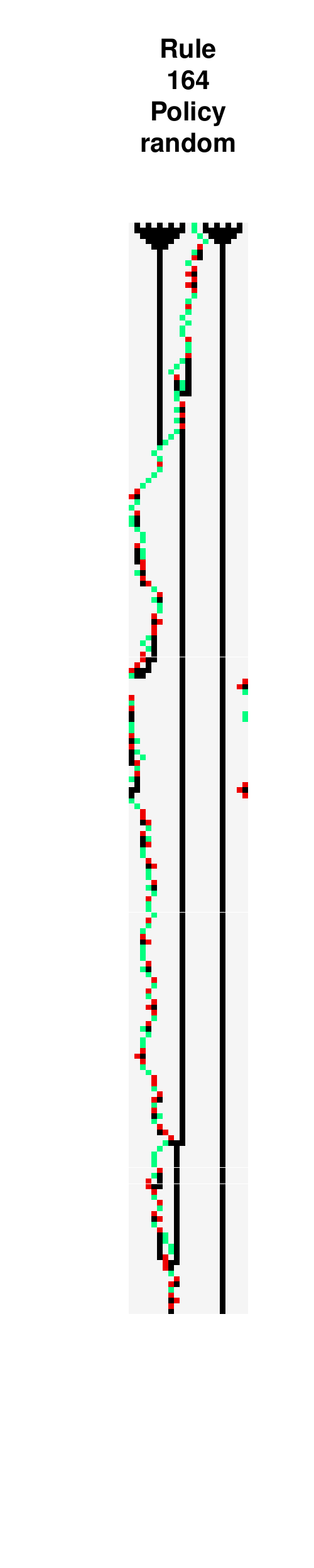}
		\hspace{-1cm}
		\vspace{-2.5cm}
	\caption{Same as Figure \ref{fig:manypolicies}, with rule 164 and other policies. }
	\label{fig:manypolicies2}
	\vspace{-1cm}	
\end{figure}

\subsection{Experimental setting}\label{sec:strategies}

And now let us see how the plots, indicators and curve introduced in section \ref{sec:distribution} can be applied to this particular environment class. The idea is to evaluate a population of policies $\Omega$ against several environments. Of course, in order to make this evaluation finite, we set a fixed number of steps $t$. This is meant to resemble a single agent trying different behaviours until it finds the optimal one. As usual, an agent trying to learn the optimal policy will do a mixture of exploration, exploitation and also `mind practising'. By `mind practising' we mean that an agent can construct a model of the environment and try some policies on that model `mentally' without really implementing them into real actions. Taking into account this variety of strategies for a learning agent, we may think in at least four possible ways of evaluating policies in an environment:

\begin{enumerate}
\item {\em Fresh}: We could think that whenever an agent tries a new policy in $\Omega$, it starts as if the environment were brand new, so the seed configuration is reinitialised for each policy. This is only realistic in environments which are ergodic or where bad actions do not lead to local heavens or hells. On the contrary, this option would be completely unrealistic if we just try a couple of policies and fall into a trap, making it impossible to try other policies.
\item {\em Random walk}: We start with a brand new environment each time, but we execute a sequence of random actions before trying each policy. In other words, we calculate $R^t$ from the environment after a random walk of $t'$ steps. 
\item {\em Chained}: given a set of policies $\Omega$ we want to consider, we just choose a random order and evaluate them sequentially, without resetting the environment to its initial state. We calculate $R^t$ for each policy as usual.
\item {\em Levin search}: same as above ({\em Chained policies}) but the policies are ordered ascendingly by the length of their description. 
\end{enumerate}

\noindent The three last strategies are able to `detect' when the environment is `spoilt' by a bad policy and set into an unresponsive state where rewards cannot be modified any more. If this is the case, this will be shown in the distribution plots. For instance, in some ECAs, a previous policy may leave the configuration without 1s and the environment will not be able to easily recover from there. 

The two last strategies can possibly use their memory and discard some policies according to their models, without actually trying them, as if their weight (their a priori policy probability $w$ as for definition \ref{def:polprob}). Although we think that the first strategy is unrealistic, we have done experiments with the four of them. In what follows, we only show results with the {\em Fresh} and the {\em Chained} strategies, as these are more representative. In both cases, we interweave a 5\% of random-walk policies (note that random-walk policies are non-deterministic and cannot be expressed by the language  {\sf APL}). This random actions will also be useful to get information about any bias of the environment. Finally, in the case of the {\em Chained} strategy, this percentage of random walks actually makes the experiment a hybrid between {\em Random walk} and {\em Chained}.

Once decided how the search for policies is modelled, we now describe more details about the experiments.
We set the number of steps to 300. The seed configuration is always ``010101010101010101010'' and the start-up agent position is $p_0= 11$.
The set of policies $\Omega$ contains 2,000 policies. As mentioned above, 100 are random-walk policies (these are interwoven regularly between the others in the {\em Chained} strategy). The remaining 1,900 policies are randomly generated using a uniform distribution (since $\Omega$ is finite we can do this for $w$ in definition \ref{def:polprob}) for the size of the {\sf APL} program (between 1 and 20 instructions), so making up about 95 policies each. Given the size of the program, we choose each instruction with a uniform distribution among the six instructions in {\sf APL}.
Except for small lengths, policies are rarely equal, but they can be {\em equivalent}. For instance, a {\tt back} instruction followed by a {\tt forward} instruction cancel, many instructions at the end are useless if there is no
{\tt Vaddm},  {\tt Vadd1},  {\tt Uaddm},  {\tt Uadd1} after them, as well as some other simplifications. Because of this, we built a simplifier that transforms programs (when possible) into shorter ones. The simplifier is not exhaustive and some programs may have shorter equivalent version that the simplifier is not able to detect. Nonetheless, we estimate that it covers most cases. The application of the simplifier makes that the frequencies of programs of short size is slightly increased and the number of programs of high size (e.g., $>15$) is reduced considerably. This phenomenon has to be taken into account when looking at the plots and results.

Finally, in order to approximate the Kolmogorov complexity for each policy, we applied a compression algorithm to all the programs. In particular, we used a common approach \cite{Zenil2010,AGI2011Evaluating}: we coded the program as a character string and compressed it (using the $memCompress$ function in R \cite{Rproject}, a GNU project implementation of Lempel-Ziv coding). As a normalisation, from the resulting complexity we subtracted the length of the compressed size of an empty string. This makes the complexities range between $\kmin=0$ and $\kmax=20$. Given that Lempel-Ziv works with bytes and {\sf APL} has only 6 instructions, the complexity has to be understood as being multiplied by $\log_2(6)= 2.58$ and not by $\log_2(256) = 8$. In any case, this factor is the same for all policies so it is irrelevant for our study. The important thing is that the joint use of the simplification and the compression processes can give a relatively acceptable (and efficient) approximation of Kolmogorov complexity for these short strings.

\subsection{Analysis of the distribution}

As we have mentioned above, we performed experiments using a {\em Fresh} strategy and a {\em Chained} strategy, with a 5\% of random-walk policies. Figure \ref{fig:dist} shows the distributions $R[k]$ for several environments (rules 184, 110, 122, 164) using the {\em Fresh} strategy. In the Figure, each small circle (in grey) shows a policy. The box plots and the envelopes are shown for the accumulated values ($[\leq k]$). We show complexity ($k$) on the \xaxis and aggregated reward ($R$) on the \yaxis. As mentioned above, there is a higher concentration of values of $k$ between 5 and 10, because policies are compressed from a uniform distribution between 0 and 20. 

We see that the plots are very different in terms of average, range and evolution.
The environment with rule 110 evolves very quickly for the maximum envelope, while rule 122 has a slower trend. The environment with rule 184 has very wide range and high dispersion. On the contrary, rule 164 has a small range (especially for rewards below the average). And there are also very important differences in terms of average aggregated reward ($\Rmean$) as well. 
Some environments show some clusters. For instance, the environment with rule 184 shows a cluster of values around $R=0.95$, as well as at $0.7$, at $0.5$, $0.3$, and $0.05$. There are some areas that are almost empty, even for high values of $k$, such as the range between $0.8$ and $0.9$.
Finally, the results for the random-walk policies (shown on the left of the plots) are usually very compact, showing that an analysis using random actions (instead of policies) is not able to show the structure of the problem.

\begin{figure}
	\centering
		\vspace{-1.0cm}
		\hspace{-0.3cm} 
\includegraphics[width=0.5\textwidth]{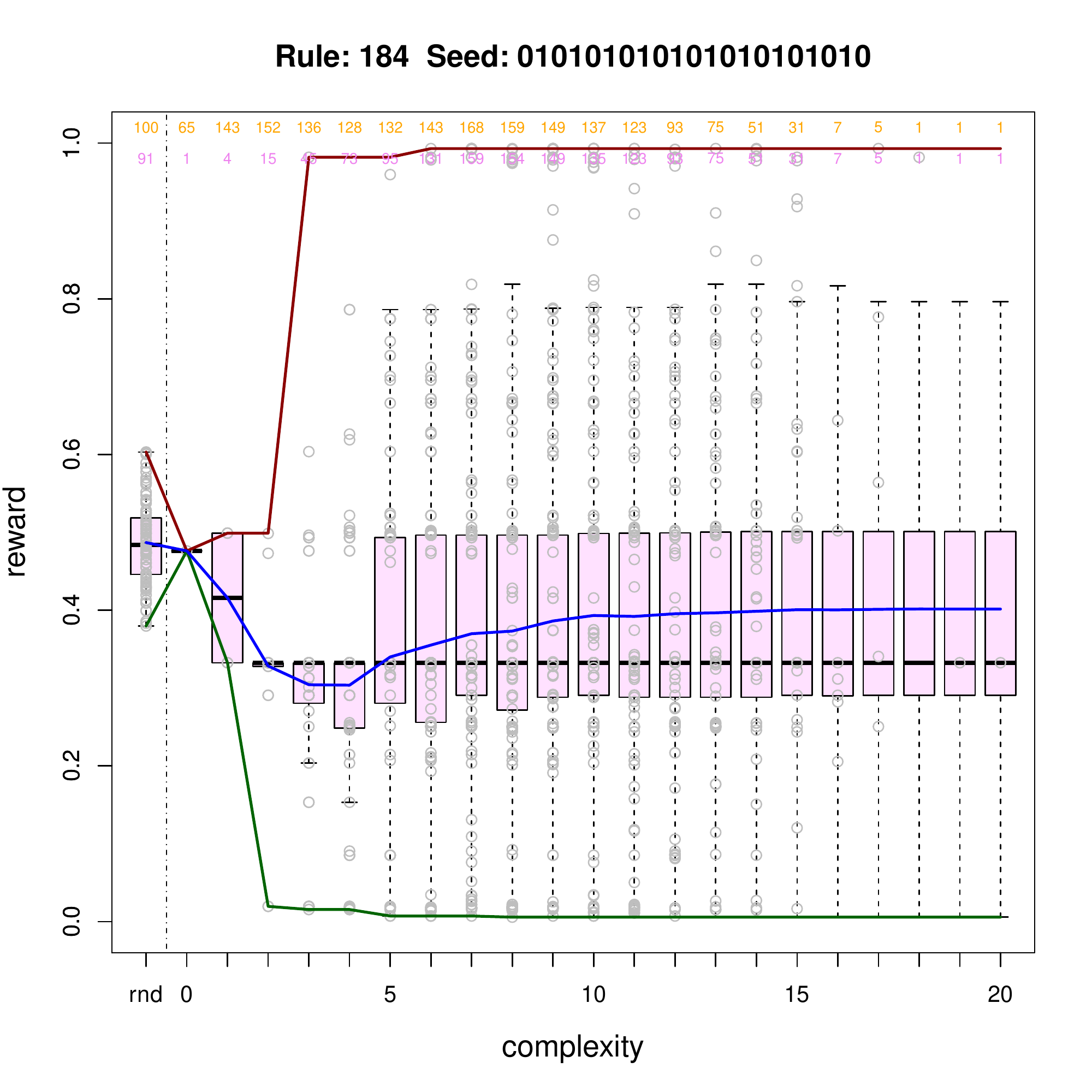}
\includegraphics[width=0.5\textwidth]{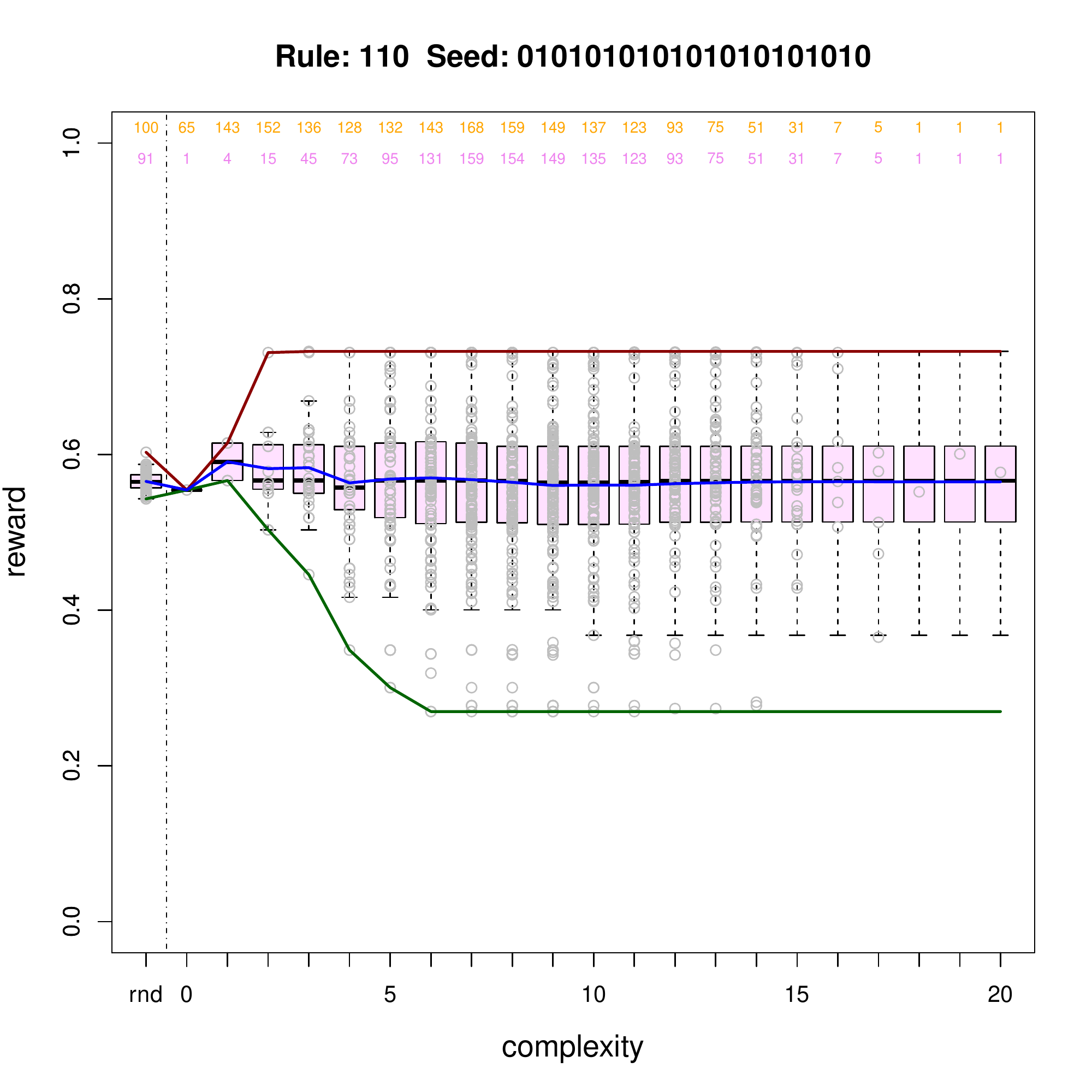} \\
		\hspace{-0.3cm} 
\includegraphics[width=0.5\textwidth]{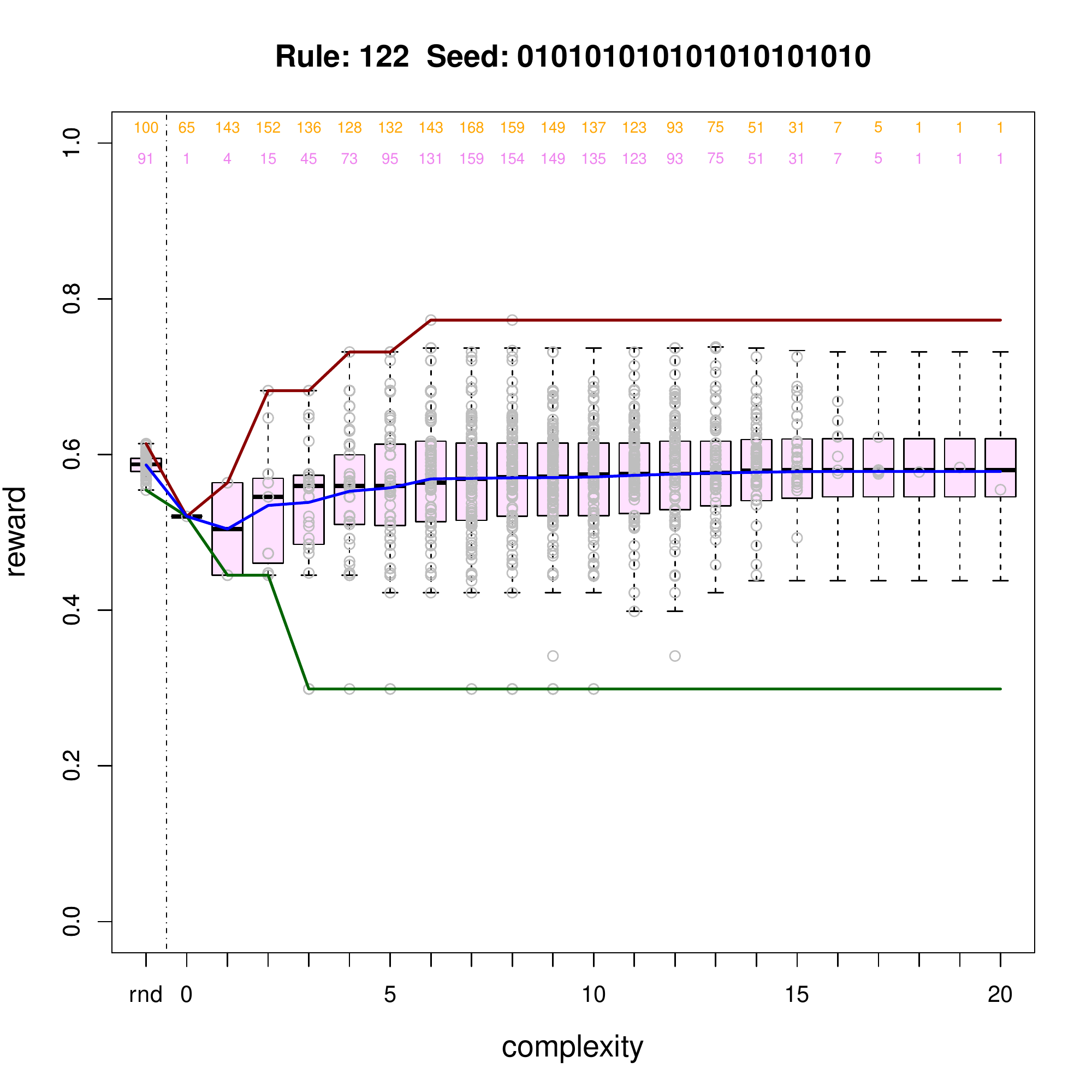}
		\includegraphics[width=0.5\textwidth]{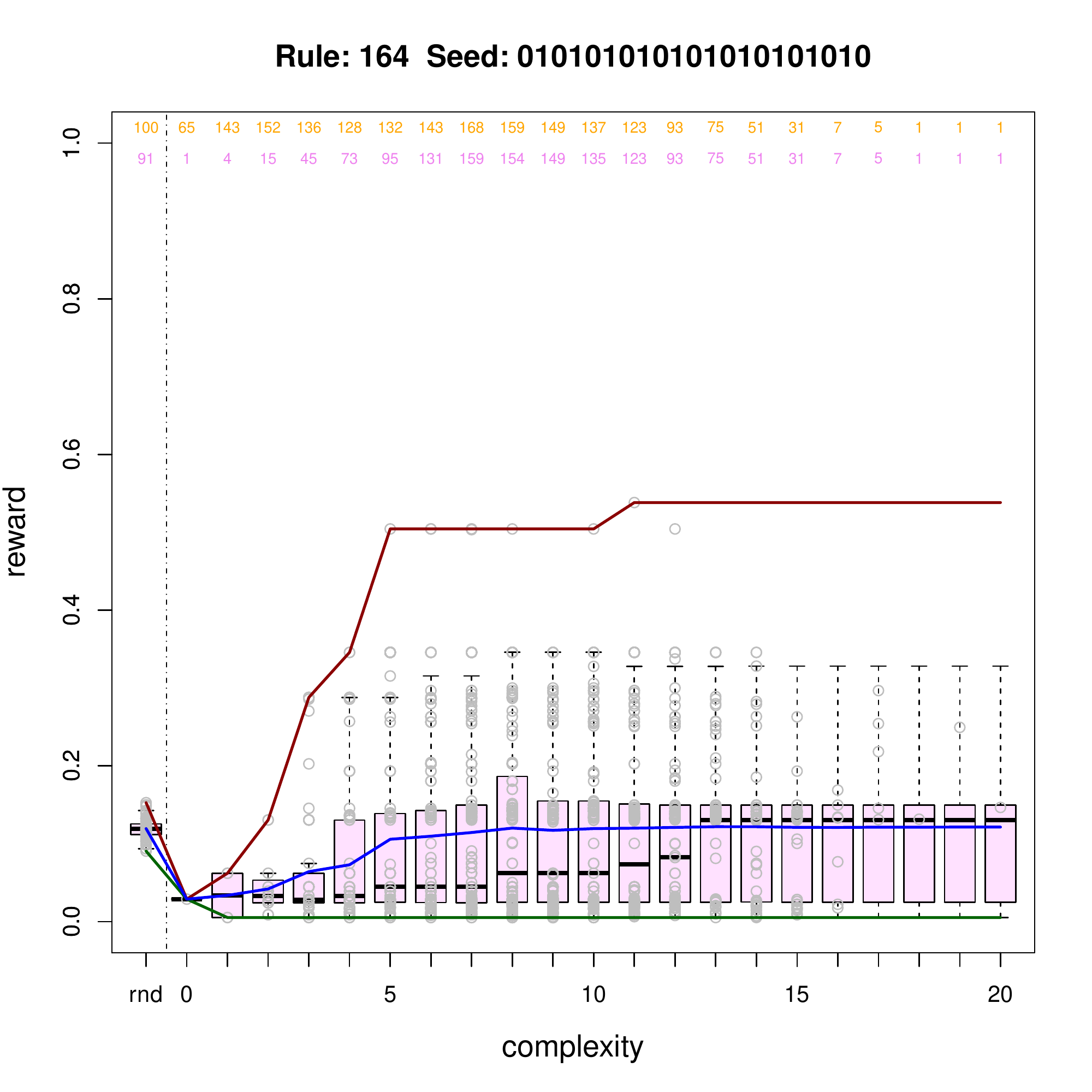}
		\hspace{-0.4cm}
		\vspace{-0.4cm}
	\caption{We show the distributions $R[k]$ for several environments (rules 184, 110, 122, 164) according to the complexity of the policy. We generate 2,000 agents (100 of them are random) and for each of them the environment runs during 300 iterations ($t=300$), starting from scratch for each agent ({\em Fresh} strategy), using the same seed configuration all the time: `010101010101010101010'. Some generated agents are equivalent. This is shown in the top part of the plots: in orange we have the number of agents of that complexity that we generated, in pink just counting the ones that are different. Note that this is just for information, the policies are not actually removed, but considered as many times as they appear (to preserve the original distribution). On the left of the plots (indicated on the \xaxis with ``rnd'') we also show the result for 100 random-walk policies.  For each value of $k$ (\xaxis) we show the aggregated reward results (\yaxis) for each policy (in grey, as small circles) and a summarised representation of the distribution using the accumulated values ($[\leq k]$) for the box (and-whiskers) plots and the envelopes (hence their monotonic appearance). Each box shows the lower and upper quartiles, the band showing the median and the ends of the whiskers showing 1.5 of the interquartile range (if this does not exceed the maximum and minimum). The dark red and dark green lines show the maximum and minimum envelopes and the blue line shows the average. Each environment shows a different figure, which provides insight about the difficulty and discriminating power of each environment. In all cases, note the low dispersion of the random-walk policy.}
	\label{fig:dist}
	\vspace{-0.5cm}
\end{figure}

The values for $\Rmax$ for the four environments (184, 110, 122, 164) are 0.99, 0.73, 0.77 and 0.54 
and the smallest complexities to reach these values ($\Hpolicy$) are 6, 3, 6 and 11. 
The  $\Rmean$ values are 0.40, 0.56, 0.58 and 0.12 respectively, 
and the $\Rmin$ values are 0.01, 0.27, 0.30 and 0.01 respectively.
The (Spearman) correlations $Cor_{\pi \in \Omega}(K(\pi),R(\pi))$ are 0.12, 0.03, 0.17 and 0.09 respectively\footnote{This small bias may be originated by some instructions setting 1s coming before the instructions setting 0s in the {\sffamily APL} instruction set.}, and the (Spearman) correlations 
$Cor_{i=1..\kmax}(i, \Rmax[\leq i])$ are  0.84, 0.70, 0.85 and 0.94 respectively.
If we take a look at normalisation, we see that the average results for random-walk policies are 0.49, 0.57, 0.59 and 0.12, which are very similar to the $\Rmean$ values except for environment 184. This suggests that environments can be normalised to have 0 expected aggregated reward for a random-walk agent (random actions), as we did with the notion of `balanced' environments in \cite[sec. 4.2]{HernandezOralloDowe2010,HernandezOrallo10b}, or we can do this to get 0 for a random policy, as we will do here, leading to different results.

Now we analyse similar experiments with the {\em Chained} strategy in Figure \ref{fig:disthist}. The shapes of the distributions are similar to the previous case, but there are some differences.
For instance, the environment with rule 164 is very sensitive to previous policies, suggesting that many policies lead to a low number of 1s from where it is difficult to recover. In fact, the existence of non-recoverable configurations full of 0s cannot be ruled out. In fact, it is symptomatic that we see a higher average performance with random-walk agents (action-random) than with a set of random policies.

\begin{figure}
	\centering
		\vspace{-1cm}
		\hspace{-0.3cm} 
		\includegraphics[width=0.5\textwidth]{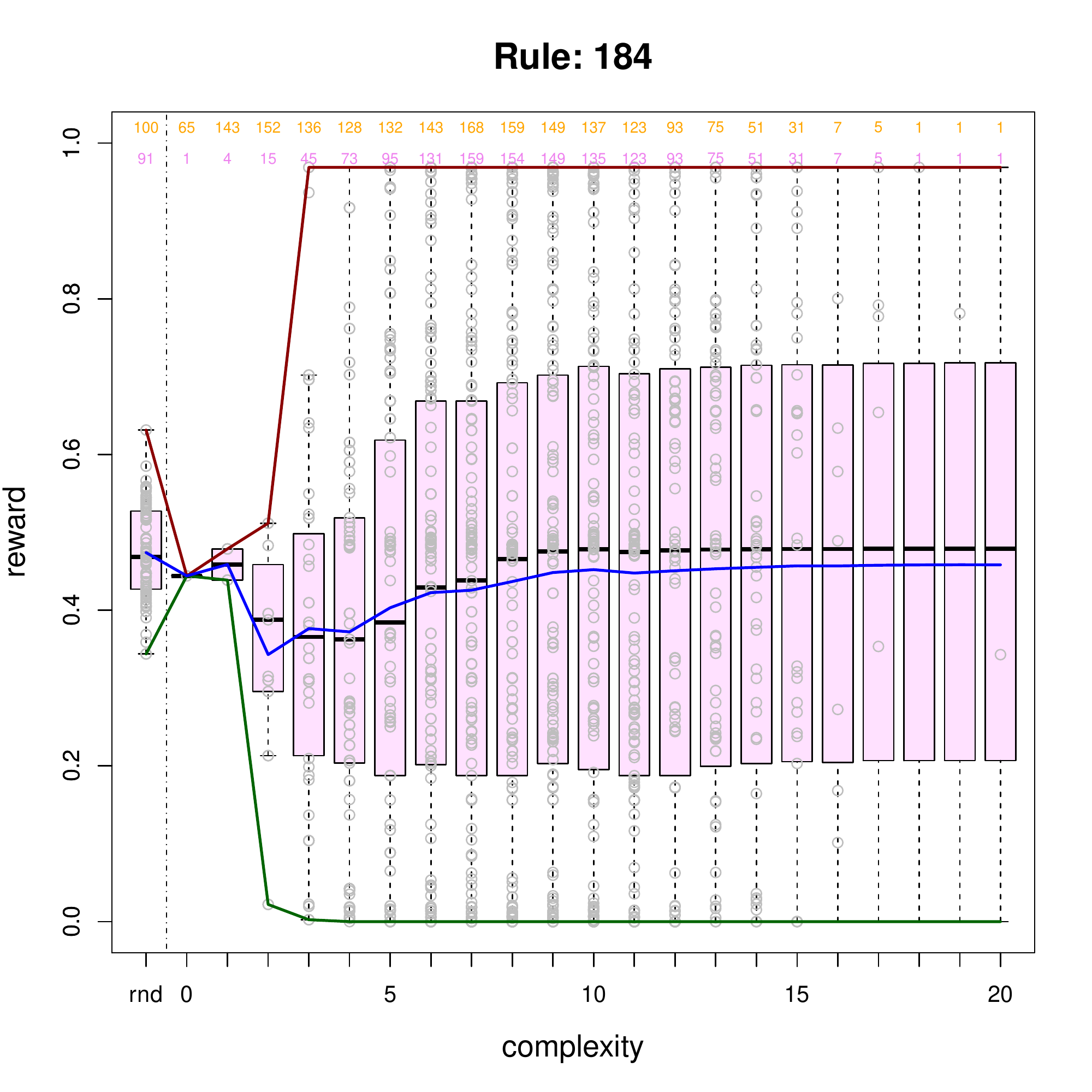}
		\includegraphics[width=0.5\textwidth]{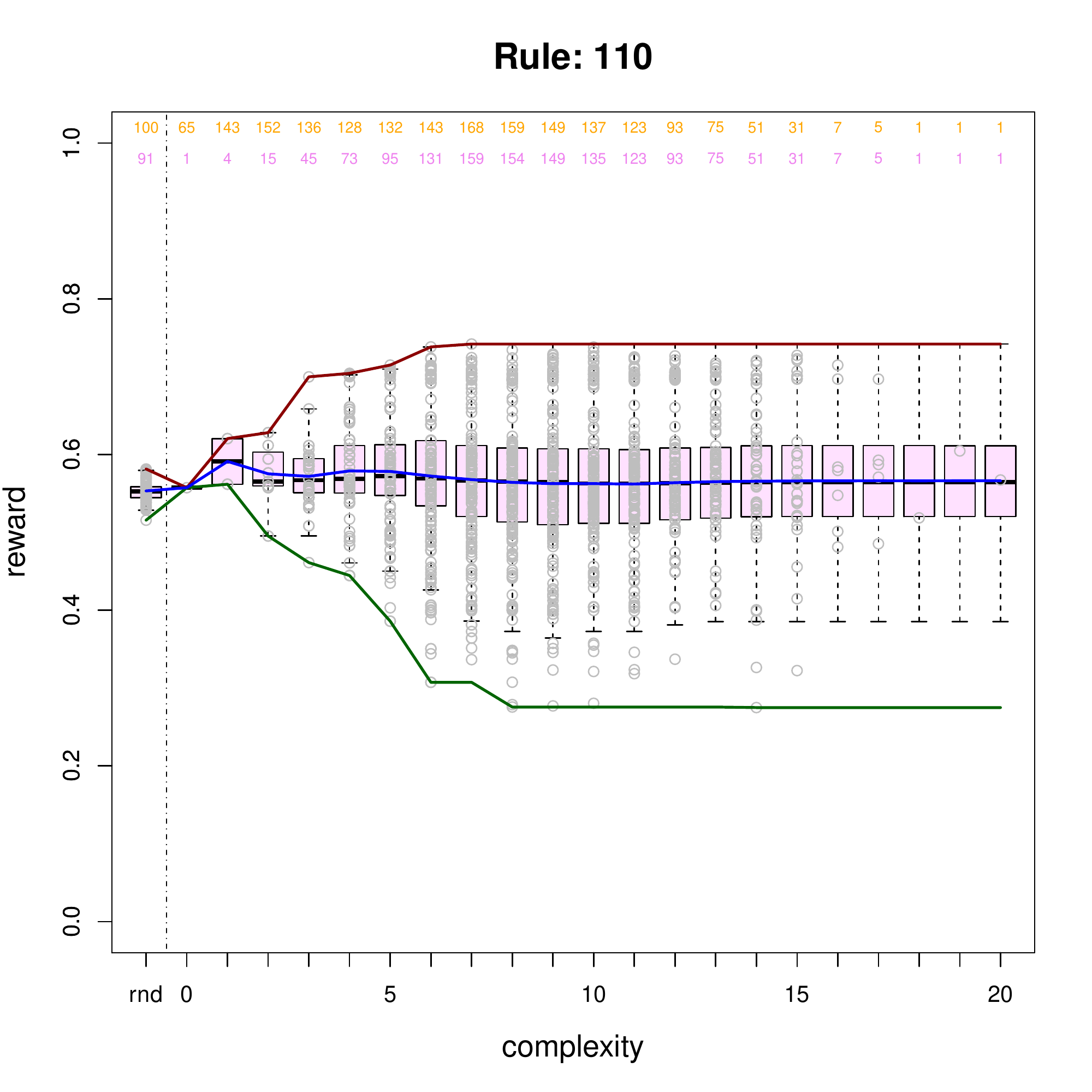} \\
		\hspace{-0.3cm} 
	  \includegraphics[width=0.5\textwidth]{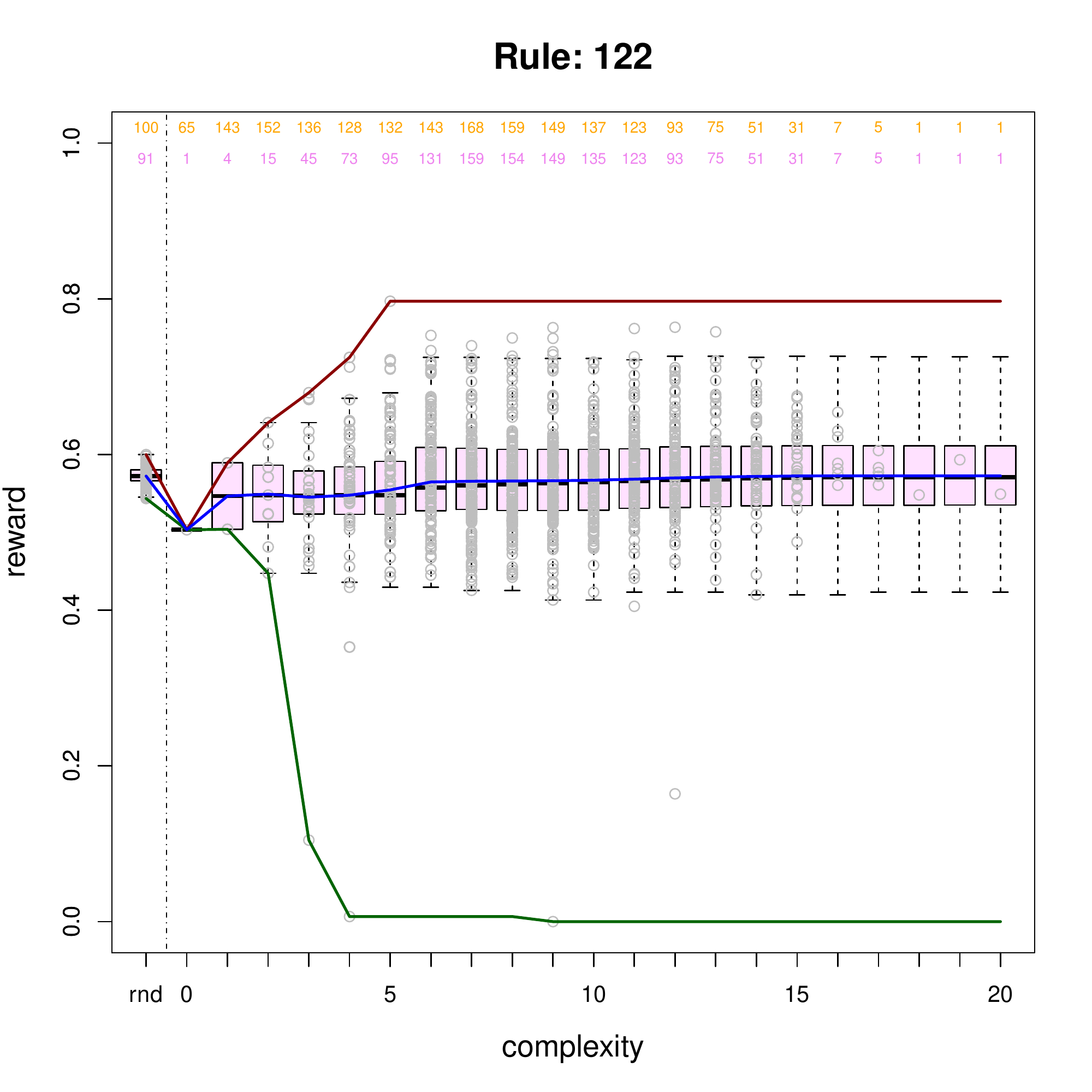}
		\includegraphics[width=0.5\textwidth]{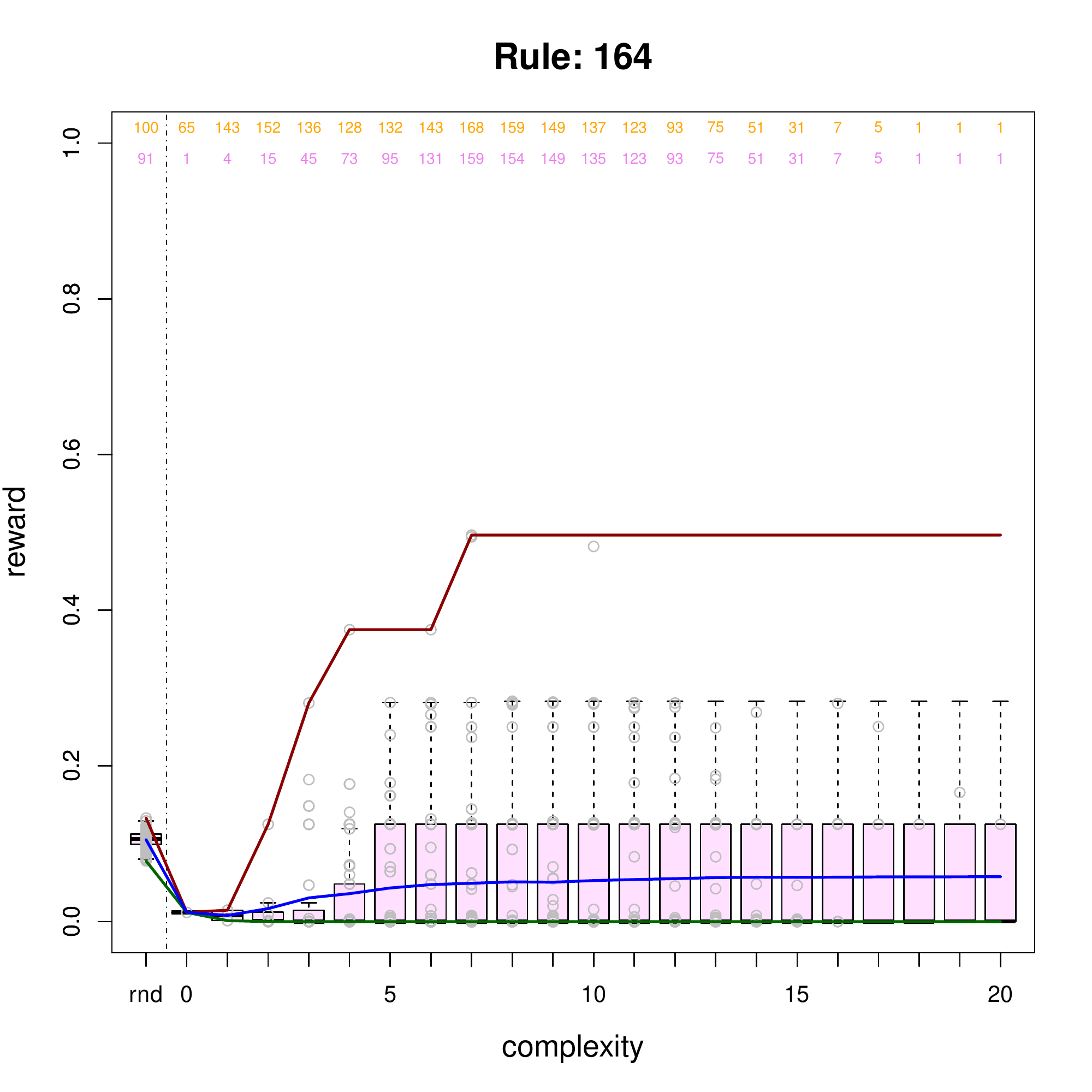}
		\hspace{-0.4cm}
		\vspace{-0.4cm}
	\caption{Same as Figure \ref{fig:dist} but using the {\em Chained} strategy for the experiments.}
	\label{fig:disthist}
	\vspace{-0.5cm}
\end{figure}

We now include some indicators for Figure \ref{fig:disthist} as well, shown in Table \ref{tab:size}. 
While the values of $Rmax$ are similar to the ones for the {\em Fresh} strategy, the $\Hpolicy$ values are very different, confirming that this value is not robust to small changes (in the sample or the strategy).
We see some relevant changes with respect to the {\em Fresh} strategy case in $\Rmin$, which again indicates that some policies may lead to malevolent environment states from where it is difficult to recover.

\begin{table}
	\centering\scriptsize
		\vspace{-0.3cm}
		\begin{tabular}{ c c c c c c c c}
		 Rule & $|\Omega|$ & $\Rmax$  & $\Rmean$ & $\Rmin$ & $\Hpolicy$ & $Cor_{\pi \in \Omega}(K(\pi),R(\pi))$ & $Cor_{i=1..\kmax}(i, \Rmax[\leq i])$ \\ \hline
		 184 & 1900 (1349) & 0.97 & 0.46 & 0 & 3 & 0.08 & 0.69 \\
		 184 & 9500 (5922) & 0.99 & 0.46 & 0 & 5 & 0.06 & 0.80 \\ \hline
		 110 & 1900 (1349) & 0.74 & 0.57 & 0.28 & 7 & 0.01 & 0.88 \\
		 110 & 9500 (5922) & 0.81 & 0.58 & 0.27 & 15 & -0.02 & 0.98 \\ \hline
		 122 & 1900 (1349) & 0.80 & 0.57 & 0 & 5 & 0.16 & 0.81 \\
		 122 & 9500 (5922) & 0.82 & 0.59 & 0 & 5 & 0.11 & 0.80 \\ \hline
		 164 & 1900 (1349) & 0.50 & 0.06 & 0 & 7 & 0.11 & 0.87 \\
		 164 & 9500 (5922) & 0.50 & 0.06 & 0 & 7 & 0.07 & 0.87 \\ \hline
		\end{tabular}
		\vspace{0.3cm}
	\caption{Comparison of indicators using the {\em Chained} strategy for two different sizes of the working sample $\Omega$ (in parentheses the number of policies after removing the repeated policies). Those for $|\Omega|= 1900$ correspond to Figure \ref{fig:disthist}. All correlations are rank (Spearman) correlations.}
	\label{tab:size}
		\vspace{-0.5cm}
\end{table}

We also did some experiments with a larger working sample $\Omega$ of 9,500 policies (10,000 minus the random walks). The results are also shown in Table \ref{tab:size}. As we can see, while there is a strong similarity in results for many indicators, some other indicators vary slightly, and one, $\Hpolicy$, is clearly not robust.

Finally, we show the whole distribution without slicing by complexity in Figure \ref{fig:histograms}. The narrow bars (usually higher and placed about in the middle) of the histograms show the 100 random-walk policies. The wide bars show the remaining 1,900 policies. We see that the histograms for the random-walk policies are peaked in the middle and with a normal-like shape. Apart from their location, random-walk policies do not provide too much information. However, this is not the case for the other policies. In the case of environments with rules 110 and 122 we see a normal-like shape, which suggests that, in their ranges, discriminating power is quite regular, with more policies around the average and very few on the extremes of the distribution. On the contrary, environments with rules 184 and 164 show a very different picture. Rule 184 has a high concentration of policies for aggregated rewards close to 0, 0.5 and 1, and valleys for the rest. This makes it difficult to understand whether the discriminating power is high or low. Also the notion of difficulty is blurred here, because it seems difficult to reach 0 and 1 because of their location, but then there are many policies with these scores. Rule 164 gives a very asymmetric picture, and many policies behave worse than random.

\begin{figure}
	\centering
		\vspace{-1cm}
		\hspace{-0.3cm} 
		\includegraphics[width=0.5\textwidth]{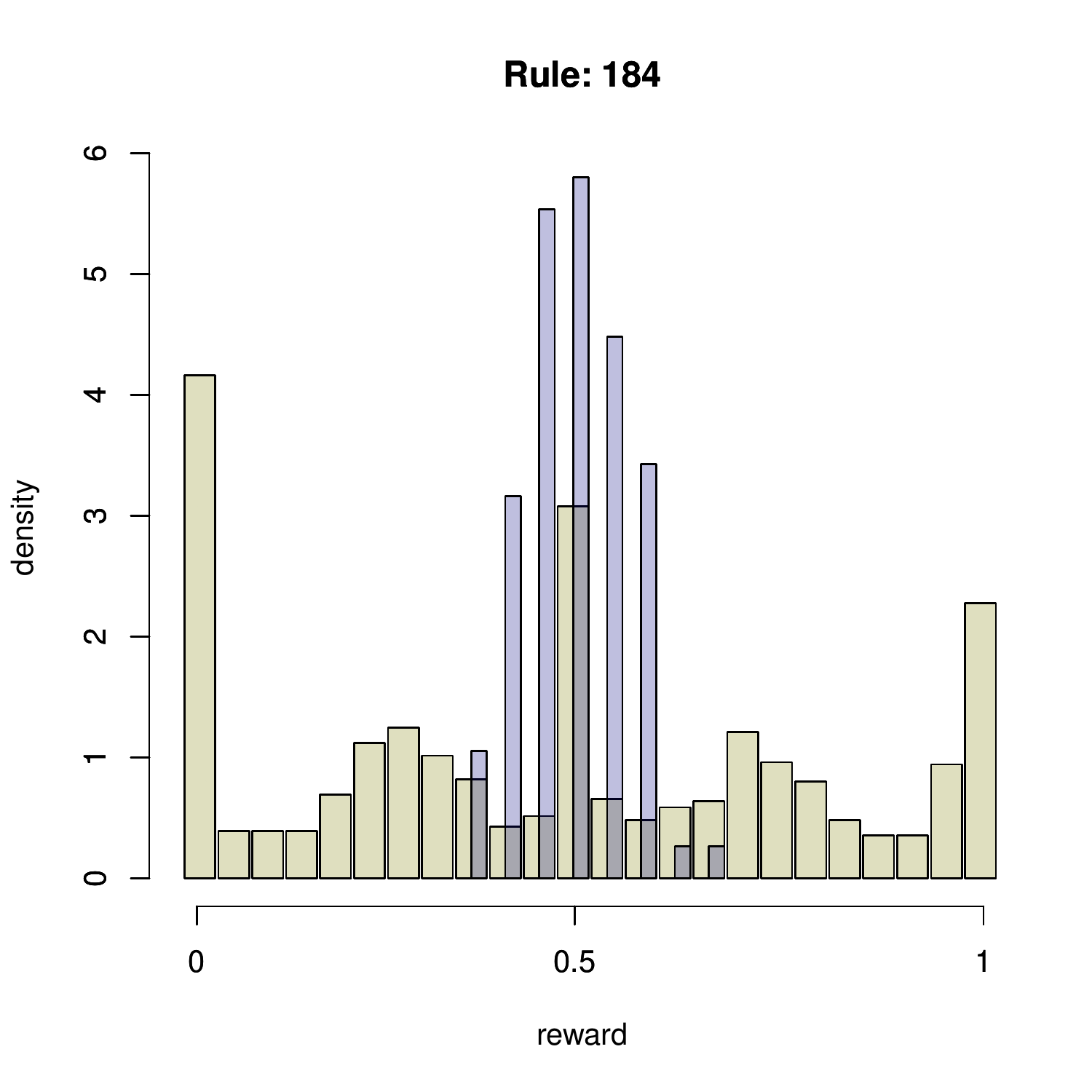}
		\includegraphics[width=0.5\textwidth]{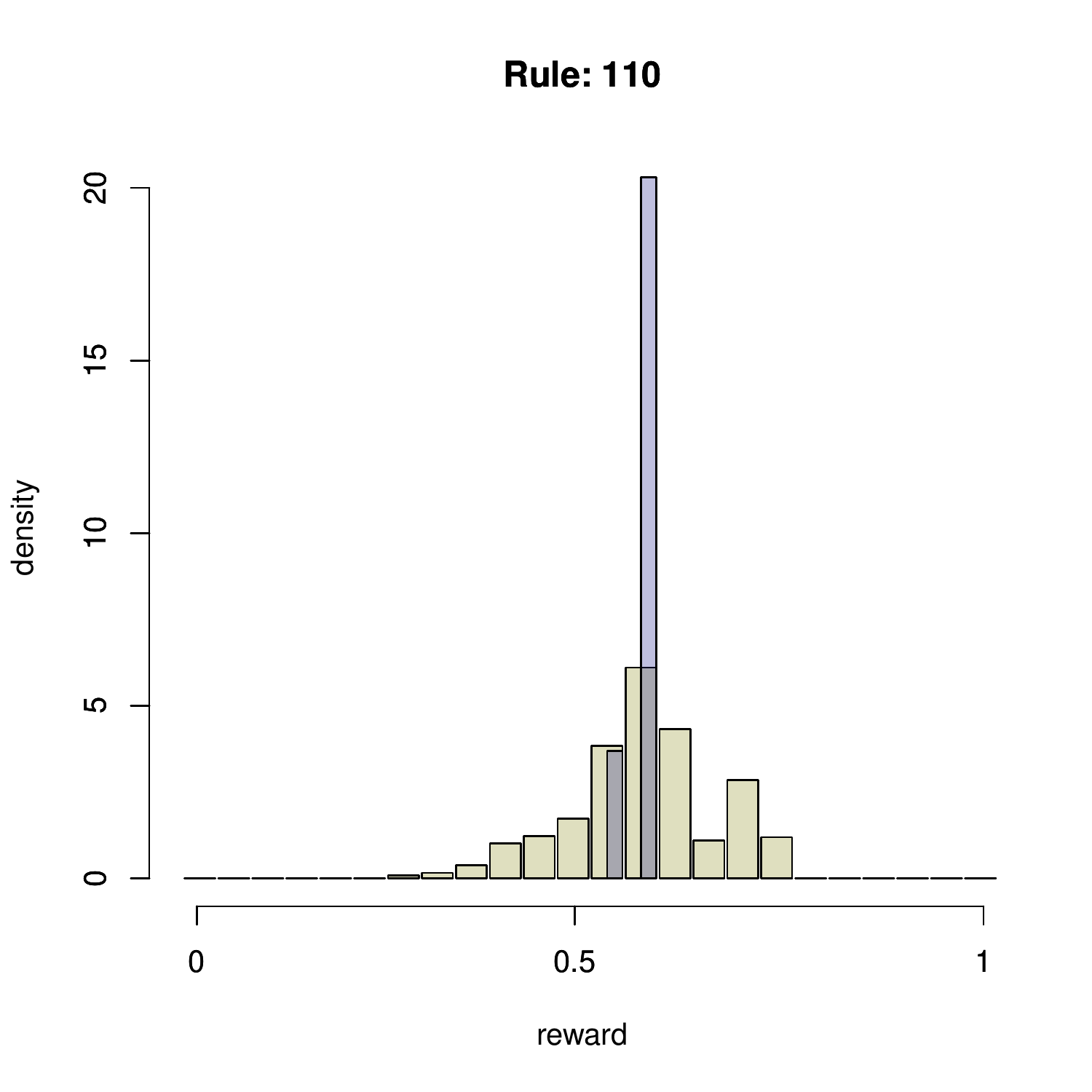} \\
		\hspace{-0.3cm} 
	  \includegraphics[width=0.5\textwidth]{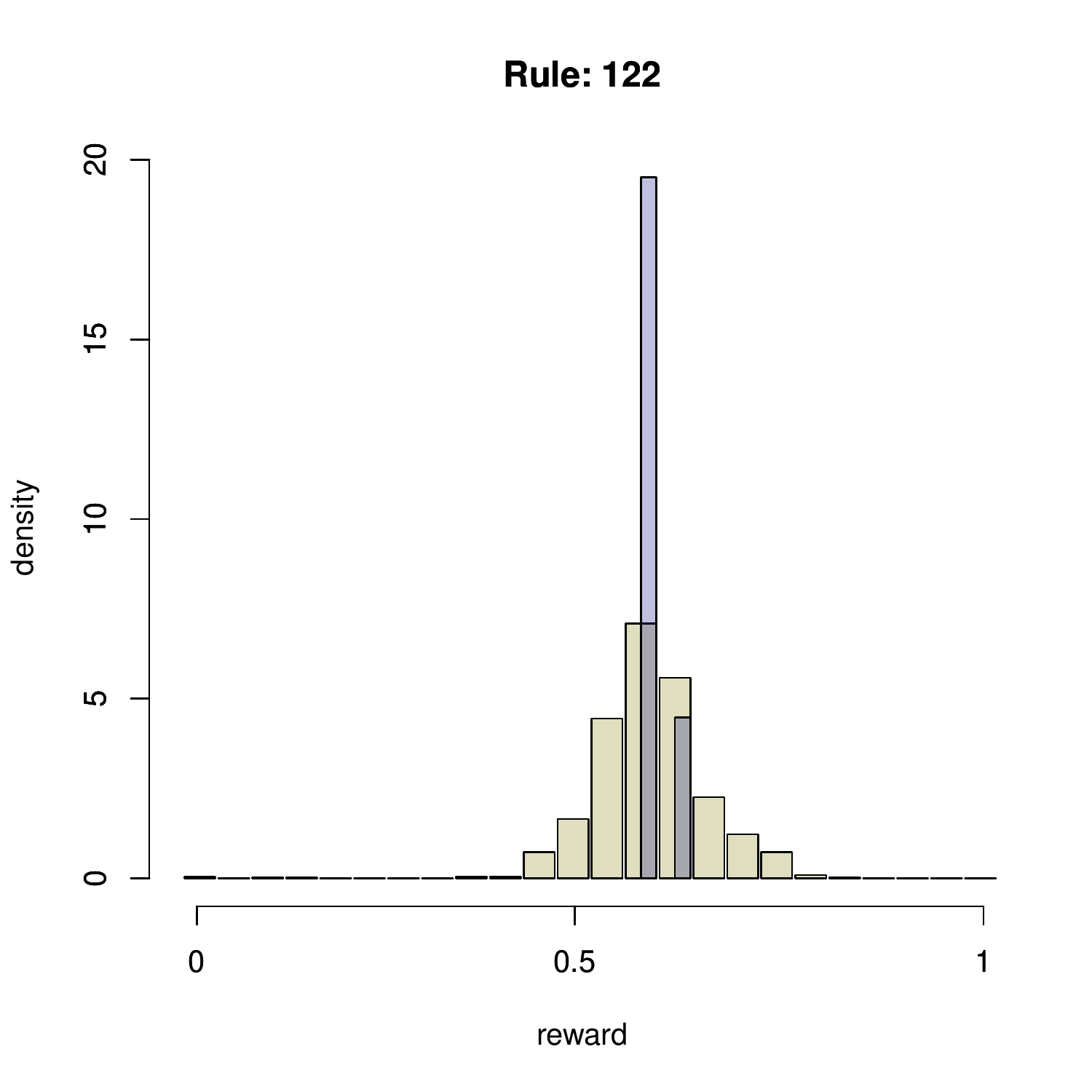}
		\includegraphics[width=0.5\textwidth]{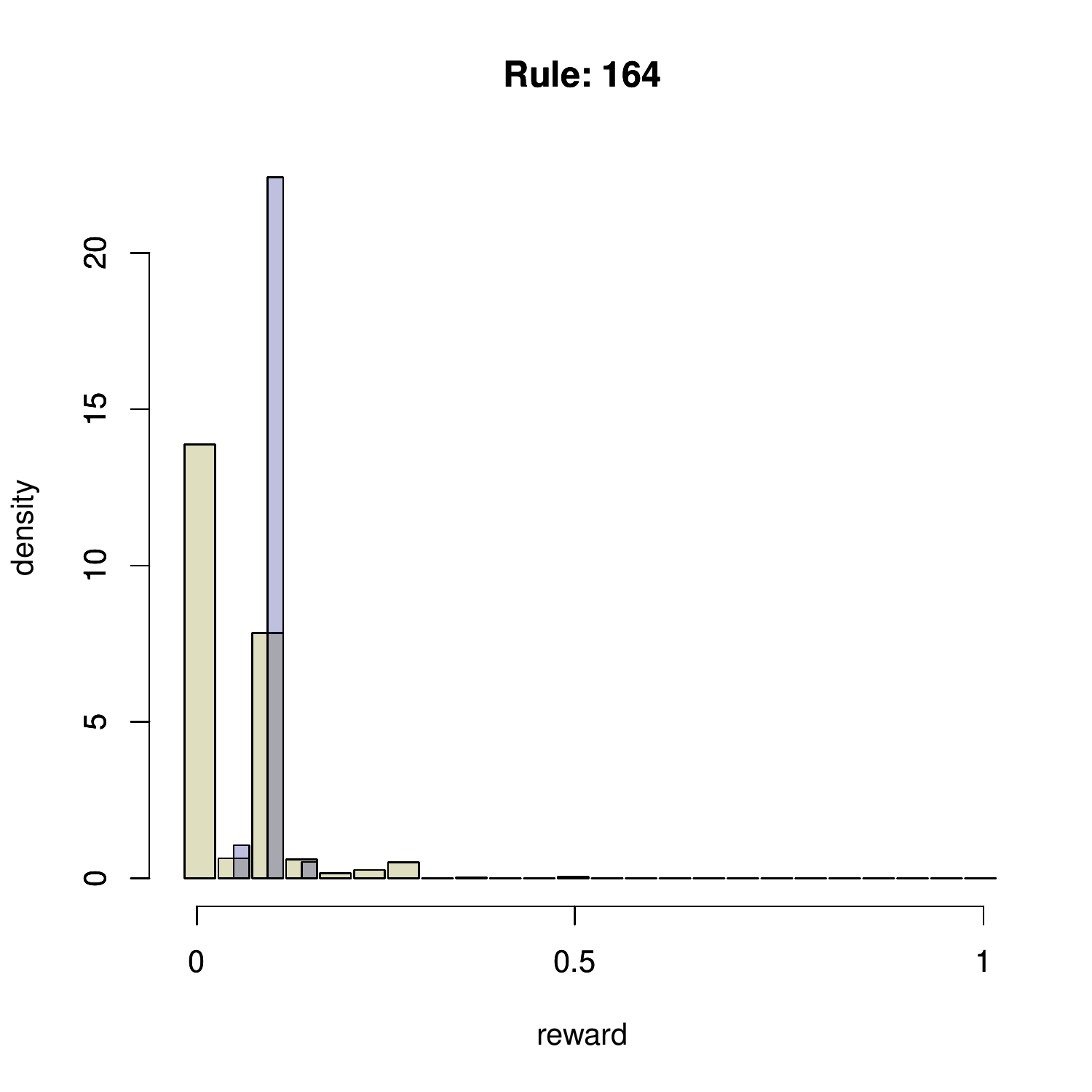}
		\hspace{-0.4cm}
		\vspace{-0.4cm}
	\caption{Histograms of $R$ for the 2,000 policies for the same experiments in Figure \ref{fig:disthist} ({\em Chained} strategy). Narrow bars (purple) represent all the results for the random-walk agents while wide bars (in beige) represent the rest.}
	\label{fig:histograms}
		\vspace{-0.5cm}
\end{figure}

In practice, we need a clearer way of determining the actual difficulty, normalisation and discriminating power of each problem. We introduced the environment response curves for that in section \ref{sec:erc} and we see them next for this setting.

\subsection{Estimation of the environment response curves}

First of all, we give a few indications of how the environment response curves are calculated. We estimate $w(\pi)$ in Eq. \ref{eq:w}, by using $k$ as an approximation of $K(\pi)$ and then we sample over the previously built population of $\Omega$, with 1,900 policies. Since we need to invert the functions $\Dpos$ and $\Dneg$, we calculate 101 different values of $\gamma$ at equal intervals from 0 to 1.  For each value of $\gamma$, we construct samples $S$ (without replacement, i.e., $S \leftarrow \Omega||_{w,N}$) with values of $N = 1..1900$, and we calculate $\max_{\pi \in S} R(\pi)$ for each sample $S$. We calculate whether this value is greater than or equal to $(1-\gamma)(\Rmax - \Rmean) + \Rmean$ as in Eq. \ref{eq:qpos} for each of these samples, leading to an array of Boolean values denoted by $B$, whose size\footnote{In the implementation, we do a trick to make things much faster: we check whether there are more than 10 `trues' in a row. In this case, we consider that the process has stabilised and we stop, filling the rest of values with `true'.} is 1,900. Note that this is not an estimation of $\qpos$ for each $N$, but just one case. Since we want to calculate $\Npos$ and $\Nneg$ (see equations \ref{eq:Npos} and \ref{eq:Nneg}), we need to estimate when $\qpos$ is greater than or equal to $1/2$. We do this by summing how many {\em falses} there are in $B$. 
This is exactly $\Npos$. 
Since our value for $\qpos$ was not a probability but just one case, we repeat the process 400 times in order to get a good estimate of $\Npos$. Finally, by calculating the binary logarithm we have $\Dpos$ as for Eq. \ref{eq:Dpos}. This leads to a table of 101 values of $\gamma$ and $\theta$. During the same procedure, we also calculate the $neg$ versions ($\Nneg$, $\Dneg$, etc.) and get a second table. From these two tables, we can just get the inverted functions $\Repos$, $\Reneg$ and their joint version $\Re$.

\begin{figure}
	\centering
		\vspace{-1cm}
		\hspace{-0.3cm} 
		\includegraphics[width=0.5\textwidth]{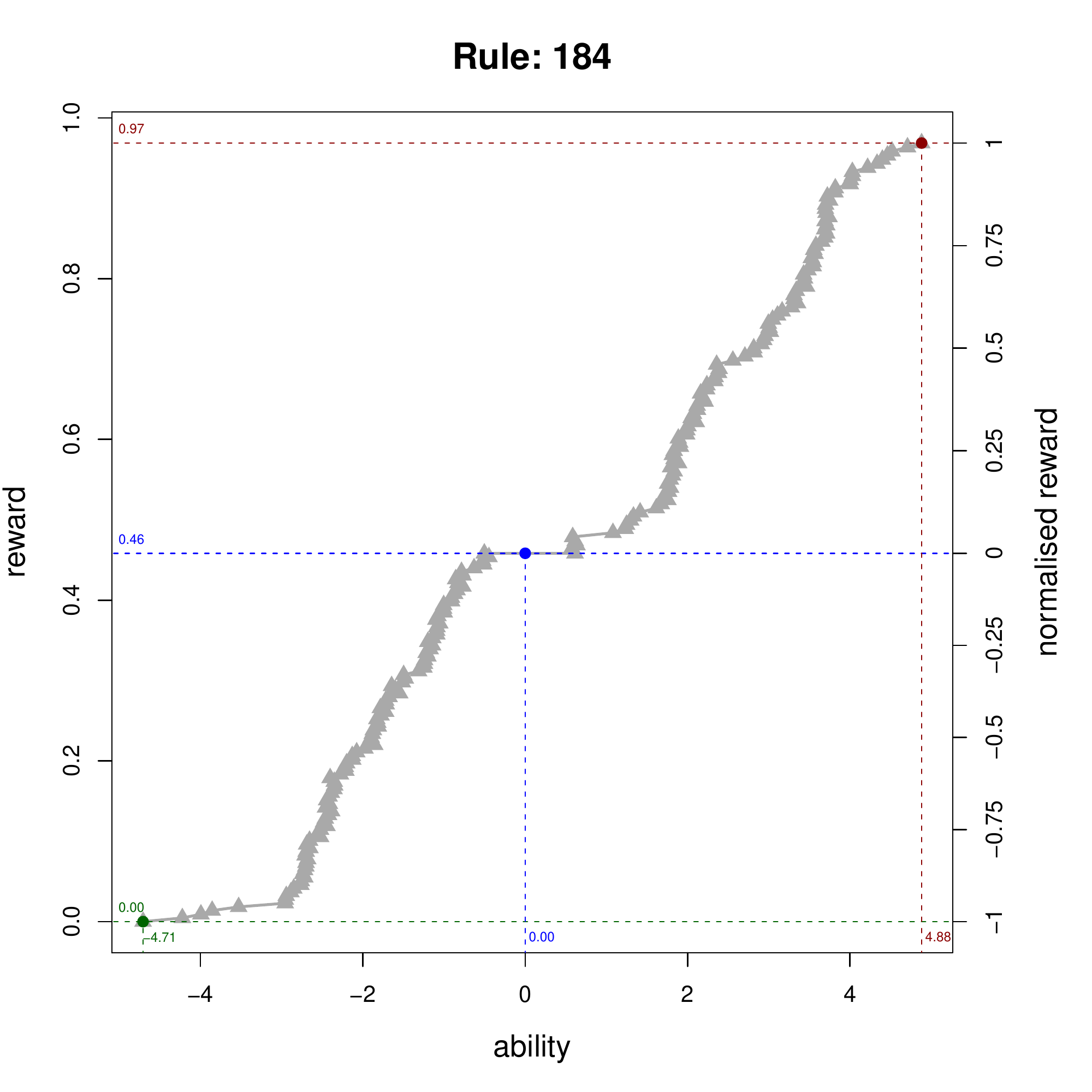}
		\includegraphics[width=0.5\textwidth]{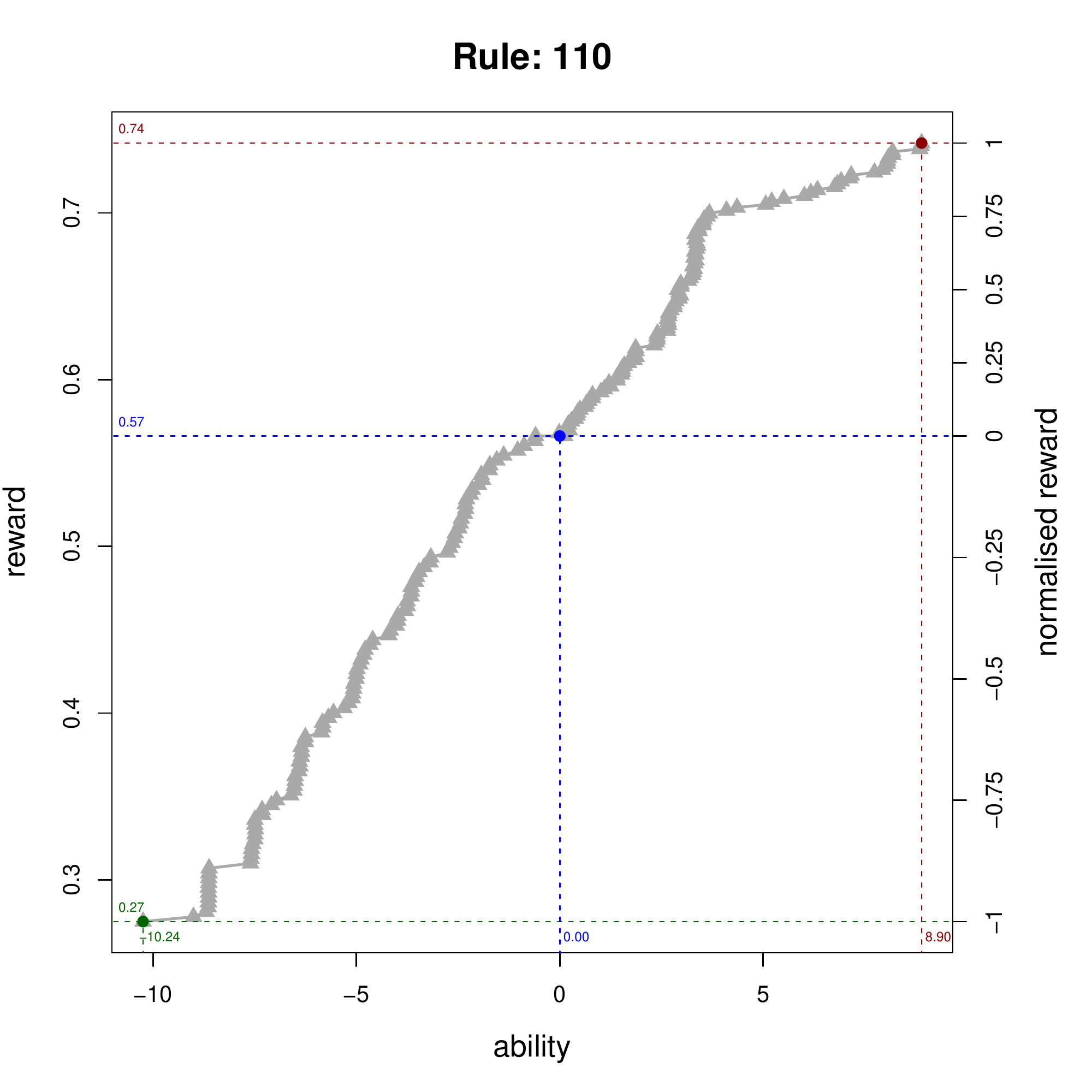} \\
		\hspace{-0.3cm} 
	  \includegraphics[width=0.5\textwidth]{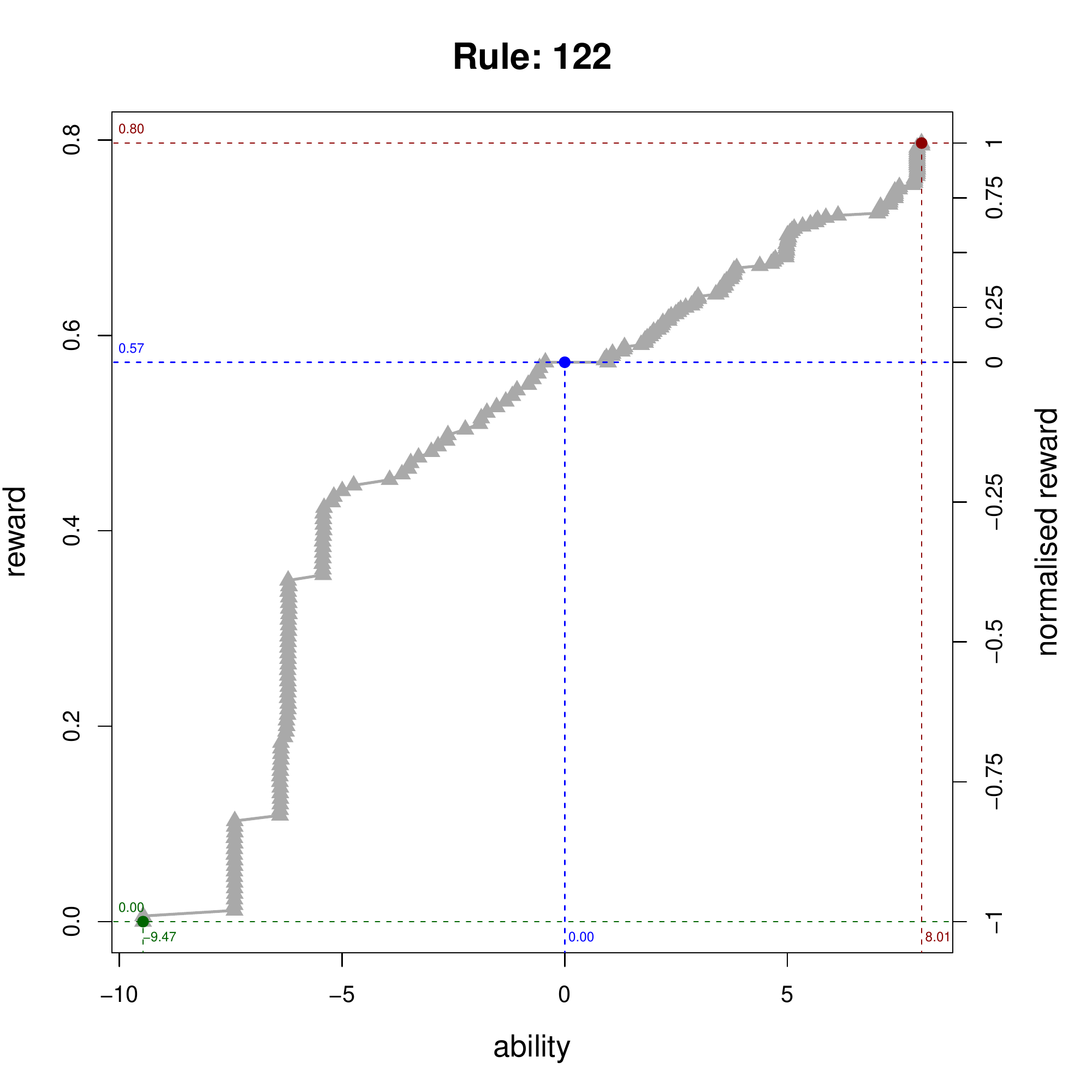}
		\includegraphics[width=0.5\textwidth]{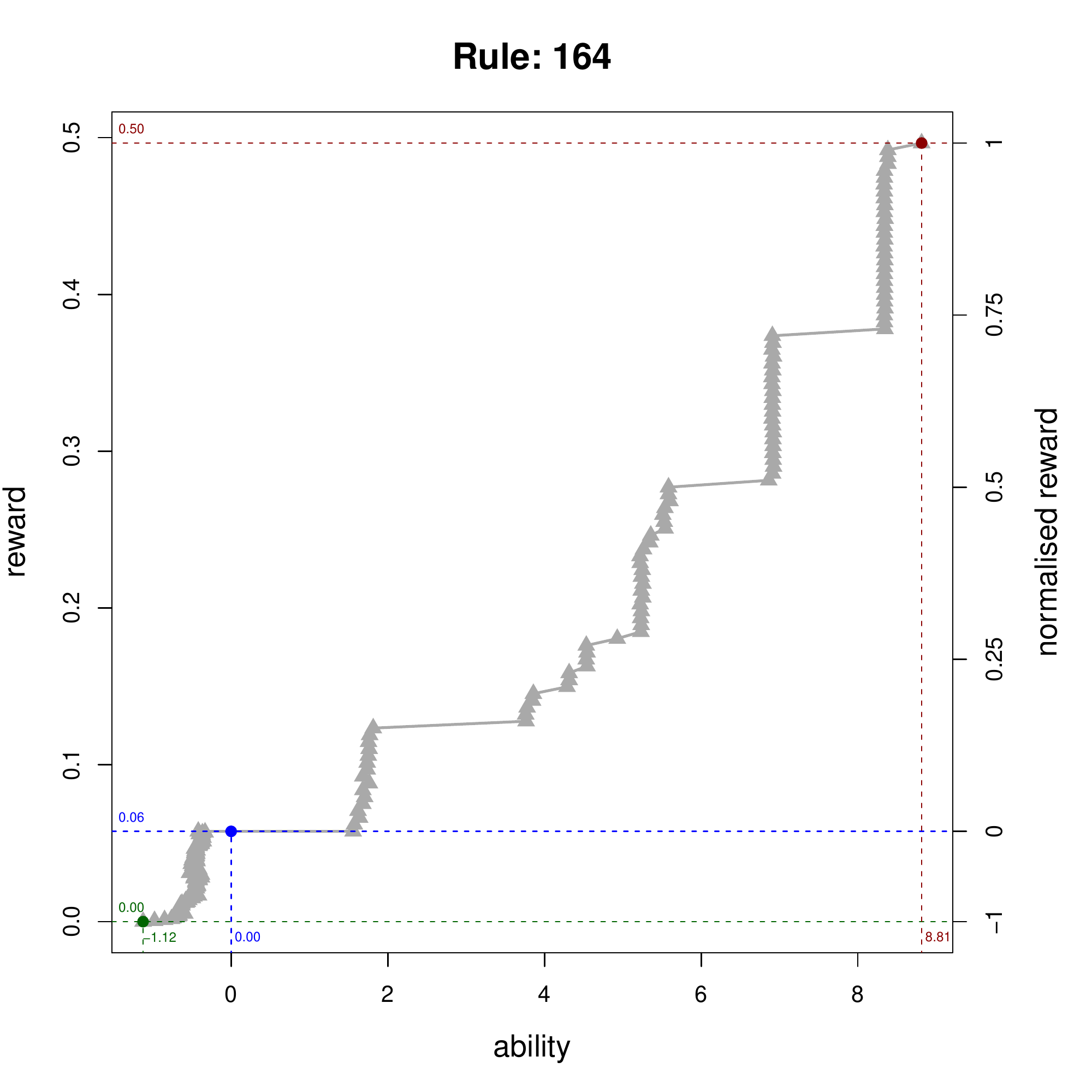}
		\hspace{-0.4cm}
		\vspace{-0.4cm}
	\caption{Environment response curves $\Re(\theta)$ for the four environments using 1,900 policies (2,000 minus the random walks), 9500which are reduced to 1349 after removing repeated ones. The \xaxis represents the ability ($\theta$) of an agent and the \yaxis (on the left) represents the expected aggregated reward $R$. The \yaxis on the right is a normalised version, where we can use the tolerance level $\gamma$ to locate any point. For instance, in order to get 1/2 probability of getting at least the maximum (0\% tolerance level). We need to look at normalised 1, which requires abilities 4.88, 8.90, 8.01 and 8.81 for rules 184, 110, 122 and 164 respectively. If we set the tolerance level at $\gamma=0.25$, we need to look at normalised reward 0.75, which gives values of 3.55, 3.10, 7.38, 8.34 for rules 184, 110, 122 and 164 respectively. More values in Table \ref{tab:sizediff}.}
	\label{fig:real-ercs}
		\vspace{-0.5cm}
\end{figure}

Figure \ref{fig:real-ercs} shows this function $\Re$. The tolerance levels can be easily seen on the \yaxis (right), which is a (linearly) normalised version of the aggregated rewards (\yaxis, left).
Apart from the convenient way of showing how the values are normalised (so all the environments become commensurable if we look at the right \yaxis), we can also get simple and clear indicators of difficulty and discriminating power.
For instance, from these curves we can get estimations of the difficulty of each environment for different levels of tolerance as calculated in the caption of Figure \ref{fig:real-ercs}. Also, the notion of discrimination can be analysed by looking at the shape of the response curves. For instance, rules 184, 122 and 164 have convex curves for the positive part (top right part of the curves) while 110 is concave. This indicates how difficulty varies for several values of tolerance. In fact, difficulty plunges from 8.90 to 3.10 when changing from 0\% to 25\% tolerance for rule 110.
Also, it is interesting to look at these plots in a dual way, because just by changing the rewards $r$ to $1-r$ in our environments, we would get environments with the structure given by the bottom left part of the curves.

Finally, Table \ref{tab:sizediff} shows the differences in the estimation of difficulties using a larger working sample: 9,500 (10,000 without the random walks). We see that these measures are more robust and convergent, especially for tolerance levels 0.05 and 0.10. The environment response curves are shown in Figure \ref{fig:real-ercs10000}, and also have the same shape as those in Figure \ref{fig:real-ercs}.

\begin{table}
	\centering \scriptsize
		\vspace{-0.3cm}
		\begin{tabular}{ || c | c || c | c | c | c | c || c | c | c | c | c || }
		\hline\hline
		 & & \multicolumn{5}{c||}{$\Dpos(\gamma: 0 .. 0.25) $} & \multicolumn{5}{c||}{$\Dneg(\gamma: 0 .. 0.25) $}  \\ \hline 
		 Rule & $|\Omega|\:$ & $\:0.00\:$ & $\:0.01\:$ & $\:0.05\:$ & $\:0.10\:$ & $\:0.25\:$ & $\:0.00\:$ & $\:0.01\:$ & $\:0.05\:$ & $\:0.10\:$ & $\:0.25\:$ \\ \hline\hline
		 184 & 1900 (1349) & 4.88  & 4.72  & 4.33  & 4.01  & 3.55 &      4.73 & 4.20  & 2.94  & 2.76 & 2.53 \\
		 184 & 9500 (5922) & 5.44  & 5.26  & 4.44  & 4.13  & 3.68 &      4.50 & 4.32  & 3.01  & 2.79 & 2.50 \\ \hline
		 110 & 1900 (1349) & 8.90  & 8.88  & 8.07  & 7.74  & 3.10 &     10.24 & 9.01  & 8.64  & 8.63 & 6.97 \\            
		 110 & 9500 (5922) & 12.42 & 12.08 & 10.15 & 10.01 & 8.87 &     11.76 & 11.61 & 10.22 & 9.07 & 8.30 \\ \hline
		 122 & 1900 (1349) & 8.01  & 8.01  & 7.91  & 7.91  & 7.38 &      9.47 & 9.47  & 7.42  & 7.41 & 6.39 \\   
		 122 & 9500 (5922) & 8.73  & 8.66  & 8.49  & 8.47  & 7.55 &     10.24 & 9.16  & 8.17  & 8.17 & 8.02 \\ \hline
		 164 & 1900 (1349) & 8.81  & 8.38  & 8.34  & 8.34  & 8.34 &      1.11 & 0.95  & 0.75  & 0.67 & 0.56 \\
		 164 & 9500 (5922) & 10.54 & 10.54 & 9.41  & 9.41  & 9.39 &      1.49 & 1.18  & 0.93  & 0.92 & 0.69 \\ \hline \hline
		\end{tabular}
		\vspace{0.3cm}
	\caption{Comparison of $\Dpos$ and $\Dneg$ using the {\em Chained} strategy for two different sizes of the working sample $\Omega$ (in parentheses the number of policies after removing the repeated policies). Those for $|\Omega|= 1900$ correspond to Figure \ref{fig:real-ercs} and those for $|\Omega|= 9500$ correspond to Figure \ref{fig:real-ercs10000}.}
	\label{tab:sizediff}
\end{table}


\begin{figure}
	\centering
		\hspace{-0.3cm} 
		\includegraphics[width=0.5\textwidth]{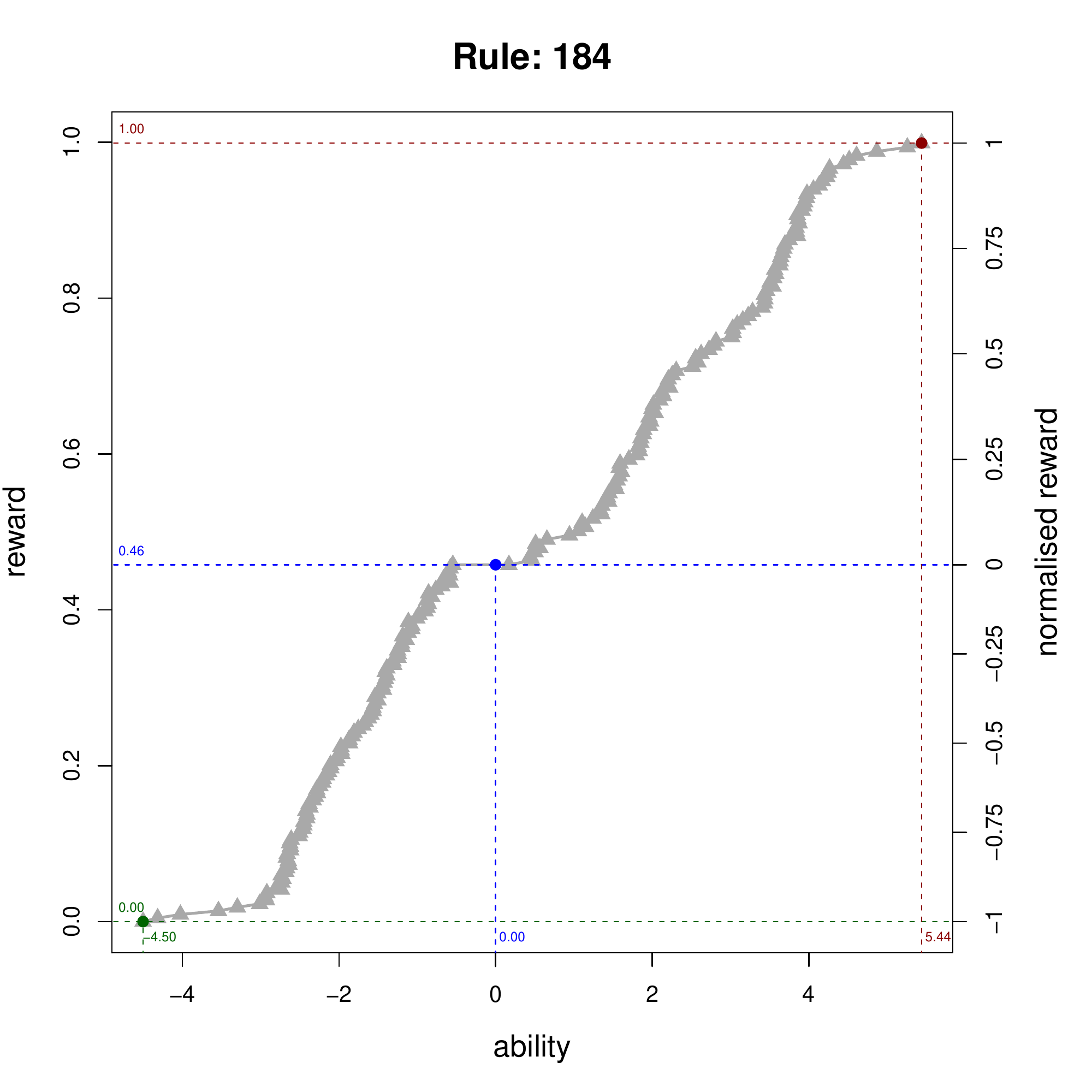}
		\includegraphics[width=0.5\textwidth]{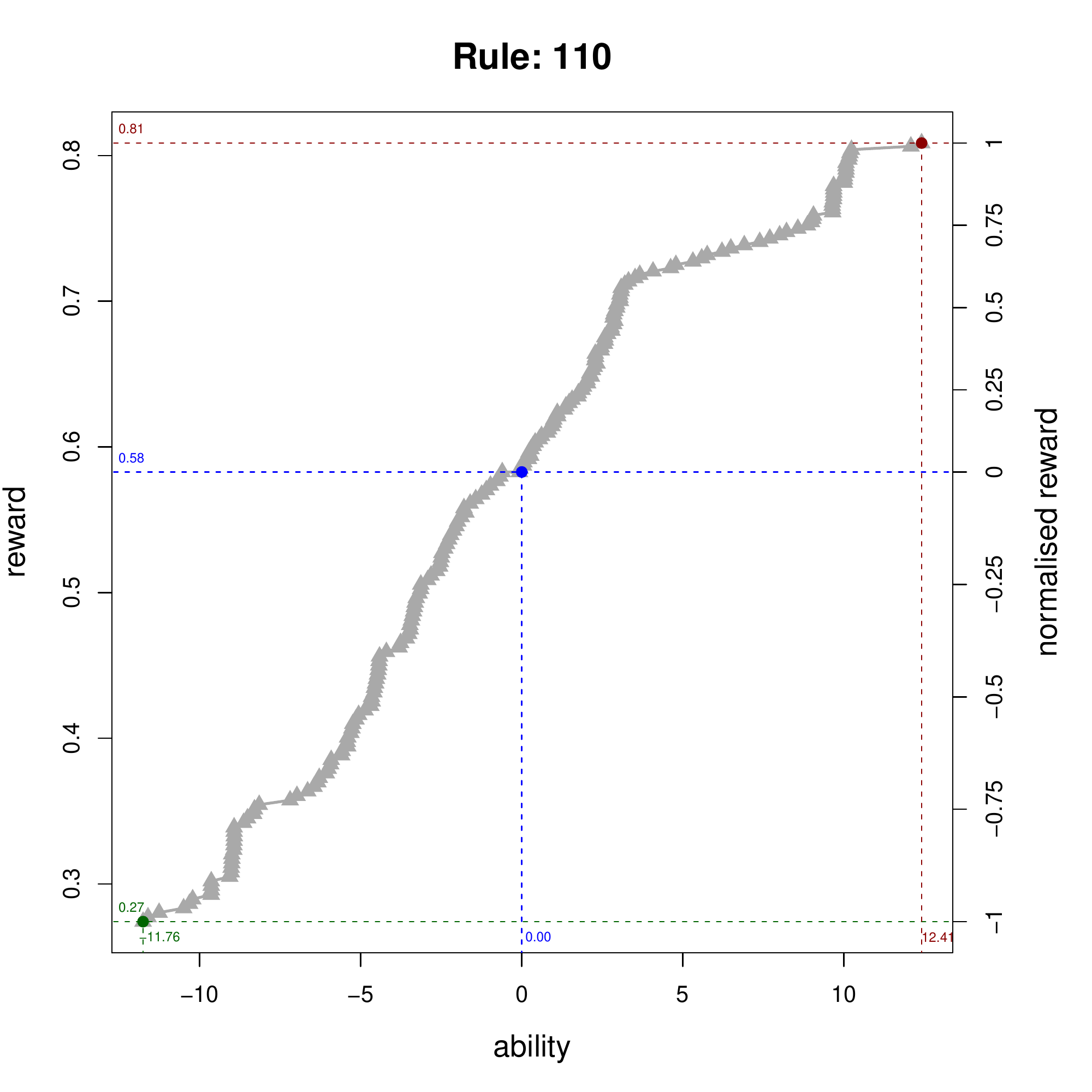} \\
		\hspace{-0.3cm} 
	  \includegraphics[width=0.5\textwidth]{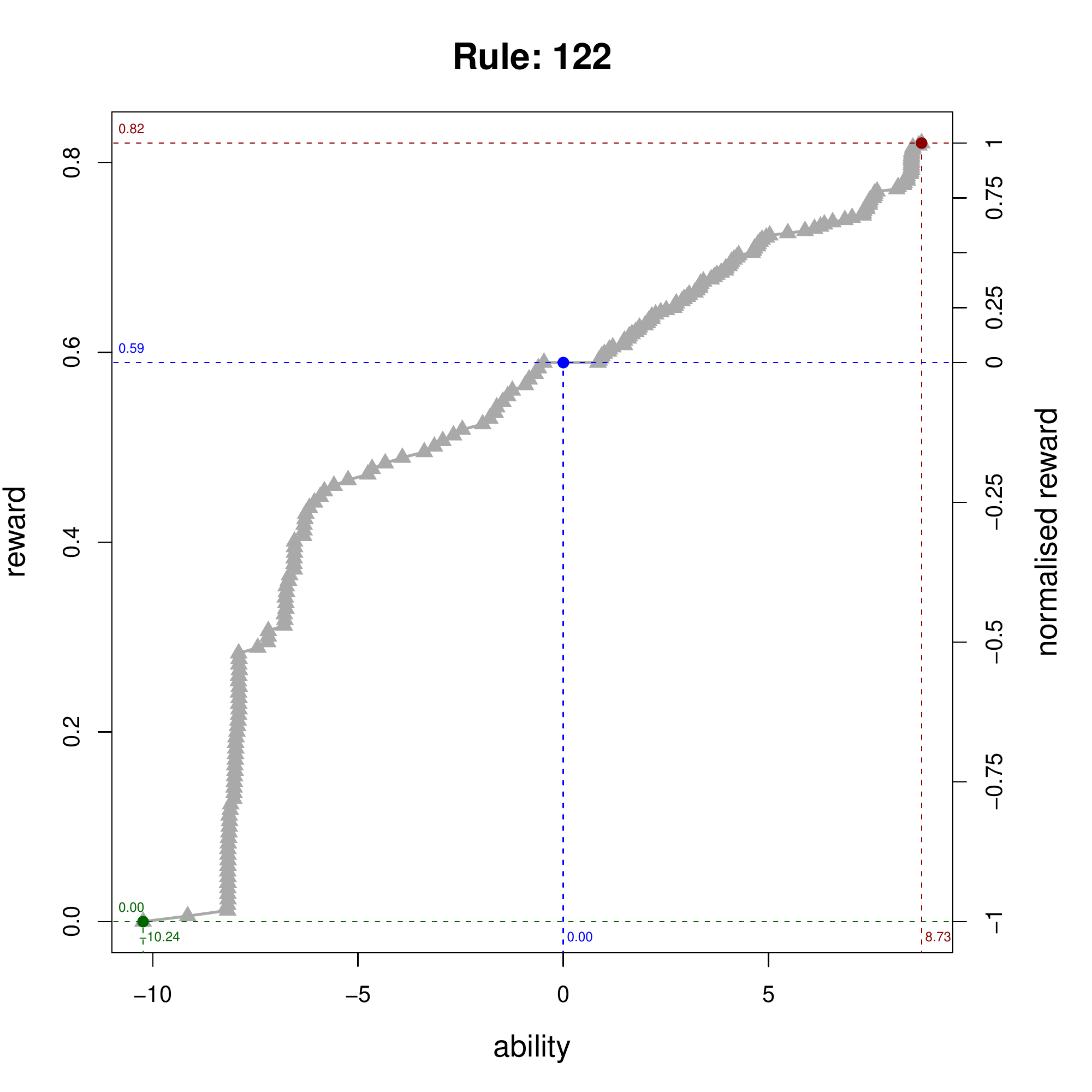}
		\includegraphics[width=0.5\textwidth]{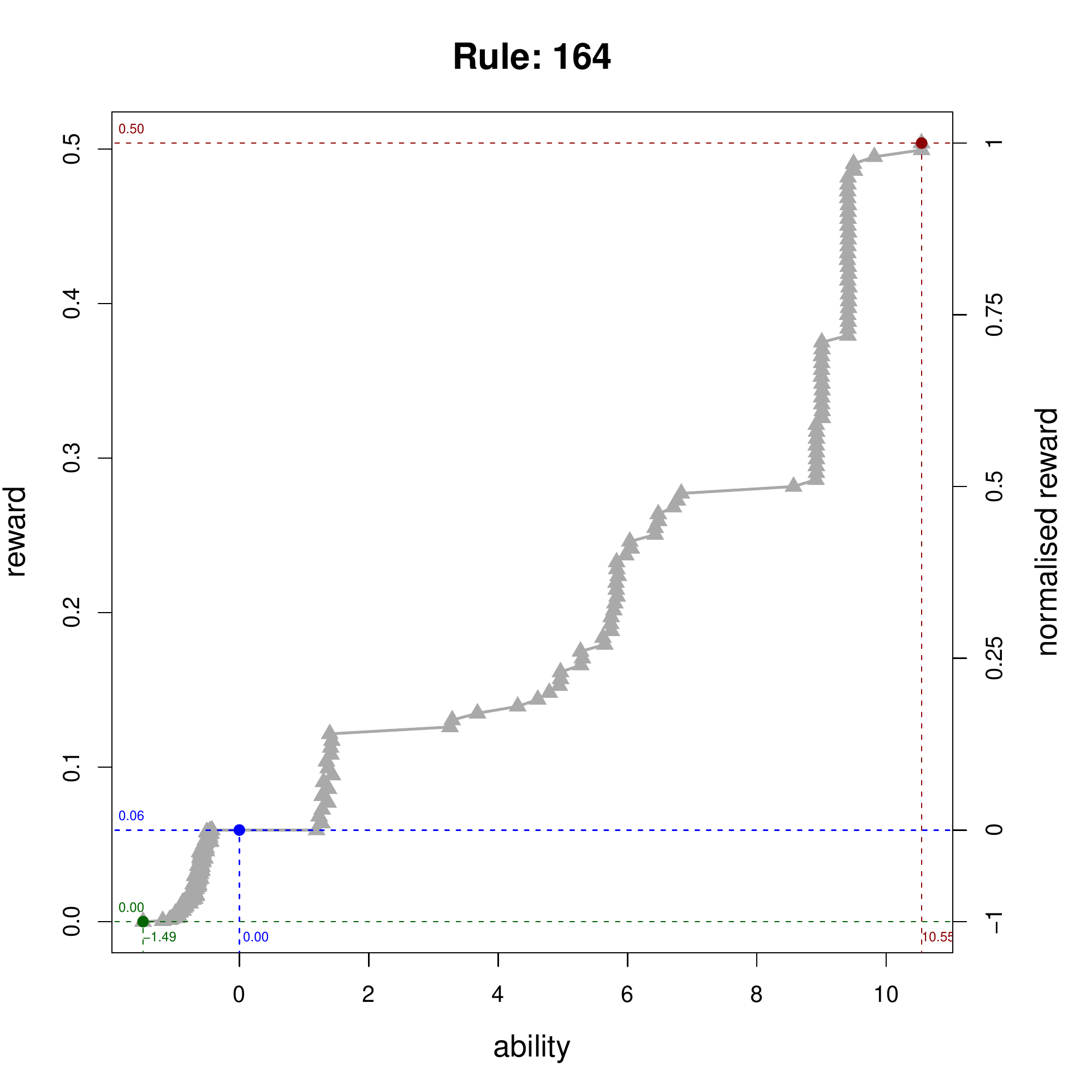}
		\hspace{-0.4cm}
		\vspace{-0.4cm}
	\caption{Same as Figure \ref{fig:real-ercs} using 9500 policies (10000 minus random walks), which are reduced to 5922 after removing repeated ones. The exact values of difficulty for some degrees of tolerance are shown in Table \ref{tab:sizediff}.}
	\label{fig:real-ercs10000}
		\vspace{-0.5cm}
\end{figure}


\section{Discussion}\label{sec:discussion}


In section \ref{sec:background} we mentioned many other approaches to the notions of difficulty and discriminating power. We will now compare our proposal and results with some of these previous notions  
and will put the contributions, limitations and future work in this context.

\subsection{Comparing to other kinds of complexity}

We have used approximations of Kolmogorov complexity to analyse the policies. In section \ref{sec:complexity} we derived some upper bounds by evaluating the complexity of the whole environment. In the setting seen in section \ref{sec:experiments}, this would have implied the compression of the definition of the configuration rules, the reward rules and the observation function. Approximating this as given by Eq. \ref{eq:dotK} would lead to a large number. For instance, the implementation of the environments and automata takes about hundreds lines of code in the language R. More compressed implementations are possible, but it is difficult to conceive an implementation whose length is less than a few thousand bits. So, proposition \ref{prop:upperbound1} is not useful as an approximation of difficulty in this case.

A way-out might have been to consider some underlying setting or class of environments, from which we just add some parameters or rules. For instance, we could assume that the reference machine contains the basics for evolutionary cellular automata as well as the way agents and rewards are handled in this setting. With this assumption, we would only need to calculate how complex each rule is. The problem here, and one of the reasons we have chosen elementary cellular automata, is that there are only 256 rules. Consequently, it is very unlikely that any analysis of the complexity of the description of the environment like this would lead to differences in complexity greater than $\log_2 256 = 8$ bits.

Another possibility might have been to look at the patterns each ECA rule generates and see whether this correlates with the difficulty we can find on them. Fortunately, a deep and insightful analysis of the space-time diagrams have recently been performed by several studies \cite{Zenil2010,zenil2012two}. 
Note that if we analyse and try to compress their space-time diagrams (e.g., those depicted in Figure \ref{fig:manyrules}) we are looking at some of their emergence properties, not at the automaton itself. In other words, we only need to code part of it (of its emergent properties), and not the whole of it. As a result, the complexity may be much lower than the whole automaton (Eq. \ref{eq:dotK}). 
If we just focus on the four rules we have used throughout the experimental section: 184, 110, 122 and 164, we have that \cite{Zenil2010}, which uses the compression method to approximate Kolmogorov complexity, sorts them (from lowest to highest complexity) as follows: 164, 184, 122, 110. A related study, \cite{zenil2012two}, which uses the coding theorem method through a `block matrix decomposition', reaches slightly different values but exactly the same order. If we compare this order with the results of difficulty (with 0 tolerance) as shown in Figure \ref{fig:real-ercs}, we have that the order is 184, 122, 164 and 110, and for (0.25 tolerance) it is 110, 184, 122, 164. In other words, the Kolmogorov complexity of how the ECA evolves in space-time does not seem to correlate with the difficulty that the agent may find in it. This is of course expected, since there are many other things (such as rewards or the influence of the agent in the ECA) that may lead to simple policies occasionally succeeding even if the configuration of the automaton becomes too messy (complex).

The take-away message of all this is something that we already pointed out in the introduction: the emergence of complexity is a very different thing to the emergence of difficulty. While some approaches to difficulty can work (and have worked) by looking at the environment complexity, any general account of difficulty must take an agent-centred point of view.

As discussed in previous sections, there are several options for this. One of them is to look at action sequences, as in Eq. \ref{eq:Haction}. Figure \ref{fig:actions} shows the aggregated reward results for 2,000 agents arranged by the complexity of their sequences of actions $K(\alpha)$ (each sequence is composed of 300 pairs of move and upshot, $\left\langle V, U \right\rangle$). As we see, the complexity of the actions does not correlate with the maximum envelope (not shown in the figure). Moreover, we see that the correlation ($Cor_{i=1..\kmax}(i, \Rmax[\leq i])$) is clearly negative. As more complex the actions are, the variance is lower. This is partially caused by the number of policies leading to high-complexity action sequences is relatively smaller than the number of policies leading to low-complexity action sequences.

\begin{figure}
	\centering
		\vspace{-1cm}
		\hspace{-0.3cm} 
		\includegraphics[width=0.5\textwidth]{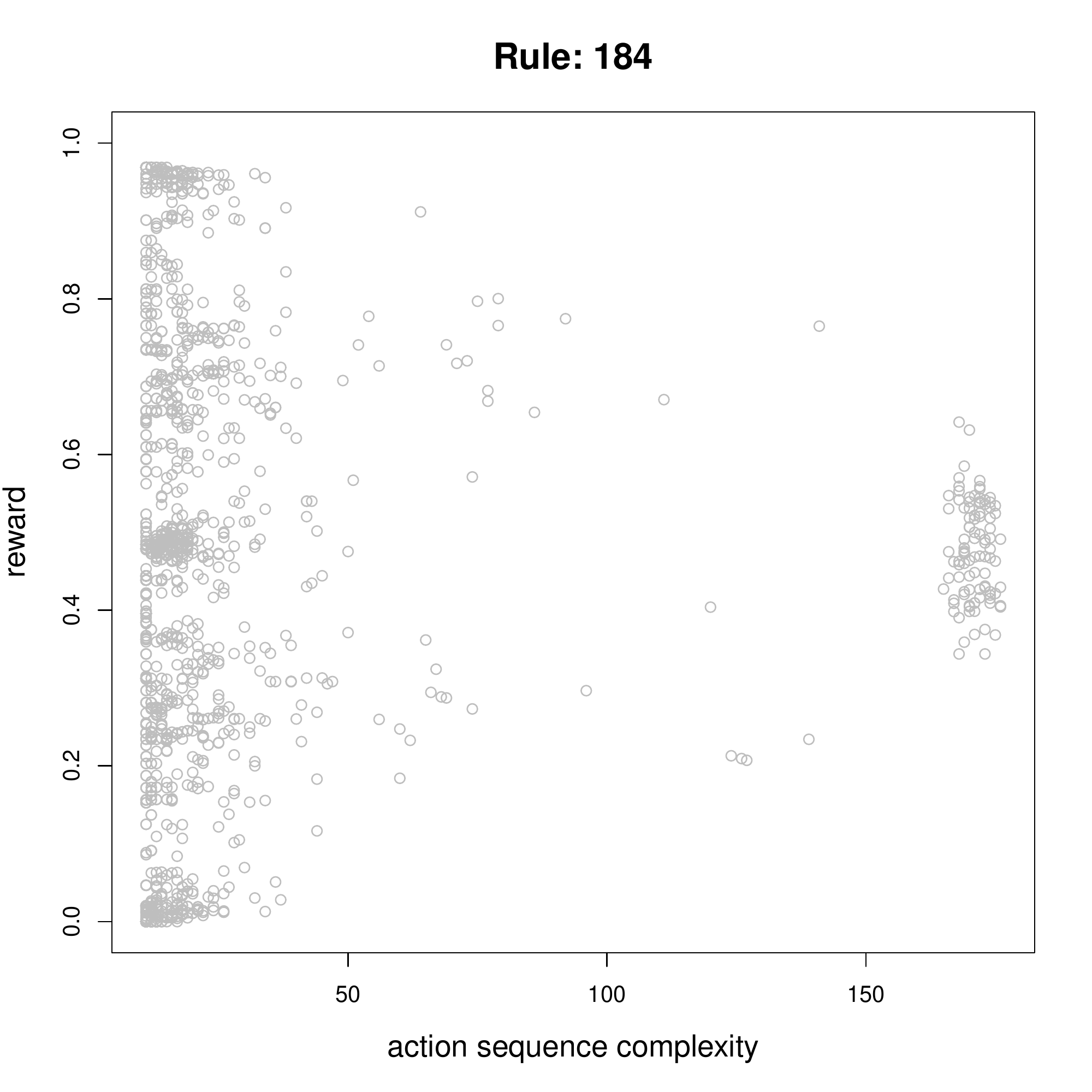}
		\includegraphics[width=0.5\textwidth]{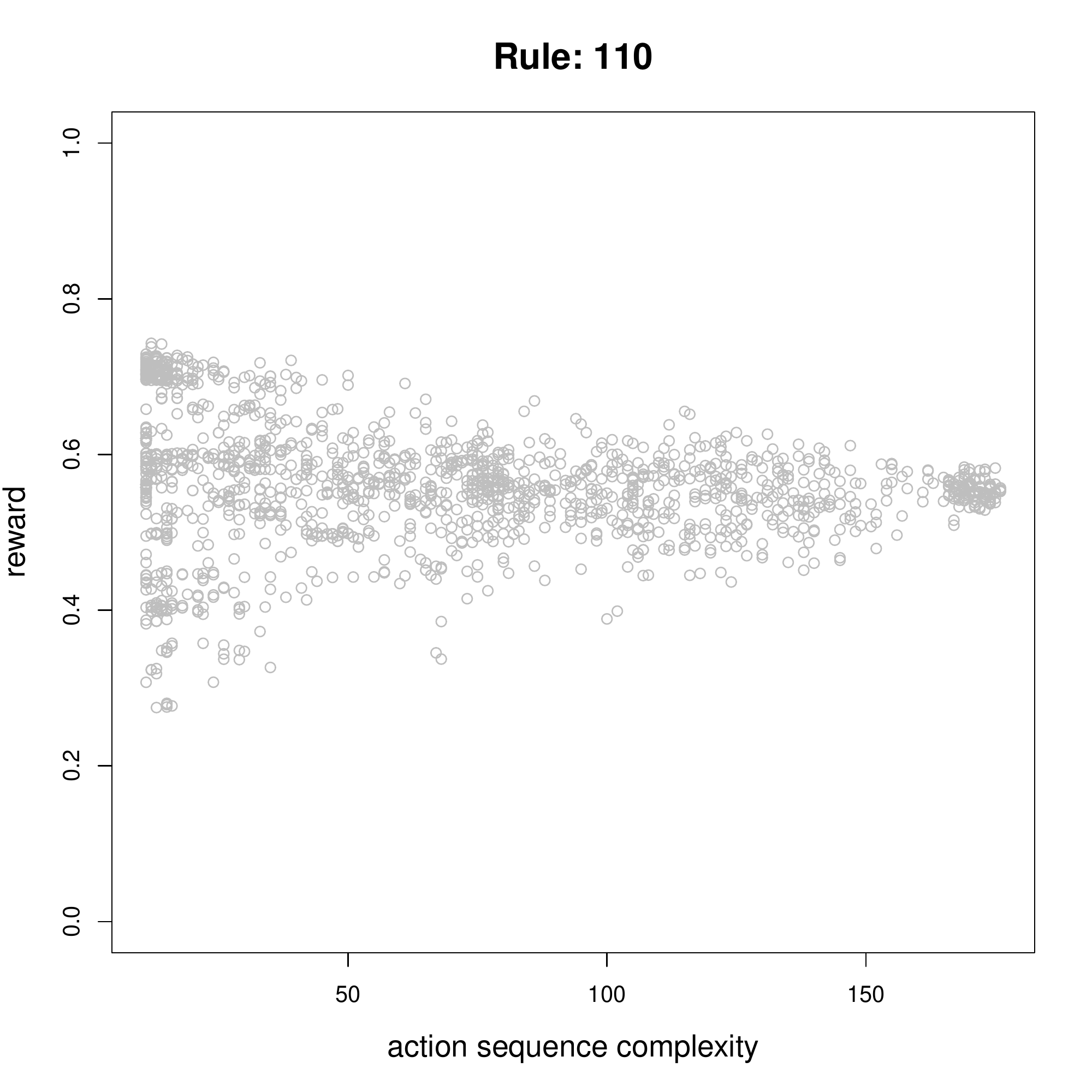} \\
		\hspace{-0.3cm} 
	  \includegraphics[width=0.5\textwidth]{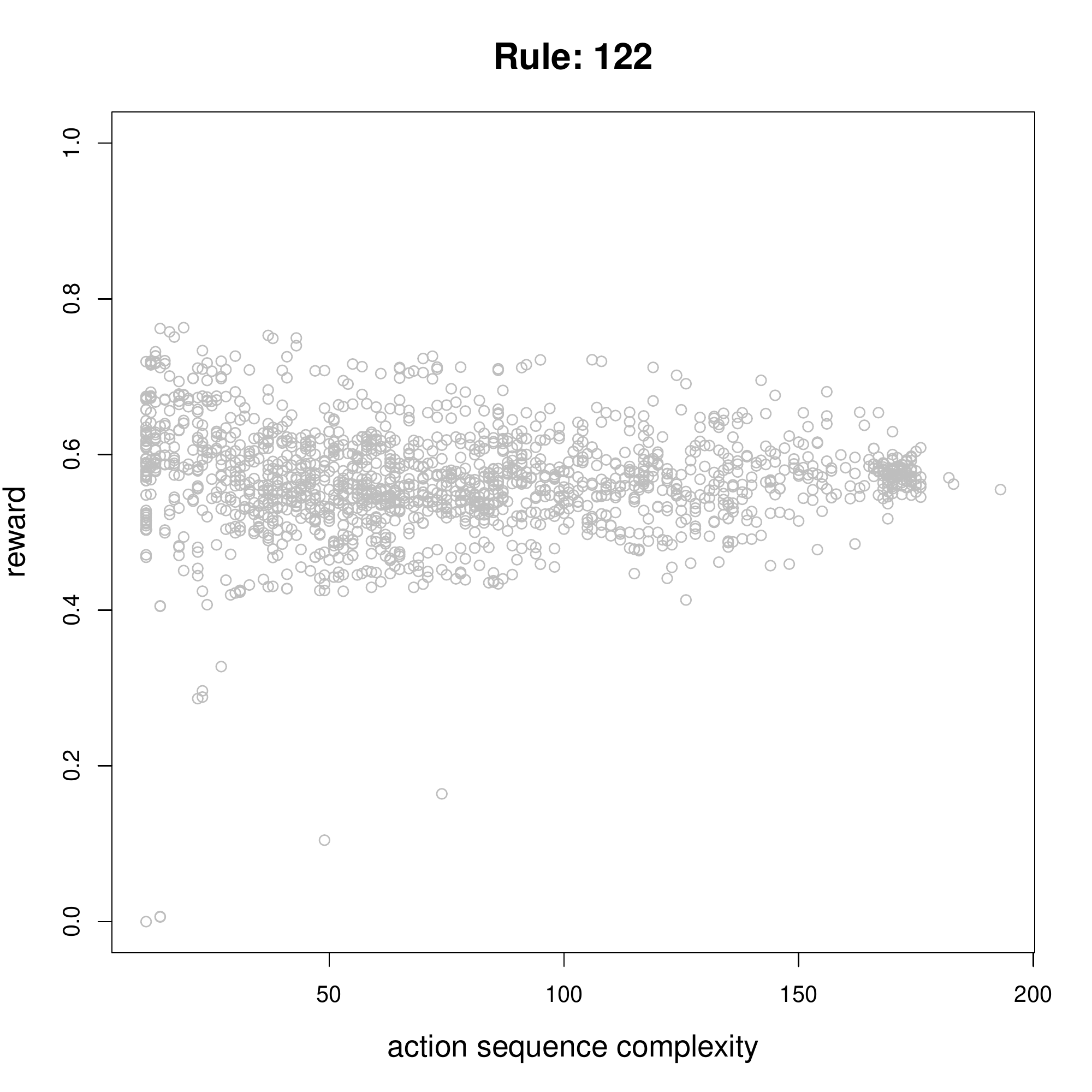}
		\includegraphics[width=0.5\textwidth]{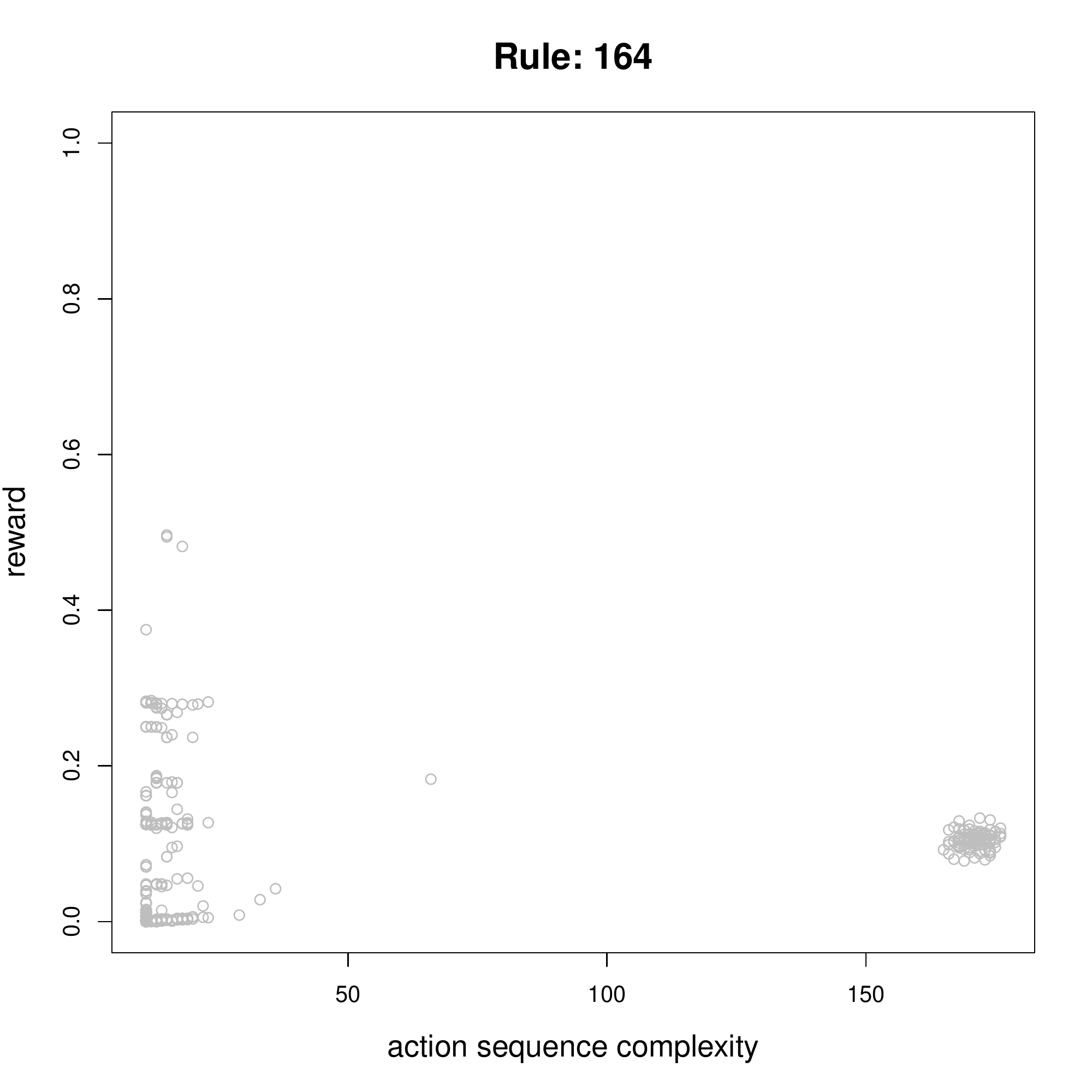}
		\hspace{-0.4cm}
		\vspace{-0.4cm}
	\caption{Same configuration as Figure \ref{fig:disthist} (2000 agents, 100 of them are random) but we plot agent performance on the \yaxis against {\em action sequence} complexity on the \xaxis. 
	The cases on the right of the plots typically show the random agents.}
	\label{fig:actions}
		\vspace{-0.5cm}
\end{figure}

Although Figure \ref{fig:actions} cannot be used as a good source to identify the difficulty or discriminating power of an environment, it confirms some of the findings shown in other plots. For instance, the distribution for rules 110 and 122 is quite regular and matches the most regular histograms in Figure \ref{fig:histograms}. Clearly, rules 164 and 184 lead to less complex space-time diagrams (they are the simplest according to \cite{Zenil2010,zenil2012two} and intuitively in Figure \ref{fig:manyrules}), so simple policies in these cases are not expected to create random-like (i.e., complex)  action sequences.

This does not mean at all that Kolmogorov complexity and other tools from algorithmic information theory should not be applied for determining the difficulty of an environment. The take-away message is that for understanding difficulty we need to apply them on the policies, as we have done in this paper.
In fact, the good thing about Kolmogorov complexity is that the results should be more independent of the language used to represent policies (our agent policy language {\sf APL} here).

\subsection{Taking performance-guided heuristics into account}

One of the features of the approach introduced in this paper is that it is mostly independent of what heuristic a learning agent might consider to evaluate the policies, i.e., to explore the policy space.
In fact, even sticking to one of the two last strategies ({\em Chained} and {\em Levin}) made in section \ref{sec:strategies}, we do a {\em sample} of policies. Any reasonable (and feasible) sample will typically build short programs first instead of long ones, so this will be closely related to Levin's optimal search \cite{Levin73} any way, which is known to asymptotically dominate any other search. This assumes a finite episode length $t$ and that the a priori probability $w$ in definition \ref{def:polprob} as set in Eq. \ref{eq:w} is not modified.  

However, it is of course highly debatable that having the opportunity of attempting several policies and seeing the ongoing reward for each of them, the exploration of the solution space still behaves blind on this, trying one policy after the other without modifying the a priori probability $w$. Also, we need to consider that some policies can be discarded by our mind models, as we also discussed under the term `mind practising'.
It looks reasonable that if we try a policy and gets high reward, we typically modify that policy instead of attempting another one completely from scratch. This is in the end what most heuristics do, guided by gradient ascent or some other kind of maximisation process.

Let us consider an example of this. Imagine a binary policy representation, i.e., strings over $\Sigma = \{0, 1\}$ and consider that we can just modify policies by adding or removing a bit at any position (Levenshtein distance $d$). Also consider that we explore each policy for a long time, so we can get a fairly good approximation of its performance. From here we can construct a graph, by connecting all the strings $s$ and $t$ such that $d(s,t) = 1$.
Imagine that we have a way to determine whether some strings correspond to the same policy, allowing us to merge them into equivalence classes, with $d(s,t)=0$.
This leads to a graph, where for each node in the graph we have an estimated reward $R$ and each edge requires the modification of one bit.
These graphs can be explored by some kind of gradient-descent or any other heuristic. 
%
%
%
%
Without setting all of the details about how the heuristics work it is hard to figure out a criterion to determine a measure of difficulty from a graph like this: the length of the shortest non-decreasing path (if it exists), the best policy after a breadth-first, with or without backtracking, ...
There are infinitely many heuristics taking the information into account. Some of them 
are hybrids, by trying to maximise results but also keeping solutions simple (e.g., MML-guided heuristics \cite{Wallace2005}).
In fact, this binary representation is just one of the infinitely many possibilities, and the resulting graph would highly depend on it and, as a result, our indicators (and possible plots) of difficulty. 
%
We could even define a distribution of heuristics and calculate difficulty by an appropriate weighting of all of them. However, this would be infeasible in practice and could possibly boil down to some of the approaches, such as the one used in the paper, that try to ignore heuristics or assume a very general one such as Levin's search.

Nonetheless, we think that the specialisation of some of the plots, indicators and curves for families of heuristics could be a useful area of research, especially for the analysis of agent algorithms and other learning techniques. This would also connect this work to all the previous analysis on difficulty made in the area of evolutionary computation \cite{he2007note}. In this sense, some recent approaches based on the notion of motif, such as `motif difficulty' \cite{liu2012motif} would be worth being investigated in relation to this work.


\subsection{Applications, limitations and future work}

The main contribution of this paper is the view of difficulty in terms of how a population of policies (i.e., behaviours) perform on a given task. While this is a traditional view in psychometrics, especially in Item Response Theory, here we propose to construct the policies, i.e., the population, according to some distribution over a policy language, and evaluate their results in terms of the (Kolmogorov) complexity of the policy.
By developing these ideas we have reached several findings. The alternative of using the (Kolmogorov) complexity of the environment leads to very loose upper bounds, which are useless in practice.
We have also seen that analysing a problem by how random actions perform (the random-walk policy) is not able to unveil the structure of the problem. Using policies instead leads to a much better understanding. The distribution of policies in terms of complexity, either by the use of graphical tools or statistical indicators is informative. However, it is sometimes hard to understand and does not yield clear notions of difficulty and discriminating power, or how several environments can be normalised so their results become commensurable. The development of the environment response curves gives a much cleaner picture of the notions of difficulty and discriminating power and make the analysis directly operative for constructing tests and benchmarks, as Item Response Theory does.

While the notion of difficulty derived here has intuitive interpretations, it is certainly difficult to expect that this could lead to a consensus about what difficulty is and how it can be measured. However, there seems to be more agreement in that assessing the difficulty of problems and tasks is a very important issue when evaluating agents and systems (either artificial or natural) and also when designing intelligent systems, artificial intelligence artefacts, heuristics, algorithms and almost any engineered thing that requires to solve a (computational) problem.

The first application of the tools introduced in this paper is the one that motivated it: the construction of cognitive tests of different abilities (for artificial or natural systems). With the tools developed here we can now choose an environment class representing a cognitive ability, and select a subset of environments to integrate a test. This selection can now be done in a much more principled way. First, we can normalise the results of many environments, so the aggregation of results of several environments (tasks) become more meaningful. Second, we know their difficulty at any tolerance level, so we can choose those tasks that are more appropriate to the subject we want to evaluate. Third, by using their discriminating power, we can get maximum information from a single environment, and hence accelerate the evaluation. In fact, we can adapt techniques from (computerised) adaptive testing to do this. The generality of this approach makes it applicable to any kind of subject or system, since environments are a most general approach, which can be instantiated to almost any problem presentation in computer science (see \cite[ch. 3]{Legg08} for an environment hierarchy). For instance, by using appropriate environment classes we can evaluate learning abilities, planning abilities, visual abilities, deductive abilities, etc. Note that a meaningful choice of each environment is much more effective than choosing them arbitrarily, can replace or complement those evaluations based on repositories and it is certainly more effective than choosing all the environments or selecting them by a (universal) distribution, as in \cite{LeggHutter07}.

A second application is the assessment of ill-specified problems and interfaces. Throughout the paper we have been assuming that we knew the complete description of the environment or interactive task at hand. This is necessary for $\dot{K}$ (Eq. \ref{eq:dotK}), but not for the other indicators, plots and measures of difficulty based on (actions or) policies. This means that we can apply this assessment to multi-agent systems, where the behaviour or the other agents are unknown, possibly some of them being natural agents (animals or humans). This makes it possible to evaluate any environment, provided we can establish an appropriate interface, including real environments, games, robotic environments, etc. In fact, we can also check whether the use of different interfaces for the same task leads to different policy distributions, curves and indicators.

A third application is to help develop better heuristics, especially in reinforcement learning. This can be done in two ways. First, we can analyse the kinds of environments (according to their response curves) that a given heuristic can achieve. This can provide useful information about the exact problems where the heuristic is struggling and also whether there is a neat correlation with the indicators of difficulty. Second, we can customise the techniques used here by changing the probability $w$ of each policy according to some of the features of the heuristic and see how the results change. In any of both ways, the relation of our approach with Levin's search must be fundamental here.

A fourth application is a better understanding of the notion of difficulty and its relation to existing problem solvers, especially in the area of natural and artificial evolution. Since our approach is based on the evaluation of population of agents, we can specialise this population according to a given natural population (as psychometrics does) or to some theoretical population (in theoretical biology, artificial life or evolutionary computation). How the distributions and measures of difficulty evolve generation after generation (when facing the same or different problems) could provide valuable information about the evolutionary process itself. 

The contributions and applications mentioned above also carry some limitations. 
One limitation is the use of Kolmogorov complexity, which has two important problems: it is incomputable (and needs to be approximated) and it involves constants that depend on the reference machine, which may be very important (relatively) for small objects. Fortunately, there has been a significant improvement in the empirical approximation of Kolmogorov complexity \cite{delahaye2011numerical,zenilPhD}, which has also devised methods which are less sensitive to the reference machine.
A second limitation is the computational effort required to calculate the distributions. Although the plots and measures for the minimalistic setting used in this paper can be calculated in a matter of minutes with a personal computer, things may grow exponentially for more elaborated environment classes and policy languages. Nonetheless, it is important to state that these evaluations do not have to be done in real time. They can be done beforehand and the results can be stored for successive applications of each environment. This is what psychometricians do; they construct a library or catalogue of items, whose response curves have been previously estimated (not effortless), and they can use them repeatedly on demand. Since our approach is not specialised to any kind of heuristic, environments that have already been evaluated can be used for a broader range of situations, also making the initial investment more beneficial.

There are also many new things to explore, some of them related to the issues above. 
For instance, we could use better approximations to Kolmogorov complexity or some other variants (such as $Kt$, already used in \cite{HernandezOrallo-MinayaCollado98,HernandezOrallo00a}), leading to possibly more accurate views of difficulty, since time must be taken into account more explicitly from the beginning. The way in which the generation of policies is done could be rethought in order to resemble the way the coding theorem method works and get the estimation of Kolmogorov complexity (or any variant) directly, without further processing.

Other settings have to be explored, such as the use of other kinds of agents, described with different (and universal) policy languages, using other kinds of environments, etc. For instance, we are considering applying this to mazes (in comparison with \cite{Zatuchna-Bagnall09}), the matching pennies game (in relation to \cite{Turing100}) or generalisations of cellular automata, such as Random Boolean Networks \cite{gershenson2012complexity}, in a setting that would highly resemble the one introduced in \cite{HernandezOrallo10b}.

There is also an interesting work to be done in terms of indicators and graphical representations. For instance, we have used an empirical approach to the derivation of the environment response curve, but some other IRT models of difficulty and discriminating power (\cite{ferrando2009difficulty,ferrando2012discriminating}) could be adapted to our case.

Overall, while there has been an enormous effort and significant progress in understanding what complexity is and how it emerges, the question about what difficulty is and how it emerges is more elusive. We hope that this work is a step towards a better understanding and measurement of difficulty.

\section*{Acknowledgements}
The implementation of the elementary cellular automata used in the environments is based on the library `CellularAutomaton' by John Hughes for R \cite{Rproject}. I am grateful to Fernando Soler-Toscano for letting me know about their work \cite{zenil2012two} on the complexity of 2D objects generated by elementary cellular automata.

{
\bibliographystyle{plain}

\bibliography{biblio}

}

\end{document}